%% file: main.tex
\RequirePackage{fix-cm}
\documentclass[smallextended]{svjour3}       % onecolumn (second format)
\smartqed  % flush right qed marks, e.g. at end of proof
\usepackage{graphicx}
\usepackage{relsize}
\usepackage{graphicx}
\makeatletter
\providecommand{\caption@documentclass}{standard}
\makeatother
\usepackage{subcaption}
\usepackage{amsmath}
\usepackage{mathtools}
\usepackage{algorithm}
\usepackage[noend]{algpseudocode}
\usepackage{xcolor,colortbl}
\usepackage{amssymb}
\usepackage{moresize}
\usepackage{multirow}
\usepackage{mdframed}
\usepackage[hidelinks]{hyperref}
\usepackage{stmaryrd}
\usepackage{setspace}
\usepackage{enumitem}
\usepackage{booktabs}
\usepackage[round]{natbib}

\usepackage{adjustbox}
\usepackage{array}
\usepackage{makecell}

\usepackage{tikz}
    \usetikzlibrary{decorations.pathreplacing}
\tikzstyle{rect}=[rectangle,fill=white,draw]
\tikzstyle{simple}=[-,draw]
\tikzstyle{fleche}=[-latex,draw]
\tikzstyle{none}=[inner sep=0pt]

\usepackage{comment}

\usepackage{prettyref}
\newrefformat{def}{Definition~\ref{#1}}
\newrefformat{prop}{Proposition~\ref{#1}}
\newrefformat{th}{Theorem~\ref{#1}}
\newrefformat{lemma}{Lemma~\ref{#1}}
\newrefformat{algo}{Algorithm~\ref{#1}}
\newrefformat{fig}{Figure~\ref{#1}}
\newrefformat{tab}{Table~\ref{#1}}
\newrefformat{ex}{Example~\ref{#1}}
\newrefformat{sec}{Section~\ref{#1}}

\graphicspath{{./}}

\definecolor{gray50}{gray}{0.45}

\journalname{Machine Learning}

\begin{document}

\title{Can Transformers Learn to Verify During Backtracking Search?}
\titlerunning{Transformers Learning to Verify During Backtracking Search}

\author{Yin Jun Phua \and Tony Ribeiro \and Tuan Nguyen \and Katsumi Inoue}
\authorrunning{Y.J.~Phua et al.}

\institute{Yin Jun Phua (corresponding author) \at
  Institute of Science Tokyo, 2-12-1 Ookayama, Meguro-ku, Tokyo 152-8550, Japan \\
  \email{phua@comp.isct.ac.jp}
  \and
  Tony Ribeiro \at
  Centrale Nantes, CNRS, Laboratoire des Sciences du Num\'erique de Nantes,\\
  LS2N, UMR 6004, F-44000 Nantes, France \\
  National Institute of Informatics, 2-1-2 Hitotsubashi, Chiyoda-ku, Tokyo 101-8430, Japan \\
  Steelous Protocol, 8-20-32, Ginza, Chuo-ku, Tokyo 104-0061, Japan
  \and
  Tuan Nguyen \at
  Hanoi University of Science and Technology, No. 1 Dai Co Viet, Hai Ba Trung, Ha Noi, Vietnam
  \and
  Katsumi Inoue \at
  National Institute of Informatics, 2-1-2 Hitotsubashi, Chiyoda-ku, Tokyo 101-8430, Japan
}

\date{Received: date / Accepted: date}
% The correct dates will be entered by the editor

\maketitle

\begin{abstract}
Backtracking search underlies classical constraint solvers, planners, and theorem provers.
Recent transformer-based reasoning systems explore search trees over their own intermediate steps.
A common training recipe fits an autoregressive next-token loss on offline solver traces.
The model's input at each step is a \emph{cumulative trace} of all prior decisions.
The optimal continue-or-backtrack predictor depends only on the current search state, since two trajectories reaching the same state admit the same viable continuations.
We show that decoder-only transformers trained on cumulative traces fail this requirement in two ways: the trace can scatter state features across many positions (\emph{scattered retrieval}), and the predictor can condition on the trajectory rather than the state (\emph{history entanglement}).
We address scattered retrieval with \emph{localization}, a trace-level fix that rewrites each decision block to expose state features locally.
We address history entanglement with \emph{Selective State Attention} (SSA), a fixed attention mask that enforces state-based decisions structurally without modifying training data, objective, or parameters.
We focus on reactive verification, after propagation has exposed a contradiction.
We test SSA on 3-SAT, graph coloring, Blocks World, and backtracking parsing.
On same-state pairs that differ only in prior history, SSA emits identical decisions while a cumulative-trained causal baseline does not.
Our contribution is a diagnostic of transformer behavior on serialized trajectory data, paired with a structural fix.
Pretrained language models that search over their own reasoning steps may face the same failure.
Our analysis opens up inference-time context clearing as a candidate way to apply the same isolation without retraining.

\keywords{Transformers \and Backtracking search \and Attention masking \and Selective State Attention \and History entanglement \and State isolation}
\end{abstract}

%///////////////
% CONTENT begin
%///////////////

\input{ext_introduction}

\input{ext_preliminary}

\input{ext_theory}

\input{ext_ssa_experiments}

\input{ext_related_work}

\input{ext_analysis}

\input{ext_conclusion}

%///////////////
% CONTENT end
%///////////////

\begin{acknowledgements}
    The authors thank Kotaro Okazaki (Steelous Protocol) for providing funding and sponsorship that enabled this international collaborative project, in particular supporting the travel and stay of co-authors in Japan during the development of this work.
    This study was carried out using the TSUBAME4.0 supercomputer at Institute of Science Tokyo.
    Large-language-model assistance (Claude from Anthropic, GPT from OpenAI, and Gemini from Google) was used during manuscript preparation to improve the English of some passages; all technical content, analysis, and conclusions are the authors' own.
    Funding information is provided under Statements and Declarations.
\end{acknowledgements}

\section*{Statements and Declarations}

\paragraph{Funding.} This research has been funded by the European Union. Views and opinions expressed are however those of the authors only and do not necessarily reflect those of the European Union or ERCEA. Neither the European Union nor the granting authority can be held responsible for them. This work has also been supported by JSPS KAKENHI Grant Number JP25K03190 and JST CREST Grant Number JPMJCR22D3, Japan, and by ROIS NII Open Collaborative Research 2024-24S1203.

\paragraph{Competing Interests.} The authors have no competing interests to declare relevant to this article.

\paragraph{Ethics Approval.} Not applicable. This research did not involve human participants or animals.

\paragraph{Consent to Participate.} Not applicable.

\paragraph{Consent for Publication.} Not applicable.

\paragraph{Data Availability.} No third-party datasets are used in this study. All experimental traces are generated deterministically from the released code given the instance-generator parameters and random seeds documented in Appendix~\ref{sec:appendix-architecture}.

\paragraph{Code Availability.} The code supporting this study, including the SSA and baseline architectures, trace generation scripts, training and evaluation scripts, and table and figure generation scripts, is available at \url{https://github.com/phuayj/ssa-transformer} under an MIT license.

\paragraph{Author Contributions.} All authors (Y.J.P., T.R., T.N., K.I.) contributed to the study conception and design. Material preparation, implementation, and experimental analysis were performed by all authors. The first draft of the manuscript was written collectively, and all authors commented on previous versions. All authors read and approved the final manuscript.

% BibTeX users please use one of
\bibliographystyle{spbasic}      % basic style, author-year citations
\bibliography{Bibliography}   % name your BibTeX data base

\input{ext_appendix}

\end{document}

%% file: ext_introduction.tex
\section{Introduction} \label{sec:introduction}

Backtracking is a basic algorithmic primitive: a search extends a partial solution step by step, and when the current path fails to complete it retracts a decision and explores an alternative.
Classical constraint solvers, planners, and theorem provers implement this pattern in symbolic infrastructure that performs constraint propagation~\citep{davis1962machine}, viability checking, and conflict-driven backjumping~\citep{prosser1993hybrid,marques-silva2021conflict}.
Recent reasoning systems built on the transformer architecture~\citep{vaswani2017attention} implement the same pattern through the language model's autoregressive output, with the model proposing the next step and revising or backtracking when its trajectory dead-ends~\citep{yao2023tree,yang2025selfbacktracking,kim2025astro}.
Viewed through \citet{kowalski1979algorithm}'s slogan \emph{Algorithm = Logic + Control}, both regimes separate a logic side (maintaining constraints and exposing contradictions in partial states) from a control side (choosing which branch to extend and when to retract).
The decomposition was developed for classical symbolic programs but still frames transformer-based reasoning: a symbolic environment can supply the logic, while the transformer is asked to learn part of the control.

A recurring challenge across learned search is the evaluation of non-terminal states.
Process reward models score local step quality~\citep{lightman2023lets}; value models estimate future solve probability from a partial state~\citep{snell2024scaling}; dead-end detectors in classical planning predict whether a state has any feasible continuation~\citep{steinmetz2016towards}; outcome-only methods evaluate completed candidates instead~\citep{cobbe2021training,wang2023selfconsistency}; and neural-symbolic hybrids such as AlphaProof, AlphaGeometry, and FunSearch pair a neural proposer with a verifying solver that checks each candidate exactly~\citep{hubert2025alphaproof,trinh2024solving,romeraparedes2023mathematical}.
An alternative threads the verification decision back through the same neural model that proposes branches: the transformer is trained to internalize a continue-or-backtrack signal under the same next-token loss that drives its branching policy~\citep{velickovic2022clrs,nye2022show,pan2025backtracking}, with the model's input at each step accumulating every prior decision into a growing record of the search trajectory.
We study what predictor this last regime selects.

Whether transformers can learn algorithmic procedures from supervision at all has itself been an active question.
Transformers learn simple function classes in-context~\citep{garg2022what}, execute multi-step algorithms from chain-of-thought traces~\citep{wei2022chain,zhou2022teaching}, and acquire symbolic mappings~\citep{charton2022linear}.
Standard concerns are insufficient computation depth---addressed by length generalization~\citep{anil2022exploring,jelassi2023length} and looped architectures~\citep{lee2024teaching,giannou2023looped,yang2024looped}---and exposure-distribution shift in autoregressive imitation learning~\citep{ross2011reduction}.
Search adds a separate concern: the predictor selected by next-token loss on cumulative traces can condition on the path used to reach the current state rather than the state itself, even when the optimal target is state-only.

We formalize this concern as a structural requirement on the learned verifier.
Two trajectories that reach the same propagated state admit the same set of viable continuations, so the optimal continue-or-backtrack decision is a function of the state alone---a requirement we call \emph{state-equivalence}.
The target is state-local in a strong sense: a small multilayer perceptron (MLP) on per-variable state features solves the 3-SAT setting we study at near-ceiling solve rate.
We find that decoder-only transformers trained on cumulative solver traces fail to recover this state-local rule: the next-token training objective admits any predictor that fits the joint distribution of states and histories, including ones that exploit residual trajectory features that fail to generalize across same-state pairs.
We name the failure \emph{history entanglement}; it resembles compounding error in imitation learning but is partly representational, the model internalizing a path-dependent surrogate rather than a state-based rule.

We propose \emph{Selective State Attention} (SSA), a fixed attention mask~\citep{weston2023system,ho2024block} that blocks each decision step from attending to prior decision steps while preserving access to the problem prefix (Figure~\ref{fig:overview}).
SSA enforces history invariance structurally: with block-relative positional encoding, the logits at positions inside the current decision block depend only on the problem prefix and the current decision block, without parameter or objective changes.

\begin{figure}[t]
\centering
\includegraphics[width=\linewidth]{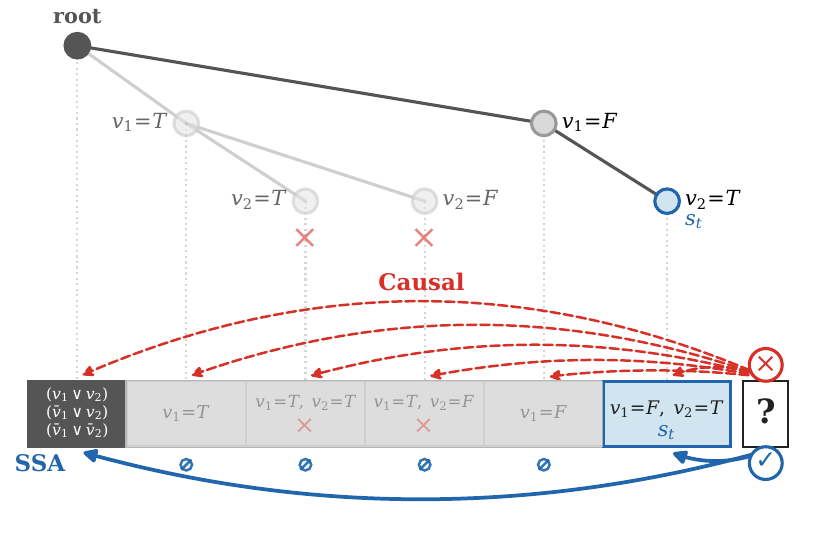}
\caption{SSA versus causal attention on a 3-SAT instance with decision blocks that record the full partial assignment. Red dashed arrows (causal) and blue solid arrows (SSA) show the attention scope at the current step.}
\label{fig:overview}
\end{figure}

\paragraph{Setting and scope.}
We delegate the logical component to symbolic infrastructure and ask whether a transformer can learn the control component.
The reactive form, where propagation has already exposed an empty domain or violated clause, is our focus; the proactive form (predicting locally consistent dead ends before any conflict surfaces) requires lookahead and is strictly harder, lying outside our scope.
The supervised learning problem runs over offline solver records; we call the growing input record at each step a \emph{cumulative trace}, the default training format for any autoregressive model whose context accumulates a record of its own past actions.

\paragraph{Contributions.}
\begin{enumerate}[nosep,leftmargin=*]
    \item \emph{Diagnosis.} We identify \emph{history entanglement} as a same-state invariance failure of causal transformers trained on cumulative traces, distinct from distribution shift in imitation learning (\S\ref{sec:history-transplant}).
    \item \emph{Method.} We propose Selective State Attention, a fixed attention mask with an exact prior-history invariance property under block-relative positions (Proposition~\ref{prop:ssa-invariance}), implemented without parameter or objective changes (\S\ref{sec:ssa-architecture}).
    \item \emph{Evidence.} Across state-rebuilt transfer tests, direct invariance diagnostics, and a sweep of alternative interventions on a verifier-only benchmark, mask-level isolation is the only intervention we test that restores reactive state-based verification from cumulative traces; the result holds in four structured search domains (\S\ref{sec:ssa-experiments}).
\end{enumerate}

\paragraph{Relation to the conference version.}
This paper is a substantially extended version of an earlier conference paper~\citep{ribeiro2025transformers}.

%% file: ext_preliminary.tex
\section{Background and Experimental Setup}
\label{sec:preliminary}

This section gives a formal definition of the learning problem (\S\ref{sec:problem-setting}), introduces tree traversal as a simpler instance that motivates the localization fix (\S\ref{sec:tree-traversal-recap}), and fixes the shared experimental setup (\S\ref{sec:slot-memory-method}).

\subsection{Problem Setting}
\label{sec:problem-setting}

\begin{definition}[Search step, action factorization, and reactive verification]
\label{def:problem-setting}
Let $X$ be a backtracking-solvable problem instance, e.g., a constraint satisfaction problem (CSP)~\citep{rossi2006handbook} such as 3-SAT or graph coloring, a planning problem such as Blocks World, or parsing expression grammar (PEG;~\citealp{ford2004parsing}) parsing. The search algorithm is a depth-first solver with fixed variable and value heuristics, running unit propagation between assignments and conflict-driven backjumping when a clause is falsified. Each step $t$ of the solver on $X$ produces
\begin{itemize}[nosep]
    \item a \emph{search state} $S_t$ comprising the partial assignment, propagated variable domains, and tried-alternative set;
    \item an \emph{action} $a_t \in \{\textsc{branch}(x, v),\, \textsc{backtrack}\}$ that factors into a \emph{policy} component (the choice of variable $x$ and value $v$ inside $\textsc{branch}(\cdot,\cdot)$) and a \emph{verification} component represented by the binary indicator
    \begin{equation}
    \label{eq:Y-definition}
        Y_t \;:=\; \mathbf{1}[a_t = \textsc{backtrack}] \in \{0, 1\};
    \end{equation}
    \item a token-serialized \emph{decision block} $B_t$ encoding $(S_t, a_t)$ (concrete examples in Appendix~\ref{sec:appendix-tokenization}).
\end{itemize}
$Y_t$ is the binary target of the verification analysis throughout the paper. Let $P$ token-serialize $X$. The \emph{cumulative trace} at step $t$ is the concatenation $(P, B_1, \ldots, B_t)$, and we write $H_{<t} = (B_1, \ldots, B_{t-1})$ for the prior history. The supervised learning task predicts $a_t$ from the cumulative trace at step $t-1$ extended by the state portion of $B_t$, under standard causal (autoregressive) attention. The Bayes-optimal predictor of $Y_t$ depends on $S_t$ alone, since two trajectories that reach the same state admit the same set of viable continuations.

The \emph{reactive verification target} is the state-local rule
\begin{equation}
\label{eq:reactive-rule}
Y_t^{\,*}(S_t) \;=\;
\begin{cases}
1 & \text{if propagation exposes a contradiction in } S_t,\\
0 & \text{otherwise,}
\end{cases}
\end{equation}
where an exposed contradiction is an empty domain or violated clause that constraint propagation produces. A state $S$ is \emph{viable} if it has at least one satisfying descendant in the search tree and \emph{dead} otherwise; the optimal action backtracks on dead states and continues on viable states. The search infrastructure handles states with no remaining untried alternative, so the model never sees them. The \emph{proactive} form additionally sets $Y = 1$ on locally consistent dead ends, where no contradiction has surfaced but no satisfying descendant exists; it requires lookahead and is strictly harder; the main contribution targets the reactive form. We drop the time subscript when context allows, writing $S$ for the search state, $H$ for the prior history, and $Y$ for the verification indicator.
\end{definition}

\begin{example}[3-SAT solving with backtracking]
\label{ex:sat-trace}
Consider the following 3-SAT instance on three variables $v_1, v_2, v_3$:
\begin{equation*}
C_1 = (v_1 \lor v_2 \lor v_3),\quad
C_2 = (\bar v_1 \lor \bar v_2),\quad
C_3 = (\bar v_1 \lor v_2 \lor \bar v_3),\quad
C_4 = (v_1 \lor \bar v_2 \lor \bar v_3).
\end{equation*}
A depth-first search produces the following trace, with each step labelled by its decision level $\ell$ and propagation result; an overlined literal $\bar v$ denotes negation.
\begin{center}
\footnotesize
\setlength{\tabcolsep}{4pt}
\captionsetup{type=table}
\caption{Backtracking-search trace for the 3-SAT instance of Example~\ref{ex:sat-trace}.}
\label{tab:sat-trace}
\begin{tabular}{@{}r l l l c@{}}
\toprule
$t$ & decision ($\ell$) & propagation & state $S_t$ & $Y_t$ \\
\midrule
$1$ & branch $v_1{=}T$ ($\ell{=}1$) & $C_2$ unit: $v_2{=}F$ & $\{v_1{=}T,\,v_2{=}F\}$ & $0$ \\
$2$ & branch $v_3{=}T$ ($\ell{=}2$) & $C_3$ falsified & conflict on $C_3$ & $1$ \\
$3$ & retry $v_3{=}F$ ($\ell{=}2$) & all clauses satisfied & $\{v_1{=}T,v_2{=}F,v_3{=}F\}$ & $0$ \\
\bottomrule
\end{tabular}
\end{center}
At step $1$ the policy branches $v_1{=}T$ and unit propagation on $C_2$ forces $v_2{=}F$. At step $2$ the policy branches $v_3{=}T$, which falsifies $C_3$; the verification target is $Y_2 = 1$ because $S_2$ contains an exposed contradiction. The search infrastructure then retries $v_3{=}F$ at step $3$, satisfying every clause. The transformer is responsible for emitting branch and backtrack actions; unit propagation, conflict exposure, and the level update remain external.

The trace above illustrates \emph{reactive} backtracking: the search returns to the deepest unexplored branch one level at a time. Conflict-driven clause learning (CDCL) augments this with conflict analysis: when the falsified clause traces the conflict back to a decision at an earlier level, the same infrastructure can backjump past intervening levels and resume search from the responsible decision~\citep{marques1996grasp,marques-silva2021conflict}. For example, a conflict that depends only on the level-$1$ assignment of $v_1$ would trigger a backjump from level $\ell{=}2$ all the way to level $\ell{=}1$, skipping any intermediate level-$2$ alternatives. The verification target $Y_t$ is unchanged under either policy: $Y_t = 1$ whenever $S_t$ contains an exposed contradiction, regardless of whether the search infrastructure resolves the conflict by retrying at the same level or by backjumping to an earlier one. The neural model therefore emits the same branch and backtrack tokens in both cases; only the infrastructure-side execution differs. Appendix~\ref{sec:appendix-tokenization} shows how a complete trace of this form is serialized into decision blocks $B_t$.
\end{example}

At deployment the model autoregressively emits action tokens, the infrastructure runs propagation between actions, and we count an instance as \emph{solved} once the solver reaches a satisfying assignment within a fixed token budget. Our benchmarks serve as diagnostics rather than solver-competitive comparisons: we ask whether a neural model matches the same search infrastructure under deployment shift, not whether it solves intrinsically hard instances.

\subsection{Tree Traversal and Localization}
\label{sec:tree-traversal-recap}

We begin with tree traversal, a simpler instance of the learning problem above with no constraint propagation and an exhaustive policy.
A decoder-only transformer reads a tree serialized as a parenthesized node list and autoregressively emits the visit order; the verification target reduces to deciding when to return to a parent node, which under depth-first search requires the transformer to recognize that it has visited all $k$ children.
We observe an asymmetry between traversal orders in this setting.
Breadth-first search (BFS) succeeds reliably under standard causal attention because the next visit follows from the most recent token in a queue serialized into the trace.
Depth-first search (DFS) fails because the natural encoding distributes the visited-children record across the trajectory rather than localizing it in the current decision context (Figure~\ref{fig:trace-encodings}).

\begin{figure}[t]
\centering
\includegraphics[width=\linewidth]{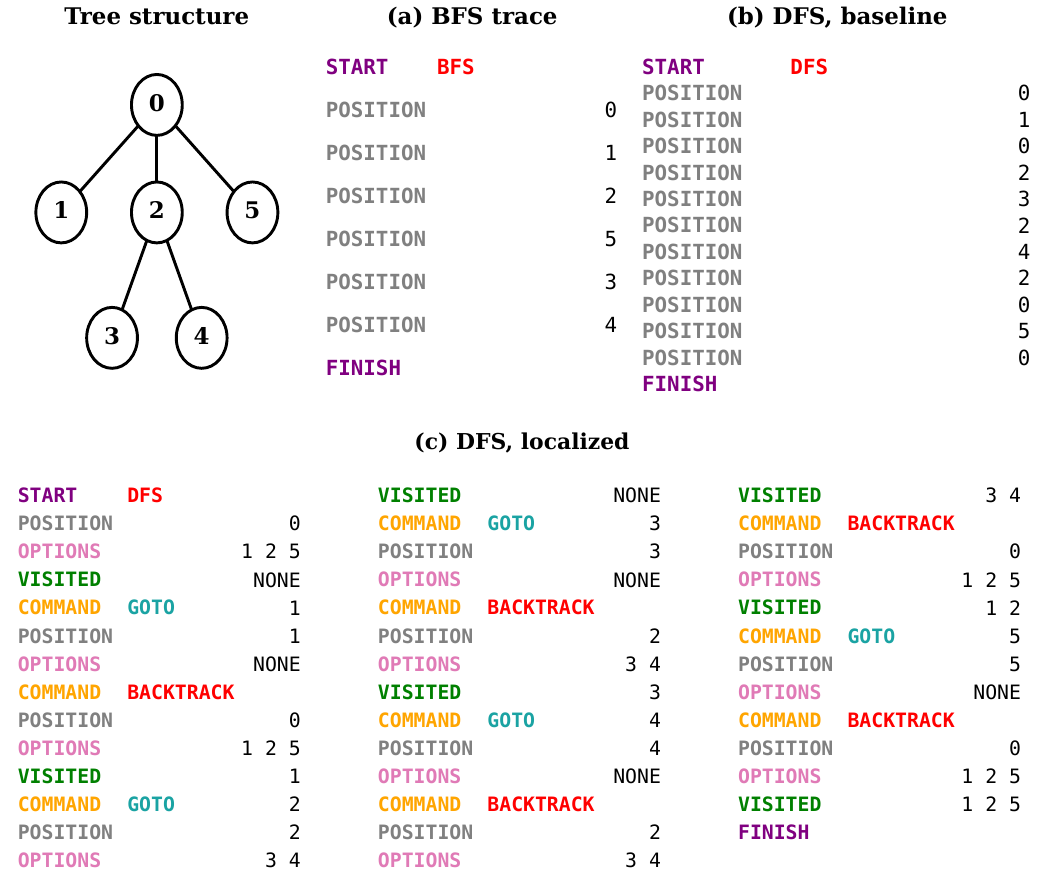}
\caption{Three trace encodings for tree traversal: \textbf{(a)} BFS baseline, \textbf{(b)} DFS under the natural encoding, and \textbf{(c)} \emph{localized} DFS in which the current context lists the already-visited children explicitly.}
\label{fig:trace-encodings}
\end{figure}

\paragraph{Scattered retrieval and localization.}
Verifying that all $k$ children have been visited under the natural DFS encoding requires the transformer to recombine $k$ pieces of information distributed across many trajectory positions, and per-token attention retrieval has bounded accuracy that compounds in the feature count.
We call this failure mode \emph{scattered retrieval}: the relevant state features are present in the trace but inaccessible to the local decision block.
The fix rewrites the trace so the current context lists the already-visited children explicitly (the localized DFS encoding of Figure~\ref{fig:trace-encodings}c), restoring DFS to the same level of learnability as BFS.
We call this fix \emph{localization}, and we presuppose a localized trace format throughout the remainder of the paper.

\paragraph{Quantitative signature of scattered retrieval.}
A controlled branching-factor sweep on star trees, where a single root has $k$ leaf children and the verification scope is exactly $k$, gives a quantitative form of the failure (Figure~\ref{fig:pk-sweep}).
All-correct verification accuracy decays as $\hat{r}^k$, where $\hat{r}$ is a per-token retrieval accuracy estimated from the $k{=}4$ point.
The exponential decay matches the multiplicative composition of independent retrieval events the analysis predicts; deviations at $k{=}128$ reflect correlated errors consistent with a latent-quality refinement of the i.i.d.\ model.
Detailed BFS-versus-DFS results, the three encoding variants (simple/mid/verbose), and slot-memory recovery experiments appear in Appendix~\ref{sec:appendix-tree-traversal}.

\begin{figure}[t]
    \centering
    \includegraphics[width=0.65\textwidth]{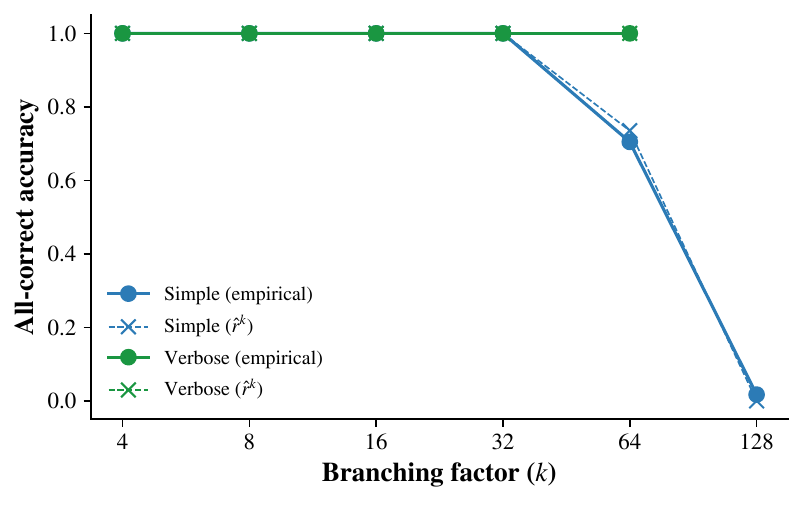}
    \caption{All-correct verification accuracy on star-tree traversal as a function of branching factor $k$. Dashed lines show the $\hat{r}^k$ prediction with per-token retrieval accuracy $\hat{r}$ estimated from the $k{=}4$ point.}
    \label{fig:pk-sweep}
\end{figure}

\paragraph{From tree traversal to backtracking search.}
Tree traversal is the special case of the search-step setting in Definition~\ref{def:problem-setting} with an exhaustive policy and no constraint propagation, and the analysis above isolates one failure mode in this special case: scattered retrieval, an \emph{accessibility} failure that localization repairs by keeping the relevant features local.
Backtracking search adds two ingredients absent from tree traversal: a non-trivial policy that selects which branch to expand next, and constraint propagation that can expose conflicts ahead of exhaustive enumeration.
Both ingredients enrich the trajectory with structure that correlates with the action under the training distribution but is not part of the current state.
The remainder of the paper studies a second, independent failure mode this enrichment exposes: even with localized state, cumulative-trace transformers exploit history features entangled with the action label, learning history-based surrogates rather than state-based verifiers.
This is the \emph{history entanglement} failure introduced in \S\ref{sec:introduction}, and SSA addresses it.
The two fixes act on different levels (Table~\ref{tab:two-failure-modes}): localization changes \emph{where the trace stores state}, isolation changes \emph{what the verifier can read} from it.
A trace-format $\times$ attention-mask factorial confirms they are independently necessary, and the aliasing--entanglement decomposition of Appendix~\ref{sec:appendix-quantitative-theory} captures both within a single formal framework.

\begin{table}[t]
\centering
\caption{The two failure modes this paper addresses, with their respective settings, types, decomposition terms, and fixes.}
\label{tab:two-failure-modes}
\footnotesize
\setlength{\tabcolsep}{4pt}
\begin{tabular}{@{}>{\raggedright\arraybackslash}p{0.18\linewidth} >{\raggedright\arraybackslash}p{0.36\linewidth} >{\raggedright\arraybackslash}p{0.38\linewidth}@{}}
\toprule
& \textbf{Scattered retrieval} & \textbf{History entanglement} \\
\midrule
Setting & Tree traversal (no propagation, exhaustive policy) & Backtracking search (constraint propagation, non-trivial policy) \\
Failure type & Accessibility & Invariance \\
Decomposition term & Aliasing & Entanglement \\
Fix level & Trace design (where the trace stores state) & Attention mask (what the verifier can read) \\
Fix name & Localization & SSA \\
\bottomrule
\end{tabular}
\end{table}

\subsection{Shared Experimental Setup}
\label{sec:slot-memory-method}

\paragraph{Architecture and training.}
We train a decoder-only transformer with $6$ layers, $256$ hidden dimensions, $8$ attention heads, and approximately $6$M parameters on graph coloring (GC; $n{=}30$, $k{=}4$ colors, edge probability $p{=}0.35$) and 3-SAT ($n{=}50$, clause-to-variable ratio $\alpha{=}4.0$). We use \emph{planted} SAT test instances: each test instance is a random formula constructed around a sampled satisfying assignment, so every instance has at least one solution. Each sequence begins with $32$ learnable slot registers, persistent tokens that later positions can read as auxiliary context; the slot ablation (Appendix~\ref{sec:appendix-slot-mask}) shows the main effect we attribute to SSA does not depend on them. Training uses $5{,}000$ solver-generated traces per condition and AdamW~\citep{loshchilov2019decoupled} with $\mathrm{lr}=3\times 10^{-4}$ unless otherwise noted. The same backbone serves all domains; only the tokenizer and vocabulary vary. Blocks World serves as a supporting domain.

\paragraph{Inference protocols and notation.}
The search infrastructure writes a \emph{canonical state block} that encodes $S_t$ as tokens. \emph{Cumulative inference} runs the model autoregressively and appends each new decision block to the growing sequence. \emph{State-rebuilt inference} discards prior decision blocks and feeds only the canonical state block at each step, immediately after the problem prefix. Default autonomous-evaluation token budgets are $4096$ on SAT and Blocks World and $2048$ on graph coloring; the relevant tables flag any deviations. We write $\alpha_v$ for the conditional false-prune rate on viable queried states and reserve unsubscripted $\alpha$ for the SAT clause-to-variable ratio. A consolidated symbol glossary appears in Appendix~\ref{sec:appendix-notation}.

\paragraph{Division of labor.}
The model produces $a_t$ (the policy and verification components jointly) while the infrastructure produces the propagated domains and the tried-alternative set inside $S_t$ (Table~\ref{tab:responsibility}). Although the transformer emits both components of $a_t$ under a single next-token loss, we analyze them as separate prediction problems: reactive verification is a state-only viability decision with a path-invariant Bayes target (Definition~\ref{def:problem-setting}), whereas branching policy is heuristic-dependent and admits multiple satisfying orderings. The two also carry asymmetric failure costs: a false backtrack on a viable state irreversibly prunes the remaining solution path, while a suboptimal branching choice typically leaves alternative branches available (Proposition~\ref{prop:fp-compounding}, Appendix~\ref{sec:precision-barrier}). Symbolic algorithms perform constraint propagation exactly, so isolating the verification signal lets us measure whether a transformer can internalize the state-based continue-or-backtrack decision under the same conditions classical solvers operate in; removing the infrastructure drops every model to negligible solve rate (Table~\ref{tab:capability-levels}, raw-autonomous row). The setting therefore targets the \emph{reactive} level of the capability ladder; ``verification'' throughout this paper refers to the reactive level unless otherwise stated.

\begin{table}[t]
\centering
\caption{Division of labor between model and search infrastructure.}
\label{tab:responsibility}
\footnotesize
\setlength{\tabcolsep}{4pt}
\begin{tabular}{@{}>{\raggedright\arraybackslash}p{0.36\linewidth} >{\raggedright\arraybackslash}p{0.12\linewidth} >{\raggedright\arraybackslash}p{0.46\linewidth}@{}}
\toprule
\textbf{Component} & \textbf{Model} & \textbf{Infrastructure} \\
\midrule
Policy: variable selection & yes & --- \\
Policy: value selection & yes & --- \\
State-block token serialization & --- & yes \\
Unit propagation & --- & yes \\
Conflict exposure after propagation & --- & yes \\
Reactive verification (post-conflict) & yes & --- \\
Backjump mechanics & --- & yes \\
Proactive verification (pre-conflict) & yes & with oracle or CDCL augmentation \\
\bottomrule
\end{tabular}
\end{table}

\begin{table}[t]
\centering
\caption{Capability ladder for dead-end detection; the rightmost column flags each level's role in this paper.}
\label{tab:capability-levels}
\footnotesize
\setlength{\tabcolsep}{4pt}
\begin{tabular}{@{}>{\raggedright\arraybackslash}p{0.14\linewidth} >{\raggedright\arraybackslash}p{0.29\linewidth} >{\raggedright\arraybackslash}p{0.29\linewidth} >{\raggedright\arraybackslash}p{0.21\linewidth}@{}}
\toprule
\textbf{Level} & \textbf{Infrastructure provides} & \textbf{Model learns} & \textbf{Status in this paper} \\
\midrule
Raw autonomous & Raw problem (conjunctive normal form (CNF) or graph) & Propagation $+$ policy $+$ verification & Negligible solve, lower-bound condition \\
\midrule
Reactive & Propagated state $+$ exposed conflict & Policy (variable, value) $+$ reactive verification (continue or backtrack) & \textbf{Main positive result} \\
\midrule
Proactive & Locally consistent state, no conflict & Policy $+$ proactive verification (existential dead-end prediction) & Open problem; failure modes analyzed \\
\bottomrule
\end{tabular}
\end{table}

%% file: ext_theory.tex
\section{Selective State Attention}
\label{sec:ssa}
\label{sec:verification-bottleneck}

With localization established (\S\ref{sec:tree-traversal-recap}), this section addresses the second failure mode---history entanglement---with a fixed attention mask.

At every non-leaf state $S_t$ the model must emit a backtrack token if $S_t$ is a dead end.
The truth value depends on $S_t$ alone (Definition~\ref{def:problem-setting}), but a cumulative-trace-trained causal transformer conditions its prediction on the entire trajectory $(B_1, \ldots, B_t)$ because the next-token objective admits any predictor that fits the joint distribution of states and histories.
We address this with a fixed attention mask that delivers structural invariance to prior trajectory content; Appendix~\ref{sec:appendix-quantitative-theory} formalizes the underlying state-equivalence framework as an aliasing--entanglement decomposition with an information-theoretic refinement, and Appendix~\ref{sec:precision-barrier} formalizes the false-prune threshold $(1-\alpha_v)^M$ that governs precision-sensitive verification.

\begin{proposition}[SSA Conditional Invariance]
\label{prop:ssa-invariance}
Let $P$ denote the problem prefix and $B_t$ the current decision block.
Under selective SSA with block-relative positional encoding (positional indices reset at each block boundary, so the position of a token within $B_t$ does not depend on $t$; Appendix~\ref{sec:appendix-ssa-variants}), the logits at positions inside $B_t$ depend only on $(P, B_t)$.
For any two histories $H_{<t}, H'_{<t}$ they are exactly equal.
\end{proposition}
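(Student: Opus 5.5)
The plan is an induction over layers on the computational graph of the masked transformer. I first fix the precise meaning of ``selective SSA'': for a query position $i$ inside $B_t$, the attention mask permits attention to position $j$ only if $j$ lies in the problem prefix $P$ or in $B_t$ itself with $j \le i$ (causal within the block). I include the $32$ slot registers in $P$, since they come from learned parameters and not from the history. For query positions inside $P$, the mask is ordinary causal attention restricted to $P$. The claim to prove is: for every layer $\ell$ and every position $i \in P \cup B_t$, the hidden state $h^{(\ell)}_i$ is a deterministic function of the token contents of $(P, B_t)$ and the parameters alone. Once this holds, the logits at positions inside $B_t$, which are a fixed readout of $h^{(L)}_i$, inherit the same dependence. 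The equality for two histories $H_{<t}, H'_{<t}$ is then immediate.

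\emph{Base case.} The input embedding at position $i$ is the token embedding plus the positional embedding. For $i \in P$, the position index is the absolute index within $P$, which does not depend on the history. For $i \in B_t$, block-relative encoding makes the index equal to the offset of $i$ inside $B_t$, which is also independent of $t$ and of $|H_{<t}|$. So $h^{(0)}_i$ depends only on $(P, B_t)$.

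\emph{Inductive step.} Suppose the claim holds at layer $\ell$. The attention output at a query $i \in P \cup B_t$ is $\sum_j \mathrm{softmax}_j(q_i^\top k_j/\sqrt{d} + M_{ij})\, v_j$. Masked entries have $M_{ij} = -\infty$, so they contribute exactly zero weight, and they also drop out of the normalizer. The sum therefore runs only over unmasked $j \in P \cup B_t$, and by hypothesis every $q_i, k_j, v_j$ in it depends only on $(P,B_t)$. The residual connections, layer norms, and MLPs act position-wise, so they preserve this dependence. This yields layer $\ell+1$.

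If the paper's relative-position variant (e.g.\ RoPE) is used instead of additive embeddings, the same argument goes through. The only requirement is that for every unmasked pair $(i,j)$ the relative offset is history-independent. For $i,j$ both in $B_t$, or both in $P$, this is clear. The step I expect to need the most care is the cross pair $i\in B_t$, $j\in P$. Here I would verify from Appendix~\ref{sec:appendix-ssa-variants} that the block-relative scheme assigns $B_t$ positions starting at a fixed offset after $|P|$, and not after $|P|+|H_{<t}|$. If it does not, the theorem's hypothesis has to be read as requiring this.

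The other subtle point is exactness under $-\infty$ masking: numerically masked weights must be exactly zero, not merely small. The argument therefore treats the mask as removing terms from the softmax, and it notes that this holds in exact arithmetic, which is the level at which the proposition is stated.
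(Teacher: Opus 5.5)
Your proposal is correct and follows the same route as the paper: an induction over layers showing that every hidden state in the prefix, the slot registers, and $B_t$ depends only on $(P,B_t)$. It rests on the same two facts, namely that the mask confines queries in $B_t$ to $P$, $\mathcal{R}$, and $B_t$ itself while slots read only $P$, and that block-relative positions close the positional channel; the paper's appendix places $B_t$ at positions $|P|,\ldots,|P|+|B_t|-1$, which settles your cross-pair concern in the way you anticipated. One small correction that does not affect the argument: under the paper's mask, slots attend to all of $P$ and prefix tokens attend to the slots, so attention inside $P\cup\mathcal{R}$ is not purely causal. What you actually need, and correctly use, is only that no query in $P\cup\mathcal{R}$ reads any decision block.
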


The mask zeroes attention from $B_t$ to any prior $B_j$, the model writes to slots only while processing $P$, and block-relative positions remove the only remaining channel through which $H_{<t}$ could affect the residual stream at $B_t$.
A padding-control counterfactual (Appendix~\ref{sec:appendix-padding-control}) verifies the proposition empirically: inserting unrelated decision blocks between the prefix and the current state changes SSA's argmax decisions zero times.
SSA therefore eliminates predictor-side history dependence by construction; any cumulative-vs-state-rebuilt gap reflects positional extrapolation rather than history leakage (Appendix~\ref{sec:appendix-padding-control}).
SSA presupposes the localized trace format that addresses the orthogonal aliasing failure (\S\ref{sec:tree-traversal-recap}, Table~\ref{tab:two-failure-modes}); the two interventions are independently necessary for state-rebuilt transfer.

\subsection{Architecture}
\label{sec:ssa-architecture}

SSA satisfies the state-isolation requirement through the attention mask alone, partitioning the input into three regions: a \textbf{problem prefix} (block~0) holding the graph adjacency, constraint definitions, and initial state; \textbf{decision blocks} $B_1, \ldots, B_t$ each containing one step's partial assignment, propagated domains, tried-alternative set, and chosen action ($B_t$ is the current block, $B_1,\ldots,B_{t-1}$ the history; Appendix~\ref{sec:appendix-tokenization} shows the concrete tokenization for both domains); and \textbf{slot registers} $\mathcal{R}$, learnable tokens that provide problem-conditioned auxiliary context (part of the shared backbone, \S\ref{sec:preliminary}; ablated in Appendix~\ref{sec:appendix-slot-mask}).

\paragraph{Attention rules.}
Let $P$ denote the problem-prefix positions, $\mathcal{R}$ the slot-register positions, and $B_i$ the positions in decision block $i$.
The SSA mask (Figure~\ref{fig:ssa-mask}) imposes:
\begin{itemize}[nosep]
    \item Tokens in $P$ attend causally within $P$ and to $\mathcal{R}$.
    \item Tokens in $\mathcal{R}$ attend to all of $P$ and to other tokens in $\mathcal{R}$.
    \item Tokens in $B_i$ attend to all of $P$, all of $\mathcal{R}$, and causally within $B_i$; they do \emph{not} attend to $B_{j}$ for any $j \neq i$.
\end{itemize}

Prefix and slot access are useful for the cumulative protocol rather than essential to the mask topology; crossed ablations in the appendix vary mask topology and positional scheme independently (Appendix~\ref{sec:appendix-ssa-variants}).

\begin{figure}[t]
\centering
\begin{tikzpicture}[scale=0.52, every node/.style={font=\footnotesize}]
% Grid dimensions
\def\n{13}
% Background
\fill[gray!10] (0,0) rectangle (\n,\n);

% Region labels (top)
\draw[decorate, decoration={brace, amplitude=4pt, mirror}] (0,\n+0.2) -- (2,\n+0.2) node[midway, above=4pt] {Slots};
\draw[decorate, decoration={brace, amplitude=4pt, mirror}] (2,\n+0.2) -- (5,\n+0.2) node[midway, above=4pt] {Problem};
\draw[decorate, decoration={brace, amplitude=4pt, mirror}] (5,\n+0.2) -- (8,\n+0.2) node[midway, above=4pt] {Block 1};
\draw[decorate, decoration={brace, amplitude=4pt, mirror}] (8,\n+0.2) -- (11,\n+0.2) node[midway, above=4pt] {Block 2};
\draw[decorate, decoration={brace, amplitude=4pt, mirror}] (11,\n+0.2) -- (13,\n+0.2) node[midway, above=4pt] {Blk 3};

% Region labels (left)
\draw[decorate, decoration={brace, amplitude=4pt}] (-0.2,\n-2) -- (-0.2,\n) node[midway, left=4pt, rotate=90, anchor=south] {Slots};
\draw[decorate, decoration={brace, amplitude=4pt}] (-0.2,\n-5) -- (-0.2,\n-2) node[midway, left=4pt, rotate=90, anchor=south] {Problem};
\draw[decorate, decoration={brace, amplitude=4pt}] (-0.2,\n-8) -- (-0.2,\n-5) node[midway, left=4pt, rotate=90, anchor=south] {Block 1};
\draw[decorate, decoration={brace, amplitude=4pt}] (-0.2,\n-11) -- (-0.2,\n-8) node[midway, left=4pt, rotate=90, anchor=south] {Block 2};
\draw[decorate, decoration={brace, amplitude=4pt}] (-0.2,\n-13) -- (-0.2,\n-11) node[midway, left=4pt, rotate=90, anchor=south] {Blk 3};

% Slots attend to Slots + Problem
\fill[blue!40] (0,\n-2) rectangle (2,\n);    % slots->slots
\fill[blue!40] (2,\n-2) rectangle (5,\n);    % slots->problem

% Problem attends to Slots + Problem (causal)
\fill[blue!40] (0,\n-5) rectangle (2,\n-2);  % problem->slots
\fill[green!40] (2,\n-5) rectangle (5,\n-2); % problem->problem (causal)

% Block 1 attends to Slots + Problem + Block 1 (causal)
\fill[blue!40] (0,\n-8) rectangle (2,\n-5);  % block1->slots
\fill[blue!40] (2,\n-8) rectangle (5,\n-5);  % block1->problem
\fill[orange!40] (5,\n-8) rectangle (8,\n-5); % block1->block1 (causal)
% Block 1 does NOT attend to Block 2, Block 3

% Block 2 attends to Slots + Problem + Block 2 (causal)
\fill[blue!40] (0,\n-11) rectangle (2,\n-8);  % block2->slots
\fill[blue!40] (2,\n-11) rectangle (5,\n-8);  % block2->problem
% Block 2 does NOT attend to Block 1
\fill[orange!40] (8,\n-11) rectangle (11,\n-8); % block2->block2 (causal)

% Block 3 attends to Slots + Problem + Block 3 (causal)
\fill[blue!40] (0,\n-13) rectangle (2,\n-11);  % block3->slots
\fill[blue!40] (2,\n-13) rectangle (5,\n-11);  % block3->problem
\fill[orange!40] (11,\n-13) rectangle (13,\n-11); % block3->block3 (causal)

% Grid lines
\draw[gray!50, very thin] (0,0) grid (\n,\n);

% Region boundaries
\draw[black, thick] (2,0) -- (2,\n);
\draw[black, thick] (5,0) -- (5,\n);
\draw[black, thick] (8,0) -- (8,\n);
\draw[black, thick] (11,0) -- (11,\n);
\draw[black, thick] (0,\n-2) -- (\n,\n-2);
\draw[black, thick] (0,\n-5) -- (\n,\n-5);
\draw[black, thick] (0,\n-8) -- (\n,\n-8);
\draw[black, thick] (0,\n-11) -- (\n,\n-11);

% Outer border
\draw[black, thick] (0,0) rectangle (\n,\n);

% Legend
\fill[blue!40] (14.5,12) rectangle (15.5,12.5); \node[right] at (15.7,12.25) {Full access};
\fill[green!40] (14.5,11) rectangle (15.5,11.5); \node[right] at (15.7,11.25) {Causal (prefix)};
\fill[orange!40] (14.5,10) rectangle (15.5,10.5); \node[right] at (15.7,10.25) {Causal (block)};
\fill[gray!10] (14.5,9) rectangle (15.5,9.5); \node[right] at (15.7,9.25) {Blocked};

% Axis labels
\node[below] at (\n/2, -0.3) {Key (attends to)};
\node[rotate=90] at (-1.2, \n/2) {Query (attending from)};
\end{tikzpicture}
\caption{SSA attention mask. Blue: full access; green: causal within the prefix; orange: causal within the same block; white: blocked.}
\label{fig:ssa-mask}
\end{figure}

%% file: ext_ssa_experiments.tex
\section{Empirical Evaluation} \label{sec:ssa-experiments}

With localization in the trace (\S\ref{sec:tree-traversal-recap}) and isolation in attention (\S\ref{sec:ssa}) both in place, we now ask whether the resulting model behaves like a state-based verifier.
The four subsections that follow address four questions: does the architectural fix work end-to-end under deployment shift (\emph{state-rebuilt transfer})? Is the invariance structural rather than coincidental (\emph{direct evidence})? Is mask-level isolation specifically necessary or could simpler interventions have produced the result? And how does the result scale across regimes and tasks (\emph{scope and proactive limit})?
End-to-end solve rates confound the policy and verification components, the trace format, and the inference protocol; a verifier-only state-equivalence benchmark on canonical states reached by multiple histories serves as our primary diagnostic, corroborated by the solve-rate and history-transplant results that we report alongside it.
The primary domain is 3-SAT at $n{=}50$ with the default settings of \S\ref{sec:slot-memory-method}; default test conditions are $200$ planted instances at test-instance seed $42$ evaluated across $5$ model seeds, and captions flag any deviations. Supporting domains are 3-SAT $n{=}75$ at the phase transition, graph coloring, Blocks World, and backtracking parsing of an ambiguous expression grammar (Appendix~\ref{sec:appendix-supplementary-main}).
All conditions share the same backbone; only the attention mask varies.

\subsection{State-Rebuilt Transfer}
\label{sec:state-isolation}
\label{sec:ssa-sat}

We train SSA and a causal baseline on the same cumulative traces and evaluate each checkpoint under both inference protocols of \S\ref{sec:slot-memory-method} (Figure~\ref{fig:transplant-rebuild}b), with the input/output split of Table~\ref{tab:responsibility}.

\begin{figure}[t]
\centering
\includegraphics[width=\linewidth]{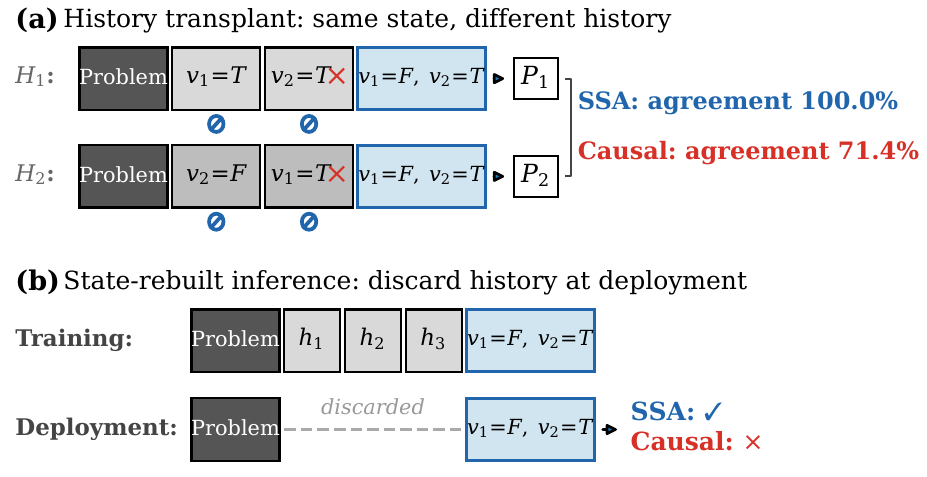}
\caption{Schematic of two constructs introduced in this section. \textbf{(a)} History transplant: two histories $H_1, H_2$ (gray blocks) reach the same canonical state, producing next-step distributions $P_1, P_2$. \textbf{(b)} State-rebuilt inference: deployment context contains only the problem prefix and the current state block.}
\label{fig:transplant-rebuild}
\end{figure}

\begin{table}[!tp]
\centering
\caption{Solve rate (\%) on the primary 3-SAT setting. Table~\ref{tab:training-budget-main} details the \textbf{Train} column; the Panel B baseline conditions are described in later subsections; Appendix~\ref{sec:appendix-multi-instance-seed} provides pooled three-seed estimates, and Appendix~\ref{sec:appendix-verifier-swap} reports verifier-swap and oracle-decomposition analyses. $^{\dagger}$Causal (state-only) 100k uses 3 seeds.}
\label{tab:flagship}
\label{tab:protocol-bridge}
\footnotesize
\setlength{\tabcolsep}{4pt}
\resizebox{\linewidth}{!}{\input{tab_flagship}}
\end{table}

\begin{table}[!tp]
\centering
\caption{Training-budget axes for the Panel A rows: solver-trace count, block-level supervised actions seen by the next-token objective, and native inference protocol.}
\label{tab:training-budget-main}
\footnotesize
\setlength{\tabcolsep}{4pt}
\resizebox{\linewidth}{!}{\input{tab_training_budget}}
\end{table}

Table~\ref{tab:training-budget-main} separates the trace count from the supervision granularity for the Panel~A rows: cumulative-trained transformers receive supervision at every decision block, whereas the state-only rows subsample one $(\text{state}, \text{action})$ triple per trace at the indicated count.
SSA at $5k$ cumulative therefore matches Causal (state-only) $100k$ and the MLP/Factor graph neural network (GNN;~\citealp{scarselli2009graph}) baselines in block-level supervision count, and we report the comparison at matched supervised-action level.

Under cumulative inference SSA improves modestly over causal (Table~\ref{tab:protocol-bridge}A).
Under state-rebuilt inference, SSA holds near its peak solve rate while the cumulative-trained causal transformer drops sharply (Table~\ref{tab:protocol-bridge}A, with pooled multi-instance-seed estimates in Appendix~\ref{sec:appendix-multi-instance-seed} confirming the gap is insensitive to test-instance sampling).

The two protocol comparisons within Table~\ref{tab:protocol-bridge}A reflect two distinct mechanisms rather than a single uniform protocol effect.
The cumulative-trained causal transformer's drop under state-rebuilt inference indicates a predictor that depends on prior-block content, the failure mode SSA targets.
SSA also scores lower under cumulative inference than under state-rebuilt; however, Proposition~\ref{prop:ssa-invariance} forbids any content-history channel inside the current decision block, so the cumulative-vs-state-rebuilt difference for SSA arises from a different source.
Under cumulative inference the context grows with the trajectory and can place tokens inside $B_t$ at positions beyond the range seen at training, while state-rebuilt inference resets the positional range at every step.
The padding-control counterfactual (Appendix~\ref{sec:appendix-padding-control}) attributes the SSA gap to this positional extrapolation rather than to history content, and the history-transplant test below confirms invariance on same-state/different-history pairs at floating-point precision while the cumulative-trained causal transformer disagrees at substantial rates.

When we instead supervise a causal transformer on single-step state--action examples directly, it reaches the same level as SSA at matched supervised-action level (Table~\ref{tab:protocol-bridge}A); SSA therefore recovers the behavior of a dedicated state-only pipeline using only the cumulative traces a normal solver already produces, by changing only the attention mask.
The state-feature MLP approaches the upper bound under state-rebuilt inference (Table~\ref{tab:protocol-bridge}A), so the cumulative-trained causal transformer's collapse reflects a failure to use information already present in the input, not a representational ceiling.
As an additional probe (rightmost column of Panel~A), we replace learned variable selection with uniform random sampling at inference: SSA, the $100k$ state-only transformer, and the MLP all retain their native-protocol solve rates with only minor drops, while the cumulative-trained causal transformer collapses to the random-branching baseline, separating state-based decision functions from history-entangled trajectory imitators.

\subsection{Direct Evidence of History Invariance}
\label{sec:history-transplant}
\label{sec:verifier-only}

The state-rebuilt transfer above is consistent with the central claim that SSA enforces structural invariance to prior trajectory content, but it also admits weaker explanations: a history-content correlation that happens to survive discarding the trajectory, or a positional-encoding artifact that mimics invariance.
Two direct diagnostics below rule both out; a length out-of-distribution (OOD) diagnostic and a counterfactual padding control in Appendix~\ref{sec:appendix-padding-control} reproduce the same conclusion in the cumulative protocol.

\paragraph{History transplant.}
We construct pairs that share the same state but differ in prior history (Figure~\ref{fig:transplant-rebuild}a) by pairing decision points across stochastic rollouts whose canonical state block (\S\ref{sec:slot-memory-method}), conflict status, and decision-stack depth all agree.
We order state lexically by variable identifier so the current decision block becomes an identical token sequence across the two histories, and we run SSA under its default block-relative positions; together these conditions satisfy the hypotheses of Proposition~\ref{prop:ssa-invariance} and isolate the diagnostic to prior-block content.
We report symmetric Kullback--Leibler (KL) divergence between $P_1$ and $P_2$ and argmax-action agreement at the position immediately preceding the action token (Table~\ref{tab:transplant-metrics}).
For SSA, Proposition~\ref{prop:ssa-invariance} guarantees $P_1 = P_2$ exactly, and the empirical numbers match this prediction to floating-point precision on both domains (Table~\ref{tab:transplant-metrics}).
The causal transformer trained on the same traces shows substantial disagreement on both metrics, confirming that cumulative-trace training yields a predictor that depends on prior history content, not on the local state alone.

\begin{table}[t]
\centering
\caption{History-transplant metrics for SSA and cumulative-trained causal attention.}
\label{tab:transplant-metrics}
\footnotesize
\setlength{\tabcolsep}{4pt}
\resizebox{\textwidth}{!}{%
\begin{tabular}{l l c c c}
\toprule
\textbf{Domain} & \textbf{Architecture} & \textbf{Pairs} & \textbf{Argmax agreement (\%)} & \textbf{Mean symmetric KL} \\
\midrule
\multirow{2}{*}{SAT (5 seeds)} & SSA    & $28{,}420$ & $100.000 \pm 0.000$ & $5.6 \times 10^{-14}$ \\
                                       & Causal & $28{,}420$ & $71.44 \pm 7.62$    & $1.26 \pm 0.22$ \\
\midrule
\multirow{2}{*}{GC $n{=}30$}            & SSA    & $458$      & $100.0$             & ${\sim}0$ \\
                                       & Causal & $458$      & $89.1$              & $0.16$ \\
\bottomrule
\end{tabular}}
\end{table}

\paragraph{Verifier-only state-equivalence benchmark.}
The transplant test above compares full action distributions, which entangles the policy and verification components.
We complement it with a benchmark that isolates the verification claim by restricting attention to the binary continue-or-backtrack decision on canonical states reached by multiple histories, using a fixed external probe bank that does not depend on the model under test; Appendix~\ref{sec:appendix-verifier-calibration} gives the probe bank, exact metric definitions, and full per-condition results.
We define the central diagnostic as the protocol gap in area under the receiver operating characteristic curve (AUROC), $\Delta\mathrm{AUROC} = \mathrm{AUROC}^{\,\mathrm{cum}} - \mathrm{AUROC}^{\,\mathrm{SR}}$ between cumulative and state-rebuilt deployments, which equals zero for any predictor that depends only on the canonical state and turns positive whenever the verifier signal weakens under state-rebuilt deployment.
We sweep ten interventions across five sites: mask topology (SSA, current-block-only), recurrent block masking, data-level history reductions (block dropout, sliding window, null history, transplanted donor histories), an objective-level contrastive invariance regularizer, and a long short-term memory (LSTM)~\citep{hochreiter1997long} family member.
Among the swept interventions, only mask-level isolation between decision blocks produces a calibrated history-invariant verifier: SSA and the strictly more restrictive current-block-only mask both attain a zero protocol gap on SAT, while cumulative-trained causal attention loses a substantial fraction of its AUROC under state-rebuilt deployment (Appendix~\ref{sec:appendix-verifier-calibration}).
Data-level history reductions and the contrastive objective do not reproduce the invariance: contrastive regularization keeps the protocol gap small only by collapsing the missed-conflict rate $\beta$ to its degenerate maximum at every threshold (the model emits the continue token at every conflict, so the small protocol gap reflects a constant predictor rather than a calibrated verifier).

\subsection{Why Architectural Isolation Is Necessary}
\label{sec:scaffold-ladder}
\label{sec:alternatives}
\label{sec:ssa-baselines}

The verifier-only benchmark already shows that mask-level cross-block isolation is the only swept intervention that recovers history invariance; this subsection asks the dual question: is isolation alone sufficient, or does it require a particular trace format?
A trace-format $\times$ attention-mask factorial (Appendix~\ref{sec:appendix-scaffold}) crosses three trace formats with SSA and causal masks under both inference protocols.
Only the \emph{enriched $+$ SSA $+$ state-rebuilt} cell solves the search reliably; cells that satisfy localization alone (\emph{enriched $+$ causal}) or isolation alone (\emph{residual-CNF $+$ SSA}, \emph{stripped $+$ SSA}) collapse to near-random performance.
Localization and isolation are therefore independently necessary, and SSA presupposes the localized trace format throughout.

\paragraph{Confounds at the model level.}
Four model-level diagnostics each rule out an alternative explanation for the SSA advantage (Table~\ref{tab:diagnostics}).
A separate mask $\times$ slots factorial (Appendix~\ref{sec:appendix-slot-mask}) shows SSA's advantage over causal is at least as large without slot registers as with them, ruling out the slot registers as the source of the gain.

\begin{table}[t]
\centering
\caption{Model-level confound diagnostics. Each diagnostic targets a specific alternative explanation for SSA's state-rebuilt advantage.}
\label{tab:diagnostics}
\footnotesize
\begin{tabular}{l l l}
\toprule
\textbf{Diagnostic} & \textbf{Result} & \textbf{Rules out} \\
\midrule
Visited-set oracle & Intervenes 0 times & Simple repetition \\
Representation probe & SSA invariant; causal diverges & Superficial attention effect \\
Train from scratch & SSA advantage reproduced & Checkpoint artifact \\
State-augmented causal & $99.1\%$ offline, ${\le}2\%$ inference & Exposure without isolation \\
\bottomrule
\end{tabular}
\end{table}

\paragraph{Alternative routes to invariance fail at the end-to-end level too.}
The intervention sweep of \S\ref{sec:history-transplant} on the verifier-only benchmark already showed that only mask-level cross-block isolation produced a calibrated history-invariant verifier; we extend the comparison to end-to-end solve rate (Table~\ref{tab:protocol-bridge}B) and find the same conclusion.
Data-level history reductions and the objective-level contrastive regularizer all collapse to near-random performance under state-rebuilt inference; the LSTM with SSA-like block masking does not recover transfer at our matched configuration either.
A Factor GNN trained on $(\text{state}, \text{action})$ pairs solves every evaluation instance in a small number of decisions on average, confirming that the difficulty is specific to transformers learning a history-invariant function from serialized cumulative traces, not to the SAT instances themselves.

\subsection{Scope and Proactive Limit}
\label{sec:scope-and-limits}
\label{sec:verification-experiment}
\label{sec:parsing-domain}

\begin{figure}[t]
\centering
\includegraphics[width=\textwidth]{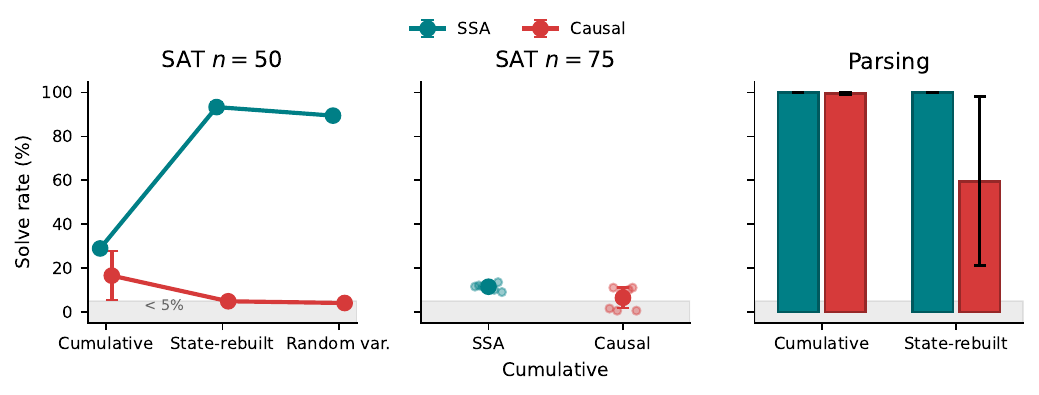}
\caption{Solve rates across three domains (mean $\pm$ std). \textbf{Left:} primary 3-SAT under three inference protocols. \textbf{Center:} 3-SAT $n{=}75$ phase-transition ($8$ seeds, cumulative inference only) with per-seed points. \textbf{Right:} Backtracking parsing under two protocols.}
\label{fig:protocol-response}
\end{figure}

\paragraph{Phase-transition scaling and non-CSP transfer.}
The result holds across two robustness checks (Figure~\ref{fig:protocol-response}).
On 3-SAT at the phase transition~\citep{mitchell1992hard,crawford1996experimental} ($n{=}75$, $\alpha{=}4.26$), the failure mode shifts from a mean-performance gap to a training-stability gap: a non-trivial fraction of causal seeds fail to train productively while SSA seeds remain tightly clustered, consistent with a history-dependent function being harder to learn reliably as trajectories grow.
On backtracking parsing of an ambiguous expression grammar (state block contains only the input tokens, cursor, and parser stack), SSA maintains perfect parse success under both inference protocols while causal drops sharply under state-rebuilt inference, demonstrating that the result extends beyond the CSP pattern.
Graph coloring and Blocks World show the same qualitative pattern (Appendix~\ref{sec:appendix-supplementary-main}, \ref{sec:appendix-bw}).

\paragraph{Verifier residual and the proactive limit.}
SSA's verifier holds its false-prune rate comfortably below the threshold of Proposition~\ref{prop:fp-compounding} on both domains (Table~\ref{tab:verifier-calibration}, Appendix~\ref{sec:appendix-verifier-calibration}; Appendix~\ref{sec:precision-barrier}).
The verifier's missed-conflict rate $\beta$ remains non-negligible on SAT, confining the present work to \emph{reactive} verification on conflicts that propagation has already exposed.
The selection criterion we adopt is robustness under deployment shift rather than peak AUROC under one chosen protocol: a verifier whose accuracy depends on having the training-time trajectory present is by definition a history-dependent surrogate (\S\ref{sec:history-transplant}), and the state-rebuilt protocol exposes the part of the predictor that is state-based.
We treat the proactive target as outside scope; our direct attempts to add a learned proactive verifier produce no benefit at our scale (Appendix~\ref{sec:appendix-proactive-verifier}), and the discussion section considers on-policy data collection as a plausible future direction outside the offline-cumulative-trace setting.
A verifier-swap analysis (Appendix~\ref{sec:appendix-verifier-swap}) further decomposes the gap into policy and verification components: holding the verifier as a perfect oracle, SSA $+$ oracle solves every instance while cumulative-trained causal $+$ oracle leaves a substantial deficit, indicating that history entanglement degrades both components rather than verification alone.

%% file: tab_flagship.tex
% Generated by scripts/tables/gen_tab_flagship.py
% Bolding policy: none. See the script docstring for the rationale.
\begin{tabular}{l c ccc}
\toprule
\multicolumn{5}{l}{\textbf{A.} \textit{Evaluation protocol comparison}} \\
\midrule
& \textbf{Train} & \textbf{Cumulative} & \textbf{State-rebuilt} & \textbf{SR + rand.\ var.} \\
\cmidrule(lr){3-5}
Causal & 5k cum. & $16.5 \pm 11.2$ & $4.8 \pm 1.2$ & $4.0 \pm 0.0$ \\
SSA & 5k cum. & $28.9 \pm 1.9$ & $93.4 \pm 2.3$ & $89.5 \pm 1.1$ \\
Causal (state-only) & 5k state & --- & $69.3 \pm 18.1$ & $44.9 \pm 8.9$ \\
Causal (state-only)$^{\dagger}$ & 100k state & --- & $93.0 \pm 2.0$ & $86.0 \pm 6.9^{\dagger}$ \\
MLP (state features) & 100k state & --- & $99.2 \pm 0.8$ & $100.0 \pm 0.0$ \\
\midrule
\multicolumn{5}{l}{\textbf{B.} \textit{History-reduction and architecture baselines} (all use 5k cumulative traces)} \\
\midrule
& & \textbf{Cumulative} & \textbf{State-rebuilt} & \textbf{$\Delta$ (pp)} \\
\cmidrule(lr){3-5}
SSA & & $28.9 \pm 1.9$ & $93.4 \pm 2.3$ & $+64.5$ \\
Causal & & $16.5 \pm 11.2$ & $4.8 \pm 1.2$ & $-11.7$ \\
Block Dropout $p{=}0.5$ & & $24.8 \pm 4.4$ & $3.8 \pm 1.9$ & $-21.0$ \\
Sliding Window $k{=}3$ & & $5.3 \pm 1.5$ & $3.2 \pm 0.4$ & $-2.1$ \\
Null History & & $4.2 \pm 1.3$ & $4.2 \pm 0.4$ & $+0.1$ \\
LSTM & & $6.8 \pm 3.6$ & $3.1 \pm 0.9$ & $-3.7$ \\
\bottomrule
\end{tabular}

%% file: tab_training_budget.tex
\begin{tabular}{l c c c}
\toprule
\textbf{Method} & \textbf{Solver} & \textbf{Block-level} & \textbf{Native} \\
 & \textbf{traces} & \textbf{supervised actions} & \textbf{inference} \\
\midrule
SSA ($5k$ cum.) & $5{,}000$ & ${\sim}150{,}000$ (full traces) & cum.\ or state-rebuilt \\
Causal ($5k$ cum.) & $5{,}000$ & ${\sim}150{,}000$ (full traces) & cum.\ or state-rebuilt \\
Causal state-only ($5k$ state) & $5{,}000$ & $5{,}000$ (subsampled, $1$/trace) & state-rebuilt \\
Causal state-only ($100k$ state) & $5{,}000$ & $100{,}000$ (subsampled) & state-rebuilt \\
MLP (state features) & $5{,}000$ & ${\sim}150{,}000$ & state-rebuilt \\
Factor GNN & $5{,}000$ & ${\sim}150{,}000$ & state-rebuilt \\
\bottomrule
\end{tabular}

%% file: ext_related_work.tex
\section{Related Work} \label{sec:related-work}

\paragraph{Logic and control in algorithmic reasoning.}
Building on \citet{kowalski1979algorithm}'s logic/control decomposition introduced in \S\ref{sec:introduction}, our finding is that cumulative-trace supervision of the control component admits predictors violating the state-equivalence the logical specification imposes, and SSA is the architectural fix that restores the alignment.

\paragraph{Neural search policies.}
Tree of Thoughts~\citep{yao2023tree} and Language Agent Tree Search~\citep{zhou2024language} use language models to expand, score, or simulate search branches.
Self-Backtracking~\citep{yang2025selfbacktracking}, ASTRO~\citep{kim2025astro}, and \citet{giannoulis2025sudoku} learn backtracking off-policy.
CLRS~\citep{velickovic2022clrs} and scratchpad supervision~\citep{nye2022show} study neural representations of intermediate algorithmic state.
\citet{qin2025backtrack} show that prescribed search structure does not always help and can hurt.
These works ask whether transformers can learn to search from scratch under various supervision regimes; they vary the search procedure, training traces, or supervision signal.
We ask a complementary question: given a propagated state and a model trained on cumulative traces, does the resulting predictor condition its continue-or-backtrack decision on the current state or on the trajectory that produced it?
SSA modifies the attention architecture to impose the state-equivalence invariance that this question requires, rather than to teach a new search procedure; our contribution is identifying the invariance and showing that imposing it changes behaviour, not the particular mask topology we use to impose it.

\paragraph{Algorithmic reasoning and length generalization.}
The algorithmic-reasoning results in transformers cited in \S\ref{sec:introduction} (in-context function classes, worked-example procedures, symbolic linear algebra) concern single-pass function evaluation, while our failure arises in multi-step search where predictions depend on the path used to reach the current state.
Length generalization~\citep{anil2022exploring,jelassi2023length} and looped transformers~\citep{lee2024teaching,giannou2023looped,yang2024looped} address insufficient computation depth, while SSA targets path sensitivity.
The scattered-retrieval failure mode we identify in tree traversal (\S\ref{sec:tree-traversal-recap}) is a multiplicative-error retrieval bottleneck distinct from the additive-error length-extrapolation literature.

\paragraph{Intermediate-state evaluation.}
Three lines of work evaluate nonterminal search states.
Process reward models score local step quality~\citep{lightman2023lets}, value models estimate future solve probability from a partial state~\citep{snell2024scaling}, and dead-end detectors in classical planning predict whether a state is unsolvable~\citep{steinmetz2016towards}.
Outcome-only methods~\citep{cobbe2021training,wang2023selfconsistency} instead evaluate completed candidates, and recent analyses question whether intermediate-step supervision reliably improves internal-node evaluation~\citep{zhang2025lessons}.
We target existential solvability (whether the current state is viable in the sense of \S\ref{sec:preliminary}), distinct from step correctness and expected future success.
SSA is complementary to these supervision choices, because we keep the training objective fixed at next-token prediction and vary only the attention, so we can attribute any improvements to state isolation rather than to richer labels.
Verifier precision is a central constraint because false-prune events compound multiplicatively along search horizons (Proposition~\ref{prop:fp-compounding}), an effect also noted in reasoning-system evaluations~\citep{gao2023scaling,huang2024large}.
DAgger~\citep{ross2011reduction} addresses deployment shift by broadening the training distribution, while SSA targets path-dependent representations inside the model at evaluation of the current state.

\paragraph{Neural-symbolic hybrid systems.}
Our bounded-CDCL experiments pair the SSA model with a symbolic verifier for exact dead-end checks, following the propose-and-check pattern of AlphaProof~\citep{hubert2025alphaproof}, AlphaGeometry~\citep{trinh2024solving}, FunSearch~\citep{romeraparedes2023mathematical}, and Toolformer~\citep{schick2023toolformer}.
In SAT, NeuroBack~\citep{wang2024neuroback} uses a graph neural network to predict phase preferences that bias the variable-value heuristic of an external CDCL solver, while the solver itself remains classical and retains completeness through clause learning and backjumping.
SSA targets a different component of the search loop: whether a transformer can internalize the reactive backtrack decision \emph{itself} from offline traces, rather than guiding a complete symbolic solver from outside.
NeuroBack's neural component improves a CDCL solver's heuristic decisions but leaves the backtracking decision to the solver; SSA learns the backtracking decision under a state-equivalence guarantee (Proposition~\ref{prop:ssa-invariance}), addressing the harder problem of recovering a path-invariant verifier from history-correlated cumulative traces.
The two approaches compose: SSA pairs cleanly with a bounded CDCL sidecar when completeness is required (Appendix~\ref{sec:appendix-verifier-swap}).

\paragraph{Neural satisfiability solving.}
Most neural SAT methods target satisfiability classification~\citep{selsam2018neurosat,cameron2020predicting} or heuristic guidance inside an existing solver~\citep{shi2023satformer,shi2021transformerbased}.
\citet{pan2025backtracking} study whether transformers can learn Davis-Putnam-Logemann-Loveland (DPLL,~\citealp{davis1960computing,davis1962machine}) style reasoning.
In our setting, the transformer generates the branching variable, value assignment, and backtrack tokens autoregressively, while unit propagation, conflict exposure, state-block serialization, and backjump execution remain external (Table~\ref{tab:responsibility}), following the standard CDCL search loop~\citep{een2003extensible,moskewicz2001chaff}.
The question we ask is whether the transformer can internalize the reactive backtrack decision given propagated state, distinct from the orthogonal question of replacing propagation infrastructure with a neural component.

\paragraph{Structured attention and state abstraction.}
SSA's mask is a structured attention pattern, but it differs in purpose from prior work.
System~2 Attention~\citep{weston2023system}, ETC~\citep{ainslie2020etc}, Block Transformer~\citep{ho2024block}, register tokens~\citep{darcet2024vision}, Slot Attention~\citep{locatello2020slotattention}, and Selective Attention~\citep{leviathan2025selective} modify which tokens attend to which, for goals ranging from efficiency to context filtering.
SSA's distinguishing feature is therefore not the use of an attention mask but the invariance the mask imposes: state-equivalence across same-state/different-history pairs, a property that the reactive verification target requires and that ordinary cumulative-trace training admits predictors that violate.
The approach connects to state abstraction in reinforcement learning~\citep{li2006towards}, but SSA enforces the invariance structurally through attention rather than through a learned abstract state.

%% file: ext_analysis.tex
\section{Discussion} \label{sec:analysis-discussion}

\paragraph{What SSA does and does not solve.}
SSA enforces structural invariance to prior trajectory \emph{content}, eliminating the entanglement term in the aliasing--entanglement decomposition (Appendix~\ref{sec:appendix-quantitative-theory}).
It does not address two adjacent issues.
First, \emph{aliasing}: the trace must serialize the relevant state in the form the predictor needs (the localization prerequisite of \S\ref{sec:scaffold-ladder}); SSA presupposes localization and is not an alternative to it.
Second, invariance to prior-block \emph{length} additionally requires block-relative positions; under standard absolute positions SSA retains content invariance but exhibits a small residual KL divergence at out-of-training-support positions (Appendix~\ref{sec:appendix-padding-control}), attributable to positional extrapolation rather than history leakage.

\paragraph{Inference-time context clearing.}
\label{sec:outlook}
The same isolation principle that motivates the SSA mask suggests a protocol-level analogue for pretrained large language models (LLMs) that keeps autoregressive generation but clears and rebuilds the context window at each decision step.
This design would let standard pretrained models benefit from state isolation without architectural change, and is the most direct way to extend our symbolic-domain result to systems where modifying the attention mask is impractical.
The ingredients the analysis depends on are explicit block boundaries and a canonical state representation that the model can read at each step.
These conditions are present in search traces and scratchpad-style reasoning, are partially present in tool-augmented chain-of-thought protocols, and are the natural starting point for an LLM extension of the work.
A context-clearing study on an explicit-state reasoning benchmark is the natural next step.

\paragraph{Open technical questions.}
RoPE~\citep{su2024roformer}, ALiBi~\citep{press2022alibi}, and NoPE~\citep{kazemnejad2023nope} target length extrapolation, a different failure mode from the autonomous-termination plateau we observe; whether they reduce SSA's small absolute-position residual is an open empirical question.
Proactive verification (detecting locally consistent dead ends before any conflict surfaces) requires lookahead reasoning that the offline cumulative-trace setting is poorly suited to learn; on-policy data collection in the style of dataset aggregation (DAgger)~\citep{ross2011reduction} is a plausible direction.

%% file: ext_conclusion.tex
\section{Conclusion} \label{sec:conclusion}

Cumulative-trace-trained causal transformers fail at reactive backtrack verification, learning a history-entangled surrogate that flips its decision on a substantial fraction of same-state pairs and loses verifier accuracy once deployment removes the trajectory.
The failure traces to a mismatch between the cumulative-trace training objective---which admits any predictor that fits the joint distribution of states and histories---and the state-equivalence the verification target requires.

Selective State Attention removes the trajectory route by construction: under its mask, each decision step's prediction depends only on the problem prefix and the current decision block (Proposition~\ref{prop:ssa-invariance}), and the mask adds no parameters and leaves the training objective unchanged.
On the verifier-only state-equivalence benchmark SSA produces identical decisions under cumulative and state-rebuilt deployments; among the ten alternative interventions we test, only mask-level cross-block isolation recovers transfer between the two protocols (\S\ref{sec:alternatives}).
Together with the localization fix from \S\ref{sec:tree-traversal-recap}, this yields a two-piece prescription for cumulative-trace transformers: localize the state in the trace and isolate it in attention; the factorial of \S\ref{sec:scaffold-ladder} shows that neither piece alone restores state-rebuilt transfer.
A small SSA model trained on cumulative traces matches a causal model trained directly on individual state-action examples under state-rebuilt inference (Table~\ref{tab:protocol-bridge}A), recovering state-isolated deployment behavior without changing the training data.

Our benchmarks serve as diagnostics rather than solver-competitive comparisons: rule-based heuristics with the same search infrastructure already solve every instance at the scales we use (Appendix~\ref{sec:appendix-heuristic-detail}), so we make an architectural claim about transformers on serialized trajectory data rather than a solver-advance claim in any particular domain.
Cumulative-trace training is the default regime for any autoregressive model whose context accumulates a record of its own past actions, and wherever the optimal target depends on a current state rather than the path that produced it the same history-entanglement diagnosis applies in principle; the multiplicative compounding of false-prune events (Proposition~\ref{prop:fp-compounding}, Appendix~\ref{sec:precision-barrier}) makes verifier precision a central constraint in the proactive setting (\S\ref{sec:analysis-discussion}) as well.
The symbolic-domain evidence here opens up inference-time context clearing (\S\ref{sec:outlook}) as a natural avenue for extending state isolation to pretrained language models on tasks with explicit state boundaries.

%% file: ext_appendix.tex
\appendix

\section{Notation Glossary}
\label{sec:appendix-notation}

The main text introduces each symbol at its first use; this glossary consolidates them for reference. Section numbers in parentheses give the place of first substantive use.

\begin{itemize}[nosep]
  \item $X$ \quad problem instance (Definition~\ref{def:problem-setting});
  \item $S_t$ \quad search state at step $t$, comprising the partial assignment, propagated domains, and tried-alternative set (Definition~\ref{def:problem-setting}); $S$ denotes the same object with the time subscript dropped (\S\ref{sec:quotient-state});
  \item $a_t$ \quad action at step $t$, $a_t \in \{\textsc{branch}(x, v),\, \textsc{backtrack}\}$ (Definition~\ref{def:problem-setting});
  \item $B_t$ \quad token-serialized decision block encoding $(S_t, a_t)$ (Definition~\ref{def:problem-setting}); $B_1, \ldots, B_{t-1}$ denote prior blocks;
  \item $P$ \quad problem prefix that token-serializes $X$ (Definition~\ref{def:problem-setting}, \S\ref{sec:ssa});
  \item $H_{<t}$ \quad prior history $(B_1, \ldots, B_{t-1})$ (Definition~\ref{def:problem-setting}); $H$ drops the time subscript (\S\ref{sec:quotient-state});
  \item $T$ \quad trace representation $T = \phi(S)$ observed by the model, as analyzed in the framework (\S\ref{sec:quotient-state});
  \item $Y_t$ \quad verification indicator $Y_t := \mathbf{1}[a_t = \textsc{backtrack}]$, the binary backtrack-or-branch target of the verification analysis (Definition~\ref{def:problem-setting}, Eq.~\eqref{eq:Y-definition}); $Y$ drops the time subscript (\S\ref{sec:quotient-state});
  \item $Y_t^{\,*}(S_t)$ \quad reactive verification target, the state-local Bayes-optimal value of $Y_t$ (Definition~\ref{def:problem-setting}, Eq.~\eqref{eq:reactive-rule});
  \item $f$ \quad generic predictor over $(T, H)$ (\S\ref{sec:quotient-state});
  \item $\eta_S, \eta_T, \eta_{T,H}$ \quad Bayes-optimal predictors $\mathbb{E}[Y \mid \cdot]$ at three information levels (\S\ref{sec:quotient-state});
  \item $\Delta_{\mathrm{ent}}(f)$ \quad entanglement of predictor $f$, equal to $\mathbb{E}[\mathrm{Var}(f(T,H) \mid T)]$ (\S\ref{sec:quantitative-history}, Appendix~\ref{sec:appendix-proofs});
  \item $\mathcal{R}$ \quad slot-register positions (\S\ref{sec:ssa});
  \item $\sigma, \sigma'$ \quad search-trajectory variables for the state-equivalence relation $\sigma \sim \sigma'$ (\S\ref{sec:quotient-state});
  \item $\mathcal{T}$ \quad set of search trajectories (\S\ref{sec:quotient-state});
  \item $K_{\text{train}}$ \quad supervised-trace truncation horizon, in decision blocks (Appendix~\ref{sec:appendix-padding-control});
  \item $\lambda$ \quad weight on the contrastive auxiliary loss (\S\ref{sec:alternatives});
  \item $\alpha_v$ \quad conditional false-prune rate on viable queried states (\S\ref{sec:verification-bottleneck}, Appendix~\ref{sec:precision-barrier});
  \item $\alpha$ (without subscript) \quad SAT clause-to-variable ratio (e.g., $\alpha{=}4.0$);
  \item $M$ \quad number of branch points on a solution path (Appendix~\ref{sec:precision-barrier});
  \item $M_{\text{eff}}, R_{\text{eff}}$ \quad effective per-path branch count and effective number of redundant solution paths in the multi-path corruption model (Appendix~\ref{sec:precision-barrier});
  \item $\tau$ \quad classifier/verifier threshold (\S\ref{sec:verification-experiment}, Appendix~\ref{sec:precision-barrier});
  \item $\pi$ \quad traversal policy over the search tree (Appendix~\ref{sec:precision-barrier});
  \item $\rho$ \quad viable-class prevalence $P(\text{viable} \mid \text{queried})$ (Appendix~\ref{sec:precision-barrier});
  \item $\beta$ \quad missed-conflict rate (Appendix~\ref{sec:precision-barrier});
  \item $\Gamma$ \quad finite rooted search tree; $\Gamma_u$ the subtree at node $u$ (Appendix~\ref{sec:precision-barrier});
  \item $\mathcal{P}$ \quad a solution path through the search tree (Appendix~\ref{sec:precision-barrier});
  \item $\hat{r}$ \quad estimated per-token retrieval accuracy in the tree-traversal $\hat{r}^k$ model (Appendix~\ref{sec:appendix-tree-traversal}).
\end{itemize}

\noindent Locally introduced symbols (e.g.\ block-relative positional indices and linear-probe weights) are defined at point of use.

\section{Formal Framework: State-Equivalence and the Value of History}
\label{sec:appendix-quantitative-theory}
\label{sec:appendix-proofs}
\label{sec:quotient-state}
\label{sec:quantitative-history}
\label{sec:design-requirements}

This appendix formalizes the state-equivalence framework that motivates SSA in \S\ref{sec:ssa}.
Drop the time subscript and write $S$ for the current search state, $H$ for the prior history, and $Y$ for the verification indicator $Y_t$ (Definition~\ref{def:problem-setting}, Eq.~\eqref{eq:Y-definition}).
Two trajectories are \emph{state-equivalent} iff they induce the same $S$, so the truth at internal nodes is $\eta_S := \mathbb{E}[Y \mid S]$, a function of equivalence classes of trajectories rather than of raw token histories.
The model observes a (possibly lossy) trace $T = \phi(S)$ together with $H$.
Let $\eta_T := \mathbb{E}[Y \mid T]$ and $\eta_{T,H} := \mathbb{E}[Y \mid T, H]$ denote the Bayes-optimal predictors at the trace level and the (trace, history) level respectively.

\paragraph{Aliasing--entanglement decomposition.}
Under \emph{state sufficiency} ($Y \perp H \mid S$, automatic from Definition~\ref{def:problem-setting}) and \emph{trace-level history irrelevance} ($Y \perp H \mid T$, equivalently $\eta_{T,H} = \eta_T$), the expected squared loss of any predictor $f(T,H)$ decomposes without cross terms as
\begin{multline}
\label{eq:aliasing-entanglement}
\mathbb{E}[(Y - f(T,H))^2]
= \underbrace{\mathbb{E}[\mathrm{Var}(Y \mid S)]}_{\text{irreducible}}
+ \underbrace{\mathbb{E}[(\eta_S - \eta_T)^2]}_{\text{aliasing}} \\
+ \underbrace{\mathbb{E}[(\eta_T - \mathbb{E}[f \mid T])^2]}_{\text{approximation}}
+ \underbrace{\mathbb{E}[\mathrm{Var}(f \mid T)]}_{\text{entanglement}}.
\end{multline}
\emph{Aliasing} measures state information missing from the trace encoding, including the computational manifestation of \emph{scattered retrieval} in tree traversal (\S\ref{sec:tree-traversal-recap}), where features are present but distributed across positions that bounded per-token attention cannot reliably recompose.
\emph{Entanglement} measures how much $f$'s prediction varies across trajectories that produce the same $T$.
Under causal attention $f$ attends to previous-block tokens, so entanglement is generally non-zero; under SSA, Proposition~\ref{prop:ssa-invariance} gives $f_{\mathrm{SSA}}(T,H) = g(T)$ for some measurable $g$, so $\mathbb{E}[f_{\mathrm{SSA}} \mid T] = f_{\mathrm{SSA}}$ and the entanglement term vanishes by construction.
When trace-level history irrelevance fails, the decomposition carries an additional cross term $-2\,\mathbb{E}[(\eta_{T,H} - \eta_T)(f - \mathbb{E}[f\mid T])]$, identically zero whenever $f$ is $T$-measurable; SSA delivers exactly this measurability.
The decomposition gives two design requirements: \emph{localization} (the trace must expose enough local state to minimize aliasing) and \emph{state isolation} (the attention pattern must block irrelevant trajectory history to minimize entanglement); SSA achieves the latter by construction.

The remainder of this appendix states and proves three additional quantitative results connecting the framework to the empirical diagnostics in \S\ref{sec:history-transplant}.
All hold for a generic predictor $f$ observing a trace $T$ and history $H$, with no assumptions about the domain, model architecture, or search algorithm.

\begin{theorem}[Transplant Variance Identity]
\label{thm:transplant-variance}
Let $f(T,H)$ be any square-integrable predictor.
Let $H'$ be an independent redraw of the trajectory history, keeping the current trace $T$ fixed (formally: $H' \perp H \mid T$ with $H' \mid T \overset{d}{=} H \mid T$).
Define the predictor-side entanglement as the expected variance of $f$ over histories that share the same trace:
\[
\Delta_{\mathrm{ent}}(f) := \mathbb{E}\big[\mathrm{Var}(f(T,H) \mid T)\big].
\]
Then entanglement equals half the expected squared disagreement between predictions under the original and transplanted history:
\begin{equation}
\label{eq:transplant-identity}
\Delta_{\mathrm{ent}}(f) = \frac{1}{2}\,\mathbb{E}\!\left[(f(T,H) - f(T,H'))^2\right].
\end{equation}
\end{theorem}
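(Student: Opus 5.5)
The plan is the standard identity that, for two conditionally i.i.d.\ draws, the mean squared difference equals twice the variance. We work conditionally on $T$ and then take the outer expectation. Write $\mu(T) := \mathbb{E}[f(T,H) \mid T]$, which is well defined and square-integrable because $f(T,H) \in L^2$. Set $X := f(T,H) - \mu(T)$ and $X' := f(T,H') - \mu(T)$.

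First, we check that $X'$ has the same conditional law as $X$ given $T$. This follows from the hypothesis $H' \mid T \overset{d}{=} H \mid T$, since $f$ is a fixed measurable function. Consequently $\mathbb{E}[X' \mid T] = 0$ and $\mathbb{E}[X'^2 \mid T] = \mathrm{Var}(f(T,H)\mid T)$, and in particular $f(T,H')$ is also square-integrable.

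Next, we expand
\[
(f(T,H) - f(T,H'))^2 = (X - X')^2 = X^2 + X'^2 - 2XX'
\]
and take the conditional expectation given $T$. The two square terms each contribute $\mathrm{Var}(f(T,H)\mid T)$.

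The cross term is the only place where the independence hypothesis enters. Since $H' \perp H \mid T$, the variables $X$ and $X'$ are conditionally independent given $T$. Hence $\mathbb{E}[XX' \mid T] = \mathbb{E}[X\mid T]\,\mathbb{E}[X'\mid T] = 0$. Integrability of $XX'$ comes from Cauchy--Schwarz, so this factorization is legitimate.

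Putting these together gives $\mathbb{E}[(f(T,H)-f(T,H'))^2 \mid T] = 2\,\mathrm{Var}(f(T,H)\mid T)$ almost surely. Taking expectations via the tower property then yields $2\,\Delta_{\mathrm{ent}}(f)$, which is \eqref{eq:transplant-identity}.

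The only mildly delicate step is the rigor of conditioning. We need to justify that conditional independence plus equal conditional laws allow the cross term to factor, and this can be done cleanly with regular conditional distributions of $(H,H')$ given $T$, which exist because the history takes values in a countable token space. No result from earlier in the paper is needed beyond the definitions. As a sanity check, for SSA Proposition~\ref{prop:ssa-invariance} makes $f$ $T$-measurable, so both sides are zero, consistent with the transplant metrics in Table~\ref{tab:transplant-metrics}.
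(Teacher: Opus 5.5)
Your proposal is correct and follows the paper's argument: condition on $T$, use that $f(T,H)$ and $f(T,H')$ are conditionally i.i.d.\ so that $\mathbb{E}[(f(T,H)-f(T,H'))^2 \mid T] = 2\,\mathrm{Var}(f(T,H)\mid T)$, then take expectations over $T$. Your centering at $\mu(T)$, the Cauchy--Schwarz integrability check for the cross term, and the remark on regular conditional distributions fill in details the paper's short proof leaves implicit.
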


\begin{proof}
Condition on $T$ and write $X = f(T,H)$, $X' = f(T,H')$.
By construction, $X$ and $X'$ are conditionally i.i.d.\ given $T$.
Hence $\mathbb{E}[(X - X')^2 \mid T] = 2\,\mathrm{Var}(X \mid T)$.
Taking expectations over $T$ gives the result.
\end{proof}

\noindent This identity connects to the transplant diagnostic in \S\ref{sec:history-transplant}: we take a search state, pair it with a different trajectory history, and check whether the model's predictions change.
The fraction of predictions that agree is a thresholded empirical proxy for $\Delta_{\mathrm{ent}}(f)$.

\begin{theorem}[Information Bound on History Value]
\label{thm:info-bound}
Let $R^*(T)$ and $R^*(T,H)$ be the Bayes-optimal $0$--$1$ risks using $T$ alone and $(T,H)$:
\[
R^*(X) := \inf_g \Pr(g(X) \neq Y).
\]
Then
\begin{equation}
\label{eq:info-bound}
0 \le R^*(T) - R^*(T,H) \le \sqrt{\tfrac{1}{2}\,I(Y; H \mid T)},
\end{equation}
where $I(Y; H \mid T)$ is the conditional mutual information between $Y$ and $H$ given $T$.
\end{theorem}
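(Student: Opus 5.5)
The plan is to write both Bayes risks in closed form through the regression functions $\eta_T = \mathbb{E}[Y \mid T]$ and $\eta_{T,H} = \mathbb{E}[Y \mid T,H]$, and then bound their gap by total variation followed by Pinsker's inequality. Since $Y$ is binary, the Bayes classifier thresholds the posterior at $1/2$. This gives
\[
R^*(T) = \mathbb{E}\big[\min(\eta_T, 1-\eta_T)\big], \qquad R^*(T,H) = \mathbb{E}\big[\min(\eta_{T,H}, 1-\eta_{T,H})\big].
\]
The lower bound $R^*(T) - R^*(T,H) \ge 0$ is immediate: any rule $g(T)$ is also a rule in $(T,H)$, so the infimum over the larger class is no larger. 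Equivalently, $m(p) = \min(p, 1-p)$ is concave and $\eta_T = \mathbb{E}[\eta_{T,H} \mid T]$, so Jensen's inequality applies.

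For the upper bound I would use that $m$ is $1$-Lipschitz. This gives
\[
R^*(T) - R^*(T,H) = \mathbb{E}\big[m(\eta_T) - m(\eta_{T,H})\big] \le \mathbb{E}\,|\eta_{T,H} - \eta_T|.
\]
Now fix $(T,H) = (t,h)$. The quantity $|\eta_{t,h} - \eta_t|$ is exactly the total variation distance between the Bernoulli laws $P_{Y \mid T=t, H=h}$ and $P_{Y \mid T=t}$. Pinsker's inequality then bounds it by $\sqrt{\tfrac12 D\big(P_{Y\mid t,h} \,\|\, P_{Y\mid t}\big)}$.

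Next I average over $(T,H)$ and apply Jensen to the concave square root:
\[
\mathbb{E}\,|\eta_{T,H} - \eta_T| \le \sqrt{\tfrac12\, \mathbb{E}_{T,H}\, D\big(P_{Y\mid T,H} \,\|\, P_{Y\mid T}\big)} = \sqrt{\tfrac12\, I(Y;H\mid T)}.
\]
The last equality is the standard identity expressing conditional mutual information as the expected KL divergence between the conditional laws. Information is measured in nats, which is the convention under which Pinsker holds with constant $1/2$.

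The main obstacle is measure-theoretic rather than conceptual. I need the Bayes risk formula to hold with regular conditional distributions for general $(T,H)$, and I need the KL/mutual-information identity to hold when $I(Y;H\mid T)$ may be infinite; in that case the bound is trivial. I would handle these by working with regular conditional probabilities for the finite-valued $Y$ and noting that both sides are defined pointwise. If one prefers the constant $\sqrt{2\,I}$ convention, the Lipschitz step above is where the factor is set. The bound as stated uses the direct TV-to-Pinsker route, and since the risk gap is at most the expected TV, the constant $\tfrac12$ under the square root is correct.
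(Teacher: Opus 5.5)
Your proposal is correct and follows essentially the same route as the paper: you write both Bayes risks via $\min(u,1-u)$ and bound the gap by $\mathbb{E}|\eta_{T,H}-\eta_T|$ (the paper uses $\min(u,1-u)=\tfrac12-|u-\tfrac12|$ and the reverse triangle inequality, you use the equivalent $1$-Lipschitz property), identify this quantity as a Bernoulli total variation, and finish with Pinsker, Jensen, and $I(Y;H\mid T)=\mathbb{E}\,\mathrm{KL}(P_{Y\mid T,H}\,\|\,P_{Y\mid T})$. You also justify the lower bound $0\le R^*(T)-R^*(T,H)$ explicitly, which the paper leaves implicit.
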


\begin{proof}
Let $p = \eta_T$ and $q = \eta_{T,H}$.
The Bayes risk at posterior $u$ is $r(u) = \min\{u, 1-u\} = \frac{1}{2} - |u - \frac{1}{2}|$.
Since $|a| - |b| \le |a - b|$,
\[
R^*(T) - R^*(T,H) = \mathbb{E}[|q - \tfrac{1}{2}| - |p - \tfrac{1}{2}|] \le \mathbb{E}|q - p|.
\]
For binary $Y$, $|q - p|$ is the total variation between $\mathrm{Bern}(q)$ and $\mathrm{Bern}(p)$.
By Pinsker's inequality~\citep{tsybakov2009nonparametric} and Jensen's inequality,
\[
\mathbb{E}|q - p| \le \sqrt{\tfrac{1}{2}\,\mathbb{E}\!\left[\mathrm{KL}(\mathrm{Bern}(q) \| \mathrm{Bern}(p))\right]} = \sqrt{\tfrac{1}{2}\,I(Y; H \mid T)}.
\]
\end{proof}

\begin{theorem}[Log-Loss Characterization]
\label{thm:logloss}
Let $L^*_{\log}(X) := \inf_g \mathbb{E}[\ell_{\log}(g(X), Y)]$ be the Bayes-optimal log-loss. Then
\begin{equation}
\label{eq:logloss-identity}
L^*_{\log}(T) - L^*_{\log}(T,H) = I(Y; H \mid T).
\end{equation}
\end{theorem}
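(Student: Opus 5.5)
The plan is to use the standard fact that the Bayes-optimal log-loss at an information level $X$ equals the conditional entropy $H(Y\mid X)$, and then take the difference of two conditional entropies. The proof has two steps.

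\emph{Step one.} Fix an information level $X$ (either $T$ or $(T,H)$). For any predictor $g$ with values in $(0,1)$, interpreted as a probability of $Y=1$, condition on $X$ and write
\[
\mathbb{E}[\ell_{\log}(g(X),Y)\mid X] = -\eta_X\log g(X) - (1-\eta_X)\log(1-g(X)),
\]
where $\eta_X = \mathbb{E}[Y\mid X]$. This conditional risk splits as
\[
h(\eta_X) + \mathrm{KL}\big(\mathrm{Bern}(\eta_X)\,\|\,\mathrm{Bern}(g(X))\big),
\]
with $h$ the binary entropy. The KL term is nonnegative and vanishes exactly when $g(X)=\eta_X$ almost surely, so the infimum is attained at $g=\eta_X$. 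Taking expectations gives $L^*_{\log}(X) = \mathbb{E}[h(\eta_X)] = H(Y\mid X)$. I will note the usual conventions: $0\log 0=0$, and predictors may take the values $0$ or $1$, at the cost of possibly infinite loss. Neither affects the infimum.

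\emph{Step two.} Apply step one with $X=T$ and with $X=(T,H)$, using the Bayes-optimal predictors $\eta_T$ and $\eta_{T,H}$ already introduced in this appendix. This gives
\[
L^*_{\log}(T) - L^*_{\log}(T,H) = H(Y\mid T) - H(Y\mid T,H),
\]
and the right-hand side is $I(Y;H\mid T)$ by the chain rule for conditional mutual information. Equivalently, it equals $\mathbb{E}\big[\mathrm{KL}(\mathrm{Bern}(\eta_{T,H})\,\|\,\mathrm{Bern}(\eta_T))\big]$, the same quantity that appears in the proof of Theorem~\ref{thm:info-bound}. This makes explicit that the log-loss gap is exactly the quantity that Theorem~\ref{thm:info-bound} controls through Pinsker's inequality.

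The main obstacle is technical rather than conceptual: showing that the subtraction is well defined. The argument needs $H(Y\mid T)<\infty$, which holds automatically because $Y$ is binary and so $H(Y\mid T)\le\log 2$. It also needs the conditional expectations $\eta_T$ and $\eta_{T,H}$ to exist as regular conditional probabilities when $H$ lives in a general (token-sequence) space. For traces over a finite vocabulary, both $T$ and $H$ are discrete, so this is immediate; I would state the result under that assumption and remark that the general measurable case follows from the existence of regular conditional distributions for a binary $Y$.
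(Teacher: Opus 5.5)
Your proposal is correct and follows the same route as the paper. Both identify the Bayes-optimal log-loss at each information level with the conditional entropy, $L^*_{\log}(X) = \mathbb{H}(Y \mid X)$, and then read off $\mathbb{H}(Y \mid T) - \mathbb{H}(Y \mid T,H) = I(Y; H \mid T)$. The paper simply asserts that $\Pr(Y{=}1 \mid X)$ minimizes the log-loss, whereas your decomposition of the conditional risk into $h(\eta_X) + \mathrm{KL}(\mathrm{Bern}(\eta_X)\,\|\,\mathrm{Bern}(g(X)))$ actually justifies that step.
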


\begin{proof}
The optimal predictor for log-loss given features $X$ is $g^*(X) = \Pr(Y{=}1 \mid X)$, achieving $L^*_{\log}(X) = \mathbb{H}(Y \mid X)$ where $\mathbb{H}$ denotes Shannon entropy~\citep{cover2006elements}.
Substituting $X = T$ and $X = (T,H)$ gives $L^*_{\log}(T) - L^*_{\log}(T,H) = \mathbb{H}(Y \mid T) - \mathbb{H}(Y \mid T,H) = I(Y; H \mid T)$.
\end{proof}

\paragraph{Scope of Theorems~\ref{thm:info-bound} and~\ref{thm:logloss}.}
Theorem~\ref{thm:info-bound} is stated for binary labels $Y \in \{0,1\}$ under $0$--$1$ risk, which directly covers the continue-or-backtrack verification target that motivates history entanglement in this paper.
The multiclass action components (variable choice and value choice) are not covered by the exact bound $\sqrt{I/2}$.
An analogous bound holds up to a constant factor that depends on the number of classes via Fano-type inequalities~\citep{cover2006elements}; the empirical analysis in this paper uses only the binary case.
Theorem~\ref{thm:logloss} as written targets the same binary setting.
The identity $L^*_{\log}(T) - L^*_{\log}(T,H) = I(Y;H \mid T)$ extends verbatim to arbitrary discrete $Y$ because the optimal log-loss predictor is always the conditional distribution.

\begin{corollary}[SSA Zeroes Entanglement]
\label{cor:ssa-zero-ent}
Let $f_{\mathrm{SSA}}$ be any predictor computed under SSA attention.
By the non-interference property (Eq.~\eqref{eq:non-interference}), $f_{\mathrm{SSA}}(T,H) = g(T)$ a.s.\ for some measurable $g$.
Therefore $\Delta_{\mathrm{ent}}(f_{\mathrm{SSA}}) = 0$, and the Bayes-optimal $0$--$1$ cost of discarding history is bounded by $\sqrt{I(Y; H \mid T)/2}$.
The SSA mask prevents direct attention to trajectory history, so entanglement is zero within the formal model.
In practice, indirect leakage through positional encodings or trace formatting may introduce residual entanglement.
Under our full-state transplant protocol with lexical STATE ordering (\S\ref{sec:history-transplant}), SSA produces identical predictions on every one of $28{,}420$ pairs across $5$ SAT seeds and on every pair on graph coloring, with KL divergence at floating-point precision.
\end{corollary}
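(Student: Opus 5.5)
The corollary splits into three claims: that $f_{\mathrm{SSA}}(T,H)=g(T)$ a.s., that $\Delta_{\mathrm{ent}}(f_{\mathrm{SSA}})=0$, and that discarding history costs at most $\sqrt{I(Y;H\mid T)/2}$ in Bayes-optimal $0$--$1$ risk. I will take them in that order.

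\emph{Step 1: factor $f_{\mathrm{SSA}}$ through $T$.} The non-interference property referenced as Eq.~\eqref{eq:non-interference} is, as far as it is stated before this corollary, the content of Proposition~\ref{prop:ssa-invariance}. Under block-relative positions, the logits at positions inside $B_t$ are a deterministic function $F(P,B_t)$ and are identical for any two histories $H_{<t},H'_{<t}$.

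I will prove this by induction over layers. Within one layer, the residual stream at each position in $P\cup\mathcal{R}$ depends only on $P$ and on the learned slot parameters, because those queries never attend to any $B_j$. The residual stream at a position in $B_t$ depends only on
\begin{itemize}
\item the already-established $P\cup\mathcal{R}$ states,
\item earlier positions within $B_t$, and
\item its positional embedding, which depends only on the within-block index.
\end{itemize}
The masked attention weights to every $B_j$ with $j\neq t$ are exactly zero after the softmax. So the $B_j$ with $j\neq t$ contribute neither to the numerator nor to the normalizer of any attention row from $B_t$.

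The prefix $P$ is fixed per instance, and the state portion of $B_t$ is a deterministic serialization of the trace $T=\phi(S)$. So the verification logit read at the position immediately preceding the action token has the form $f_{\mathrm{SSA}}(T,H)=g(T)$, where $g$ is a composition of continuous maps of the token sequence. Because the token space is discrete, $g$ is measurable.

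\emph{Step 2: zero entanglement.} Since $f_{\mathrm{SSA}}$ is $\sigma(T)$-measurable, we have $\mathrm{Var}(f_{\mathrm{SSA}}\mid T)=0$ a.s. Hence $\Delta_{\mathrm{ent}}(f_{\mathrm{SSA}})=0$. As a cross-check, I will apply Theorem~\ref{thm:transplant-variance}: it gives $\tfrac12\mathbb{E}[(g(T)-g(T))^2]=0$.

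\emph{Step 3: the cost of discarding history.} An SSA predictor is restricted to $T$-measurable functions, so its Bayes-optimal $0$--$1$ risk is $R^*(T)$. Applying Theorem~\ref{thm:info-bound} directly gives $R^*(T)-R^*(T,H)\le\sqrt{I(Y;H\mid T)/2}$. The remaining sentences of the corollary are empirical remarks and need no proof.

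\emph{Main obstacle.} The delicate part is Step 1. First, the slot registers must be shown to write only while processing $P$, meaning that no $\mathcal{R}$ query attends to any $B_j$. The mask rules guarantee this. Second, I must confirm that $P$ (and hence the instance) is held fixed, or treated as part of $T$. If $P$ varies, I will absorb $P$ into the conditioning variable and repeat the argument with $T$ replaced by $(P,T)$.
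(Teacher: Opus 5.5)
Your proposal is correct and follows the paper's own route: your layer-wise induction in Step 1 is the paper's non-interference argument (slot states depend only on $P$; positions in $B_k$ see only $P$, $\mathcal{R}$ and $B_{k,\le i}$; all remaining operations are position-wise). After that step, $\mathrm{Var}(f_{\mathrm{SSA}}\mid T)=0$ gives zero entanglement, and the history-cost bound is Theorem~\ref{thm:info-bound} applied verbatim, exactly as in the paper; your explicit handling of block-relative positions and of $P$ being fixed (or absorbed into $T$) spells out hypotheses the paper leaves implicit.
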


\paragraph{Non-interference under SSA.}
Let $B_k$ denote the $k$-th decision block (see \S\ref{sec:ssa}).
For any weights $\theta$, position $i$ in $B_k$ ($k > 0$), and layer $\ell$, the hidden representation $h_i^{(\ell)}$ depends only on the problem prefix and block-local context:
\begin{equation}
\label{eq:non-interference}
    h_i^{(\ell)} = \Phi_{\theta,\ell}(P, B_{k,\le i}).
\end{equation}
The proof follows by induction on layers.
Slot states depend only on $P$.
Positions in $B_k$ attend only to $P$, slots, and $B_{k,\le i}$.
All operations are pointwise deterministic.
Trajectory history cannot influence the current block's logits, regardless of training quality.

\paragraph{Connection to experiments.}
Theorem~\ref{thm:transplant-variance} formalizes the history-transplant diagnostic (\S\ref{sec:history-transplant}); the body of \S\ref{sec:quantitative-history} states the resulting $\Delta_{\mathrm{ent}} \approx 0$ versus large $\Delta_{\mathrm{ent}}$ contrast.
Theorems~\ref{thm:info-bound} and~\ref{thm:logloss} separate \emph{useful} history from \emph{entangled} history.
Table~\ref{tab:history-irrelevance} shows the empirical lift from adding history to a linear probe is small.
This bounds a lower-order component of $I(Y; H \mid T)$ rather than the full conditional mutual information, since non-linear dependence is not captured by the probe.

\section{Supplementary Experiments for Section~\ref{sec:ssa-experiments}}
\label{sec:appendix-supplementary-main}

\subsection{Training-Budget Details}
\label{sec:appendix-training-budget}

Expanding on Table~\ref{tab:training-budget-main} in the main text: optimizer steps and batch size are matched within each training group, the MLP and Factor GNN baselines consume the full set of state-action pairs across all $5{,}000$ traces, and all rows train to convergence as measured by validation-loss plateau.

\subsection{SAT $n{=}30$ Results (Single Seed)}
\label{sec:appendix-sat-n30}

\begin{table}[t]
\centering
\caption{SSA on 3-SAT $n{=}30$ (single seed, autonomous evaluation).}
\label{tab:ssa-sat}
\footnotesize
\begin{tabular}{l ccccc}
\toprule
& \textbf{Solve} & \textbf{Timeout} & \textbf{False UNSAT} & \textbf{Mean Dec} & \textbf{Mean BT} \\
\midrule
\multicolumn{6}{l}{\textit{Budget 4096}} \\
\quad SSA & $\mathbf{93.5}$ & 6.5 & $\mathbf{0.0}$ & 22.5 & 14.1 \\
\quad Causal & 62.5 & 2.0 & 35.5 & 18.5 & 11.3 \\
\midrule
\multicolumn{6}{l}{\textit{Budget 8192}} \\
\quad SSA & $\mathbf{99.0}$ & 1.0 & $\mathbf{0.0}$ & 25.4 & 16.6 \\
\bottomrule
\end{tabular}
\end{table}

SSA substantially outperforms causal attention at $n{=}30$ and further improves at the larger budget (Table~\ref{tab:ssa-sat}).

\subsection{Trace-Format $\times$ Mask Factorial}
\label{sec:appendix-scaffold}

\begin{table}[t]
\centering
\caption{Trace format $\times$ attention mask, both inference protocols. Stripped $+$ state-rebuilt cells are undefined (single-block input requires per-variable annotations).}
\label{tab:scaffold-ladder}
\footnotesize
\resizebox{\textwidth}{!}{\input{tab_scaffold_ladder}}
\end{table}

The factorial in Table~\ref{tab:scaffold-ladder} supports the claim in \S\ref{sec:scaffold-ladder} that localization and isolation are independently necessary.
The \emph{enriched $+$ causal $+$ state-rebuilt} cell satisfies only localization and drops to near-random performance because the cumulative-trained causal transformer learns a function that depends on prior blocks (history entanglement) and state-rebuilt inference removes those blocks.
The \emph{residual-CNF $+$ SSA} and \emph{stripped $+$ SSA} cells satisfy only isolation and remain near random regardless of protocol; with per-variable state absent from the local block, the SSA mask blocks the cross-block attention path that causal would otherwise use to recover it.
The residual-CNF row additionally shows that localization constrains encoding granularity, not only which variables are present: the block must serialize the state in the form most directly relevant to the prediction target, not merely a form from which it is mathematically recoverable.

\subsection{Rule-Based Heuristic Baselines}
\label{sec:appendix-heuristic-detail}

\begin{table}[t]
\centering
\caption{Rule-based heuristic baselines vs.\ neural models on SAT $n{=}50$ and $n{=}75$. Heuristics query the environment directly; neural models read enriched state blocks under state-rebuilt inference.}
\label{tab:heuristic-baselines}
\setlength{\tabcolsep}{4pt}
\footnotesize
\input{tab_heuristic_baselines}
\end{table}

Three rule-based heuristics (occurrence $+$ domain, variable state independent decaying sum (VSIDS,~\citealp{moskewicz2001chaff}) $+$ domain, pure random) use the same search infrastructure but replace all neural predictions with deterministic rules and perform only reactive backtracking.
Structured heuristics solve every instance at every scale we test.
Pure random degrades at $n{=}75$ where backtracking depth stresses the fixed branching rule.
At budget 100 heuristic quality matters and performance separates.

\subsection{Verifier-Swap Analysis: Verification Methods and Oracle Decomposition}
\label{sec:appendix-verifier-swap}

This appendix decomposes the contribution of verification quality to end-to-end solve rate, complementing the verifier-only benchmark of \S\ref{sec:history-transplant} and the false-prune threshold of Appendix~\ref{sec:precision-barrier}.
Two analyses share a common protocol: each instance proceeds through the standard search infrastructure (unit propagation, backjump execution), and we vary either the verifier source while holding the branching policy fixed (Panel~A) or the branching policy while holding the verifier as a perfect oracle (Panel~B).

\paragraph{Panel A: verifier source.}
Holding the SSA branching policy fixed, we replace the source of the backtrack signal with five alternatives.
\emph{Propagation only} disables every learned verifier and lets the search backtrack only when constraint propagation exposes a contradiction.
\emph{Bounded CDCL sidecar (50)} and \emph{Bounded CDCL sidecar (100)} run a symbolic CDCL solver up to the indicated conflict budget at each branch point and adopt its dead-end verdict.
\emph{SSA learned backtrack token} reads the verification decision from the SSA model directly, the configuration used in \S\ref{sec:state-isolation}.
\emph{Perfect dead-end oracle} consults a complete symbolic dead-end check.
The \emph{gap closed} column reports the fraction of the perfect-oracle solve-rate gap that each verifier recovers, $(\text{solve} - \text{solve}_{\text{none}}) / (\text{solve}_{\text{oracle}} - \text{solve}_{\text{none}})$.
The SSA learned backtrack token closes a larger fraction of the oracle gap than the bounded CDCL sidecar, and combining the two recovers most of the remaining gap.

\paragraph{Panel B: branching policy.}
Holding the verifier as a perfect oracle, we vary the branching policy between SSA and cumulative-trained causal attention.
SSA $+$ oracle solves every instance.
Cumulative-trained causal $+$ oracle leaves a substantial deficit that perfect verification cannot repair, indicating that history entanglement degrades both the policy component and the verification component of the search.
The lift column reports the absolute solve-rate gain from adding the perfect oracle, decomposing the contribution of the verifier from the contribution of the branching policy.

\begin{table}[t]
\centering
\caption{Verifier-swap analysis on the primary 3-SAT setting.}
\label{tab:verifier-swap}
\footnotesize
\setlength{\tabcolsep}{4pt}
\resizebox{\linewidth}{!}{\input{tab_verifier_swap}}
\end{table}

\section{Architecture Details}
\label{sec:appendix-architecture}

\subsection{Slot Memory Design}

The slot memory architecture augments a decoder-only transformer with learnable auxiliary registers, injecting slots in the upper half of the network (layers $L/2$ to $L$).
Each augmented layer executes a Read-Process-Write cycle:
\begin{itemize}
    \item \textbf{Read:} slots cross-attend to token representations.
    \item \textbf{Process:} slot self-attention and a feed-forward network enable inter-slot communication.
    \item \textbf{Write:} tokens retrieve verification state via gated cross-attention (gate initialized near zero).
\end{itemize}
\textbf{The main SSA experiments (\S\ref{sec:ssa-experiments}) use only the next-token prediction objective over trace tokens, and no separate verification logit is optimized}; backtracking decisions during autonomous evaluation are read off the same token distribution as variable and value choices.
The delta-local variant in \S\ref{sec:delta-local} is the only configuration that adds an auxiliary verification head (with a full-sequence bidirectional pass).

\paragraph{CSP slot decoder.}
We prepend the learnable slot tokens to every input sequence.
The next-token objective requires that the prediction at any sequence position $i$ depends only on tokens at positions $\le i$; we enforce this by running each forward pass with the causal prefix truncated to position $i$, so slots never read future target tokens.
Slots under SSA act as problem-conditioned latent variables rather than cross-block summaries (\S\ref{sec:ssa}).

\paragraph{Training.}
Optimizer per \S\ref{sec:slot-memory-method}, plus weight decay 0.01, 30--50 epochs, batch size 4--16, dropout 0.1.
Seeds: 42 for default runs. For multi-seed evaluation, we use \{42, 123, 456, 789, 2024\}.

\subsection{Delta-Local Verification Head}
\label{sec:delta-local}

Conflict detection in graph coloring is a delta-local property.
After assigning node $u$, a new domain wipeout can only occur at an uncolored neighbor.
The delta-local head replaces the global verification head with per-neighbor predictions, aggregated via noisy-OR.
$p_{\text{conflict}} = 1 - \prod_{v \in N(u)} (1 - p_{\text{empty}}(v))$.
This reduces effective scope from $O(n)$ to $O(\text{deg}(u))$, consistent with the scope-reduction intuition from Section~\ref{sec:verification-bottleneck}.
Training on oracle traces alone yields high offline precision but low on-policy precision.
Hard-negative mining on model-visited states addresses this discrepancy.
With an oracle policy, the delta-local verifier captures most of the oracle upper bound.
Training uses focal loss ($\gamma{=}2.0$) on local labels and weighted binary cross-entropy on the global label.

\section{Trace Tokenization Examples}
\label{sec:appendix-tokenization}

This section gives concrete examples of the trace formats that the abstract decision blocks $B_t$ in Definition~\ref{def:problem-setting} and the SSA architecture in \S\ref{sec:ssa-architecture} act on.
The examples are produced by the same tokenizer code paths used at training and inference time, so the structure of each block matches the unit of attention isolation enforced by the SSA mask.

\paragraph{Vocabulary conventions.}
Special section markers are wrapped in brackets ($[\texttt{BOS}]$, $[\texttt{CLAUSES}]$, $[\texttt{GRAPH}]$, $[\texttt{SEARCH}]$, $[\texttt{PROP}]$, $[/\texttt{PROP}]$).
SAT uses signed literal tokens \texttt{+v$i$} and \texttt{-v$i$}, unsigned variable tokens \texttt{v$i$}, value tokens \texttt{T} (true), \texttt{F} (false), and \texttt{U} (unassigned), and verdict tokens \texttt{SAT\_OK}, \texttt{UNIT}, \texttt{CONFLICT}.
Graph coloring uses node tokens \texttt{N$i$}, color tokens \texttt{C$j$}, depth tokens \texttt{D$d$}, and bit-mask tokens \texttt{M$bbbb$} that encode the four-bit available-color set.
\texttt{STATE} opens a decision block, \texttt{SEP} separates intra-block fields, and \texttt{[BOS]}/\texttt{[EOS]} bookend the sequence.

\paragraph{Anatomy of a decision block.}
After the problem prefix $P$, every decision block $B_t$ contains four fields:
\begin{itemize}[nosep]
    \item \emph{candidate-set summary} (open after \texttt{STATE}): the unassigned variables (or nodes) and, for graph coloring, their domain sizes;
    \item \emph{propagation evidence} (between \texttt{[PROP]} and \texttt{[/PROP]}): the literal-by-literal verdict on a single inspected clause (SAT) or the per-color blocker analysis at the inspected node (graph coloring);
    \item \emph{action} (\texttt{v$i$ T} / \texttt{v$i$ F} / \texttt{N$i$ M$bbbb$ C$j$}): the variable--value pair the policy emits;
    \item \emph{outcome} (\texttt{OK}, \texttt{CONFLICT C$j$ BJ L$\ell$}, \texttt{SOLVED}, or \texttt{FAILED}): whether propagation accepts the assignment, exposes a conflict and triggers a backjump to level $\ell$, completes the search, or exhausts the tree.
\end{itemize}
A backtrack action is the \texttt{CONFLICT \dots BJ L$\ell$} suffix on a conflicted block; this is the binary continue-or-backtrack decision $Y_t$ (Equation~\eqref{eq:Y-definition}) the verification analysis in \S\ref{sec:verification-bottleneck} targets.
The SSA mask blocks attention from any token in $B_t$ to any token in a different block $B_j$ ($j \neq t$), and Proposition~\ref{prop:ssa-invariance} guarantees that hidden states inside $B_t$ depend only on $(P, B_t)$.

\input{tab_tokenization_examples}

\paragraph{Mapping to the trace-format factorial.}
The SAT example above is the \emph{enriched} format used by default in \S\ref{sec:state-isolation}: each \texttt{STATE} field lists the unassigned variables together with their post-propagation \texttt{T}/\texttt{F}/\texttt{U} domain annotations.
The \emph{stripped} variant (\S\ref{sec:scaffold-ladder}, Table~\ref{tab:scaffold-ladder}) replaces those domain tokens with a single masked placeholder \texttt{?}, leaving the candidate identifiers but removing per-variable forced/free status.
The \emph{residual-CNF} variant replaces the \texttt{STATE} field with a re-serialization of the simplified formula under the current partial assignment, encoding the same information at clause-level rather than per-variable granularity.
The graph-coloring \emph{enriched} variant (Appendix~\ref{sec:appendix-factorial}) adds locally valid colors to each block; the \texttt{M$bbbb$} mask token in the example above already exposes the per-node available-color set, so the local-legal token adds little when the most constrained node already has a unique remaining color.
The factorial in \S\ref{sec:scaffold-ladder} crosses these format axes against the SSA and causal masks, isolating the contributions of localization (controlled by which fields appear in \texttt{STATE}) and isolation (controlled by the attention mask on the boundaries shown above).

\section{Tree Traversal Experiments}
\label{sec:appendix-tree-traversal}

This appendix reports the full tree-traversal experiments that ground the localization fix of \S\ref{sec:tree-traversal-recap}.
The setting isolates the verification component as a binary continue-or-return decision at each parent node, with scope controlled by the branching factor $k$.

\subsection{BFS-versus-DFS Asymmetry}
\label{sec:appendix-bfs-dfs}

We generate random rooted trees with $20$ nodes drawn uniformly over branching factors $k \in \{2,3,4\}$ and train a $4$-layer decoder-only transformer ($d_\mathrm{model}{=}256$, $4$ heads, $\sim 1.5$M parameters) on $50{,}000$ traces per traversal order using next-token cross-entropy.
The same backbone trains separately on BFS and DFS traces.
A trace is \emph{perfect} if every emitted token matches the ground-truth visit order; we report the perfect-trace rate by epoch and the average correct-prefix length, the standard metric in autoregressive sequence modeling that measures how far the model gets before its first divergence.

\begin{figure}[t]
    \centering
    \begin{subfigure}[b]{0.48\linewidth}
        \centering
        \includegraphics[width=\linewidth]{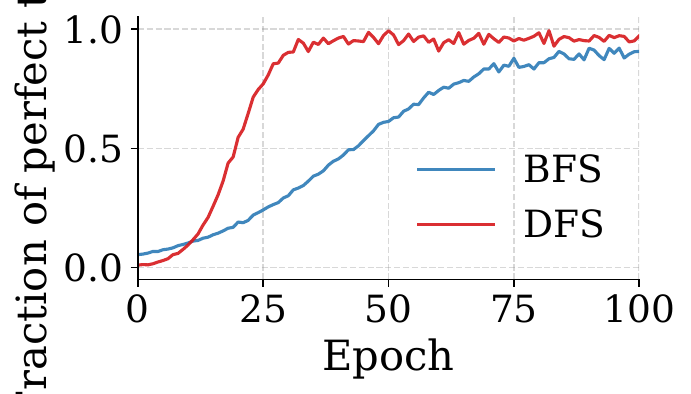}
        \caption{Perfect-trace rate.}
        \label{fig:bfs-dfs-perfect}
    \end{subfigure}
    \hfill
    \begin{subfigure}[b]{0.48\linewidth}
        \centering
        \includegraphics[width=\linewidth]{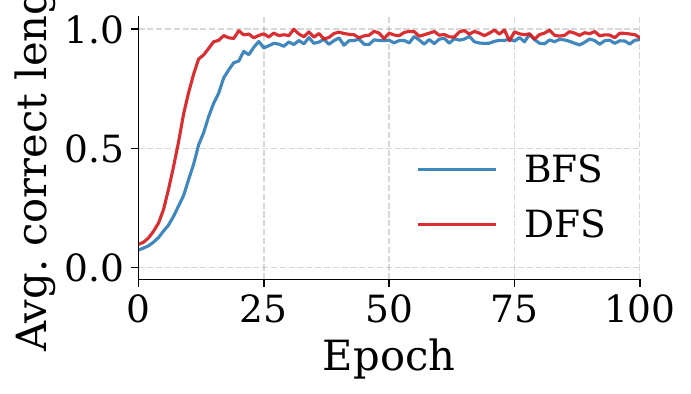}
        \caption{Average correct-prefix length.}
        \label{fig:bfs-dfs-correct}
    \end{subfigure}
    \caption{BFS and DFS training curves on tree traversal under the natural encoding.}
    \label{fig:bfs-dfs}
\end{figure}

DFS spends substantially more epochs in a low-accuracy regime than BFS even though both orders are deterministic functions of the input tree (Figure~\ref{fig:bfs-dfs}).
The asymmetry comes from the verification component, not the policy: BFS verification (is the queue empty?) reads from the most recent token, while DFS verification (have all children of the current parent been visited?) requires recombining $k$ pieces of information distributed across the trajectory.

\subsection{Trace Encodings and the Localization Fix}
\label{sec:appendix-trace-encodings}

Three encoding variants vary the visibility of the visited-children record in the local context.
The \emph{simple} encoding emits node identifiers in visit order with no auxiliary annotation; the visited-children record is implicit in the trajectory.
The \emph{mid} encoding emits a partial annotation that lists the current node's parent at each step but not the siblings.
The \emph{verbose} encoding (the localized variant of Figure~\ref{fig:trace-encodings}c) lists at each return the siblings already visited, exposing the verification scope as a contiguous span in the local context.
The verbose encoding is mathematically redundant given the simple encoding, but the redundancy localizes the relevant features in the current decision block.

\subsection{Star-Tree Sweep and the $\hat{r}^k$ Decay}

To control the verification scope precisely we use star trees where a single root has $k$ leaf children, sweep $k \in \{4, 8, 16, 32, 64, 128\}$, and train one model per $(k, \text{encoding})$ combination on $50{,}000$ trees.
At each return to the root the model decides whether all $k$ children have been visited; this is exactly the binary verification target with controlled scope $k$.
Under the simple encoding all-correct verification accuracy decays as $\hat{r}^k$ (Figure~\ref{fig:pk-sweep}, \S\ref{sec:tree-traversal-recap}) where $\hat{r}$ is the per-token retrieval accuracy estimated from the $k{=}4$ point: $\hat{r}$ remains close to one across most of the swept range, with a deviation only at the largest $k$ consistent with correlated retrieval errors at high scope.
At a fixed $k$, increasing trace verbosity from simple to mid to verbose monotonically improves accuracy, with the verbose encoding reaching near-perfect accuracy at every $k$ studied.

\subsection{Slot Memory Recovery}
\label{sec:appendix-slot-memory}

The verbose encoding addresses scattered retrieval at substantial token cost.
At $k{=}64$ the verbose trace incurs an order-of-magnitude token blowup over the simple encoding that scales linearly with $k$ (Table~\ref{tab:slot-comparison-k64}).
A parameter-efficient alternative is \emph{slot memory}: $n$ learnable register tokens prepended to the sequence that the model can read and write throughout the forward pass, providing a parameter-level analogue of localization.
Table~\ref{tab:slot-comparison-k64} compares four routes to recovering verification accuracy at $k{=}64$.

\begin{table}[t]
\centering
\caption{Verification methods on star trees with $k{=}64$. All methods that converge reach $\ge 99.9\%$ accuracy. They differ in token overhead, convergence speed, and oracle dependency. Copy budget uses an oracle-provided scratchpad; slots learn to maintain state without oracle intervention.}
\label{tab:slot-comparison-k64}
\footnotesize
\begin{tabular}{lcccc}
  \toprule
  \textbf{Method} & \textbf{Tokens} & \textbf{Final acc.} & \textbf{Ep.\ to 99\%} & \textbf{Oracle} \\
  \midrule
  Simple (no state)         & $455$               & $\ge 99.9\%$        & $20.0 \pm 3.2$    & --- \\
  Copy $B{=}1$              & $\sim 520$          & $\ge 99.9\%$        & $8.8 \pm 1.0$     & yes \\
  Slots ($n{=}32$)          & $\mathbf{455}$      & $\mathbf{100\%}$    & $\sim 35$         & --- \\
  Verbose                   & $7{,}402$           & $100\%$             & $<10$             & --- \\
  \bottomrule
\end{tabular}
\end{table}

Slot memory recovers full accuracy at the same token cost as the simple encoding, replacing trace-level localization with a parameter-level analogue.
At $k{=}128$, however, no slot configuration recovers accuracy: every slot count we test produces near-zero all-correct verification, and a larger backbone reduces policy loss to near zero while leaving verifier loss stuck (Table~\ref{tab:slot-scaling-k128}).
The $k{=}128$ failure separates retrieval from learnability: slots address the retrieval bottleneck but not the optimization difficulty of recombining $128$ scattered features through training-loss minimization alone.

\begin{table}[t]
\centering
\caption{Slot scaling at $k{=}128$.}
\label{tab:slot-scaling-k128}
\footnotesize
\begin{tabular}{lccc}
  \toprule
  \textbf{Configuration}                & \textbf{All-correct} & \textbf{Policy loss} & \textbf{Verifier loss} \\
  \midrule
  $32$ slots, standard                  & $1.6\%$              & $1.138$              & $0.471$ \\
  $64$ slots, standard                  & $1.6\%$              & $1.141$              & $0.473$ \\
  $128$ slots, standard                 & $1.6\%$              & $1.146$              & $0.477$ \\
  $32$ slots, $d{=}512$, $6$ layers     & $1.8\%$              & $0.021$              & $0.405$ \\
  \bottomrule
\end{tabular}
\end{table}

\subsection{Connection to Backtracking Search}

The aliasing--entanglement decomposition of Appendix~\ref{sec:appendix-quantitative-theory} places scattered retrieval in the aliasing term of Eq.~\eqref{eq:aliasing-entanglement}, capturing state information missing from $T$ either because it is unrecoverable, scattered, or encoded at the wrong granularity.
History entanglement falls in the entanglement term of the same decomposition and requires an architectural rather than a trace-level fix.
\S\ref{sec:scaffold-ladder} verifies that the two requirements are independently necessary by crossing the trace-format and attention-mask axes.

\input{ext_precision_barrier}

\section{Blocks World}
\label{sec:appendix-bw}

Blocks World shows the same qualitative pattern as the constraint-satisfaction domains: SSA solve rate continues to increase with budget while causal attention plateaus.

\paragraph{Setup.}
We generate Blocks World instances with 5 or 7 blocks and a randomly sampled goal configuration.
Each decision block serializes the current block stacks, the gripper state, and the history of attempted actions.
Training follows the shared setup (\S\ref{sec:slot-memory-method}) at each scale.

\begin{table}[t]
\centering
\caption{Blocks World budget sweep on the 5-block setting (single seed).}
\label{tab:ssa-bw}
\footnotesize
\input{tab_ssa_bw}
\end{table}

\paragraph{Multi-seed comparison.}
SSA substantially outperforms causal at both 5 and 7 blocks (Table~\ref{tab:bw-7blocks}).
The gap narrows at 7 blocks, consistent with deeper planning stressing both architectures.

\begin{table}[t]
\centering
\caption{Blocks World multi-seed comparison at 5 and 7 blocks. Paired-$t$ $p$-values test the SSA-vs-Causal difference.}
\label{tab:bw-7blocks}
\footnotesize
\input{tab_bw_7blocks}
\end{table}

\section{Supplementary SSA Experiments}
\label{sec:appendix-ssa-supplementary}

\subsection{Graph Coloring: Budget Sweep and Scale-Up}
\label{sec:appendix-gc-tables}

Sweeping the inference budget on graph coloring at $n{=}30$ confirms that the SSA advantage is robust to budget choice (Table~\ref{tab:ssa-budget-sweep}).
The advantage persists from $n{=}30$ to $n{=}50$ (Table~\ref{tab:gc-scale}).

\begin{table}[t]
\centering
\caption{SSA budget sweep on graph coloring (single seed).}
\label{tab:ssa-budget-sweep}
\input{tab_ssa_budget_sweep.tex}
\end{table}

\begin{table}[t]
\centering
\caption{SSA advantage across graph coloring scales (budget 8192; $n{=}50$ uses 3 seeds).}
\label{tab:gc-scale}
\input{tab_gc_scale.tex}
\end{table}

\subsection{Mask Component Ablation}
\label{sec:ssa-ablation}

We evaluate five attention modes that independently vary problem-prefix access and trajectory blocking, all applied to the same SSA-trained weights (Table~\ref{tab:ssa-ablation}).
Blocking trajectory access has a larger effect than providing exact problem access, but both are necessary for full performance.
Random masking at matched sparsity performs poorly, which confirms the improvement is structural rather than a sparsity artifact.

\begin{table}[t]
\centering
\caption{Inference-mask sensitivity study on graph coloring.}
\label{tab:ssa-ablation}
\footnotesize
\input{tab_ssa_ablation.tex}
\end{table}

\subsection{Mask $\times$ Slots Factorial}
\label{sec:appendix-slot-mask}

The slot $\times$ mask factorial supporting the alternatives discussion in \S\ref{sec:alternatives} retrains each architecture with and without slot registers, holding the mask fixed (Table~\ref{tab:slot-mask-factorial}).
SSA's gain over causal is at least as large without slots on both SAT and graph coloring, and SSA itself improves slightly when slots are removed.
Under causal attention removing slots reduces SAT solve rate, consistent with their function as a global communication channel that the cumulative trace exploits.
The mask, not the slots, accounts for SSA's gains.

\begin{table}[t]
\centering
\caption{Mask $\times$ slots factorial. Cells report autonomous solve rate (\%, mean $\pm$ std). The slot effect is the change from removing the slot registers, holding the mask fixed; the mask effect compares SSA versus causal at fixed slot count. The SSA cells without slot registers are bolded.}
\label{tab:slot-mask-factorial}
\footnotesize
\input{tab_slot_mask_factorial}
\end{table}

\subsection{Isolating Mask from Positional Encoding}
\label{sec:ssa-position-ablation}

A crossed $2{\times}2$ ablation independently varies mask type and position scheme.
The attention mask accounts for the entire improvement, and block-relative positional reset contributes nothing (Table~\ref{tab:position-ablation}).

\begin{table}[t]
\centering
\caption{Crossed mask $\times$ position ablation on graph coloring.}
\label{tab:position-ablation}
\input{tab_position_ablation.tex}
\end{table}

\subsection{Length-OOD and Padding-Control Counterfactual}
\label{sec:appendix-padding-control}
\label{sec:length-ood}

If SSA enforces structural invariance to prior history rather than merely fitting the cumulative protocol better, it should also reduce sensitivity to the search-horizon length used during training.
We retrain both SSA and causal on cumulative traces truncated to the first $K_{\text{train}}{=}8$ decision blocks, preserving the full positional embedding table and equalizing supervised token counts; the resulting ``-short'' models contrast with ``-full'' models trained on untruncated traces as in-distribution references.
Under matched token budget and in-support accuracy, SSA substantially exceeds causal once the inference context exceeds the training-support length: SSA-short continues to improve with budget while causal-short plateaus, and the plateau is not budget exhaustion (Figure~\ref{fig:length_ood_holdout}).

\begin{figure}[t]
\centering
\includegraphics[width=\linewidth]{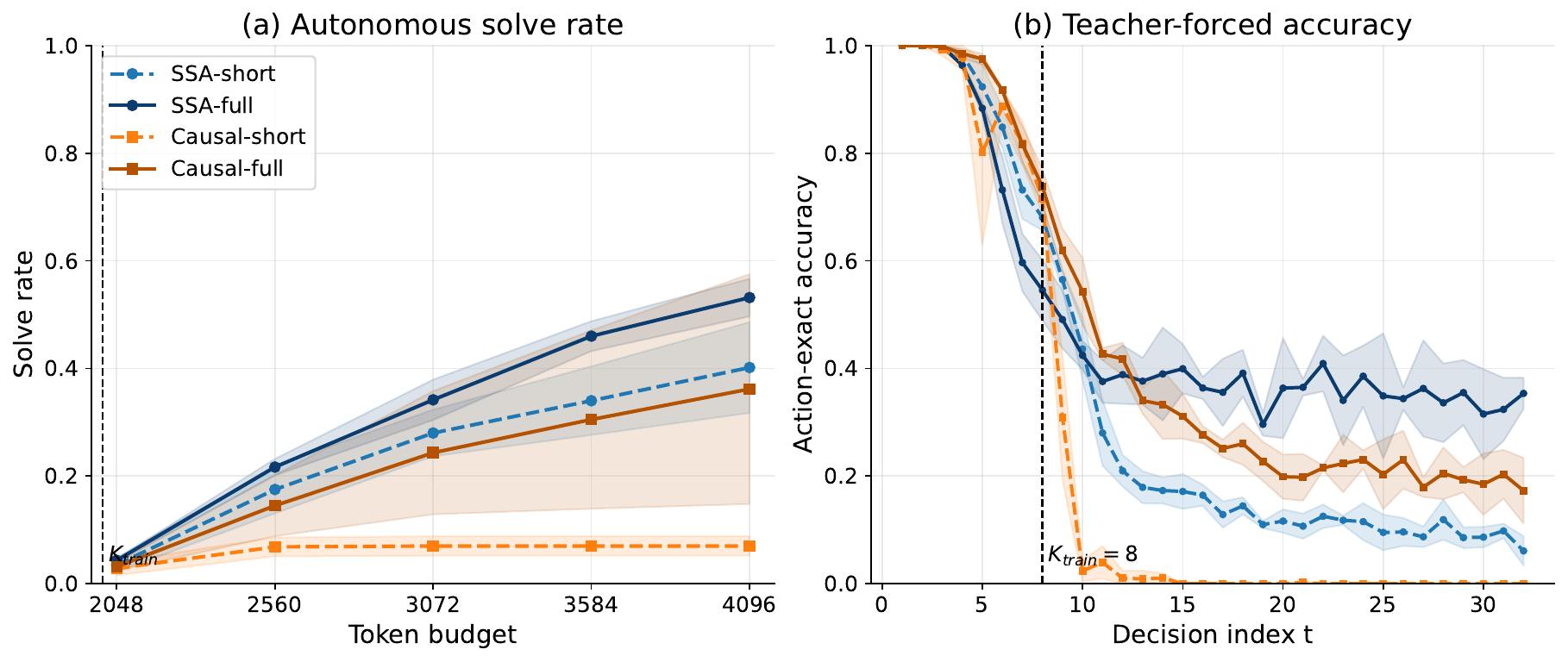}
\caption{Length-OOD comparison (3 seeds, held-out validation, shaded regions denote $\pm 1$ seed std). \textbf{(a)} Autonomous solve rate vs.\ token budget under cumulative inference. \textbf{(b)} Teacher-forced action-exact accuracy vs.\ decision depth. Dashed vertical lines mark the training-support boundary.}
\label{fig:length_ood_holdout}
\end{figure}

\paragraph{Padding-control counterfactual.}
To separate the length-OOD mechanism from positional extrapolation, we run a padding-control counterfactual on the $K_{\text{train}}{=}8$-truncated models above.
For each held-out shallow state ($t \leq K_{\text{train}}$), we insert three decision blocks from unrelated instances between the problem prefix and the current state, and measure argmax agreement against the unpadded baseline.
SSA agreement is exact across $1800$ trials under block-relative positions (the mask makes donor blocks mathematically invisible) and remains near-perfect under standard positions, with a small residual attributable to positional extrapolation rather than entanglement (Figure~\ref{fig:length_ood_padding_control}).
Causal-short's agreement is near zero regardless of position mode, because its predictions depend on prior-history content.

\begin{figure}[t]
\centering
\includegraphics[width=0.85\linewidth]{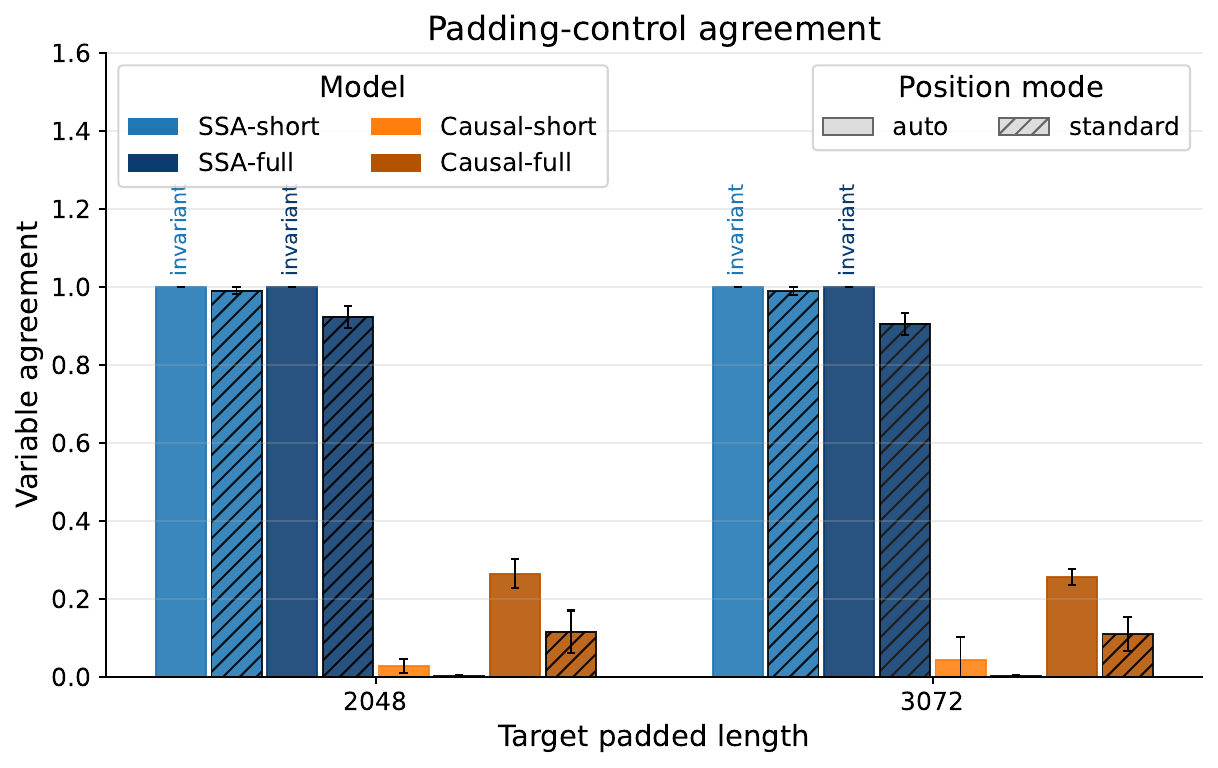}
\caption{Padding-control counterfactual (3 seeds, 600 trials per model and target length, error bars: $\pm 1$ seed std). Argmax agreement (variable choice) against the unpadded baseline.}
\label{fig:length_ood_padding_control}
\end{figure}

\subsection{Isolated Reactive Verification Experiment}
\label{sec:appendix-branch-verification}

\begin{table}[t]
\centering
\caption{Isolated reactive verification on exposed conflicts (oracle node selection).}
\label{tab:verification-experiment}
\footnotesize
\begin{tabular}{l l c c c c}
\toprule
\textbf{Domain} & \textbf{Mask} & \textbf{Accuracy $\uparrow$} & \textbf{Repeat $\downarrow$} & \textbf{False BT $\downarrow$} & \textbf{Solve \%} \\
\midrule
\multirow{2}{*}{GC ($n{=}30$)} & SSA & $\mathbf{99.5 \pm 0.0}$\% & $\mathbf{0.0}$\% & $\mathbf{0.0}$\% & $\mathbf{38.5}$\% \\
& Causal & $37.2 \pm 1.0$\% & $0.2$\% & $34.3 \pm 1.1$\% & $0.0$\% \\
\midrule
\multirow{2}{*}{SAT ($n{=}30$)} & SSA & $\mathbf{100.0 \pm 0.0}$\% & $\mathbf{0.0}$\% & $\mathbf{0.0}$\% & $\mathbf{16.5}$\% \\
& Causal & $68.3 \pm 36.3$\% & $13.3$\% & $30.7 \pm 35.9$\% & $12.2$\% \\
\bottomrule
\end{tabular}
\end{table}

Table~\ref{tab:verification-experiment} reports isolated reactive verification.
SSA achieves near-perfect accuracy on both domains across all model seeds.
Solve rate remains moderate because the experiment isolates one component (the oracle selects the branch variable, the model only decides \emph{continue} or \emph{backtrack}), and neither model can solve instances that require proactive detection of dead ends in locally consistent states.

\subsection{Learned Proactive Verifier}
\label{sec:appendix-proactive-verifier}

To probe the proactive limit referenced in \S\ref{sec:scope-and-limits}, we train two dead-end classifiers on ground-truth viability labels: a shared MLP using the same per-literal feature set as the Panel~A baseline, and a small transformer reading the canonical state block.
Both classifiers produce zero proactive backtrack interventions in the SSA search loop at default thresholds, and a threshold sweep recovers no benefit: at low thresholds the classifier never fires, and at high thresholds it fires often enough that the solve rate \emph{drops} because viable branches are removed.
Only two of five SSA seeds show any learning signal at all.
Three factors compound to make this hard: class imbalance under positive-class weighting; a train-deployment state-distribution mismatch (offline traces never visit the states an SSA solver actually reaches near the correct branch); and the available features possibly not exposing what makes a state existentially dead when SSA reaches it.
The reactive verifier described in the main text avoids all three by learning verification implicitly from the same cumulative traces used to train the branching policy.

\section{SSA Mask Variants and Positional Embeddings}
\label{sec:appendix-ssa-variants}

In addition to the selective SSA mask defined in Section~\ref{sec:ssa}, we evaluate two variants.

\paragraph{Blanket SSA.}
A \emph{blanket} SSA variant further blocks decision blocks from attending to the problem prefix.
This forces each block to rely entirely on its local state representation.

\paragraph{Sliding-window with prefix access (SWA-prefix).}
This variant is a \emph{mask topology}, distinct from the data-level sliding-window baseline of \S\ref{sec:alternatives}.
Each decision block attends to the problem prefix and to the most recent $W$ decision-block tokens, but not to earlier blocks.
SWA-prefix provides a local history window rather than full isolation.
We compare these variants in \S\ref{sec:ssa-baselines} and Appendix~\ref{sec:appendix-random-sat}.

\paragraph{Block-relative positional embeddings.}
SSA optionally resets positional indices at each block boundary.
Each decision block $B_i$ uses positions $|P|, |P|+1, \ldots, |P|+|B_i|-1$ regardless of $i$.
A crossed ablation (Appendix~\ref{sec:ssa-position-ablation}) isolates the contribution of each factor.

\section{Mask Specialization and Trace Representation}
\label{sec:appendix-mask-trace}

This section reports two diagnostic experiments that characterize the interaction between attention mask type and trace representation.

\subsection{Cross-Mask Evaluation}
\label{sec:appendix-crossmask}

We test whether the blanket-vs-selective performance difference is an inference-time masking effect or a training-time representation effect.
We evaluate models trained with one mask type using a different mask at inference, with a single seed.

\begin{table}[t]
\centering
\caption{Cross-mask evaluation (single seed).}
\label{tab:crossmask}
\footnotesize
\input{tab_crossmask}
\end{table}

Table~\ref{tab:crossmask} shows that performance degrades substantially in both directions and both domains.
This rules out an inference-time attention-dilution explanation, since models learn different internal representations under different training masks.
On SAT specifically, blanket-trained weights tolerate the selective mask better than the reverse, consistent with blanket training producing robust local-state features while selective training depends on prefix attention.

\subsection{Blanket-vs-Selective SSA Across Trace Formats}
\label{sec:appendix-factorial}

To test whether the relative advantage of blanket versus selective SSA depends on how much constraint information is serialized in each decision block, we run a controlled factorial experiment.
For GC, we generate an ``enriched'' trace variant that adds locally valid colors (\textsc{local\_legal}) to each decision block, and a ``stripped'' variant without these tokens.
For SAT, we generate a ``stripped'' variant that replaces the per-variable domain annotations (T/F/U) with a masked placeholder token (?), using the same underlying search trajectories as the enriched traces.
We train blanket and selective SSA models on each variant (5 seeds $\times$ 2 masks $\times$ 2 trace types $\times$ 2 domains $=$ 40 models) and evaluate autonomously.

\begin{table}[t]
\centering
\caption{Trace format $\times$ blanket-vs-selective SSA factorial.}
\label{tab:factorial}
\footnotesize
\input{tab_factorial}
\end{table}

Table~\ref{tab:factorial} reports the results.
The dominant finding is that trace representation matters far more than mask choice.
Removing SAT domain annotations reduces performance to near zero regardless of mask, because those annotations are structurally necessary for the agent to identify forced assignments.
The blanket-minus-selective interaction on SAT shifts in the predicted direction but is not statistically significant.
The graph-coloring interaction is null, and adding \textsc{local\_legal} tokens to graph-coloring traces does not help because most graph-coloring decision blocks already have exactly one valid color.

\subsection{Bootstrap Confidence Intervals and Per-Seed Results}
\label{sec:appendix-bootstrap}

Table~\ref{tab:bootstrap-cis} reports paired bootstrap~\citep{efron1979bootstrap} 95\% confidence intervals (10{,}000 iterations) for the key comparisons in the train-from-scratch ablation.
The SSA-versus-causal comparison excludes zero on both domains, which confirms the robustness of the primary finding.
The selective-versus-blanket comparison does not exclude zero on graph coloring and favors blanket on SAT, consistent with the analysis in Section~\ref{sec:ssa-baselines}.
The bootstrap CIs confirm that SSA outperforms the SWA-Prefix baseline on both domains.

\begin{table}[t]
\centering
\caption{Bootstrap 95\% confidence intervals for paired seed differences in the train-from-scratch ablation. CIs that exclude zero are bolded.}
\label{tab:bootstrap-cis}
\footnotesize
\input{tab_bootstrap_cis}
\end{table}

\begin{figure}[t]
    \centering
    \begin{subfigure}[t]{0.48\textwidth}
        \centering
        \includegraphics[width=\textwidth]{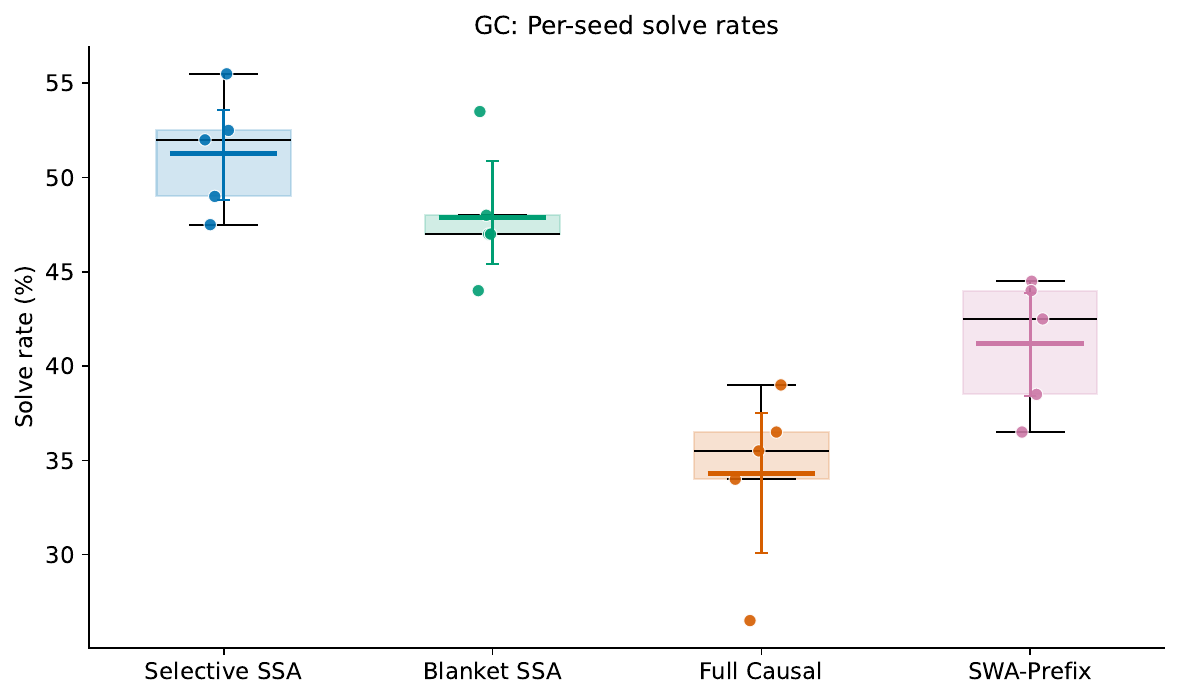}
        \caption{GC.}
        \label{fig:perseed-gc}
    \end{subfigure}
    \hfill
    \begin{subfigure}[t]{0.48\textwidth}
        \centering
        \includegraphics[width=\textwidth]{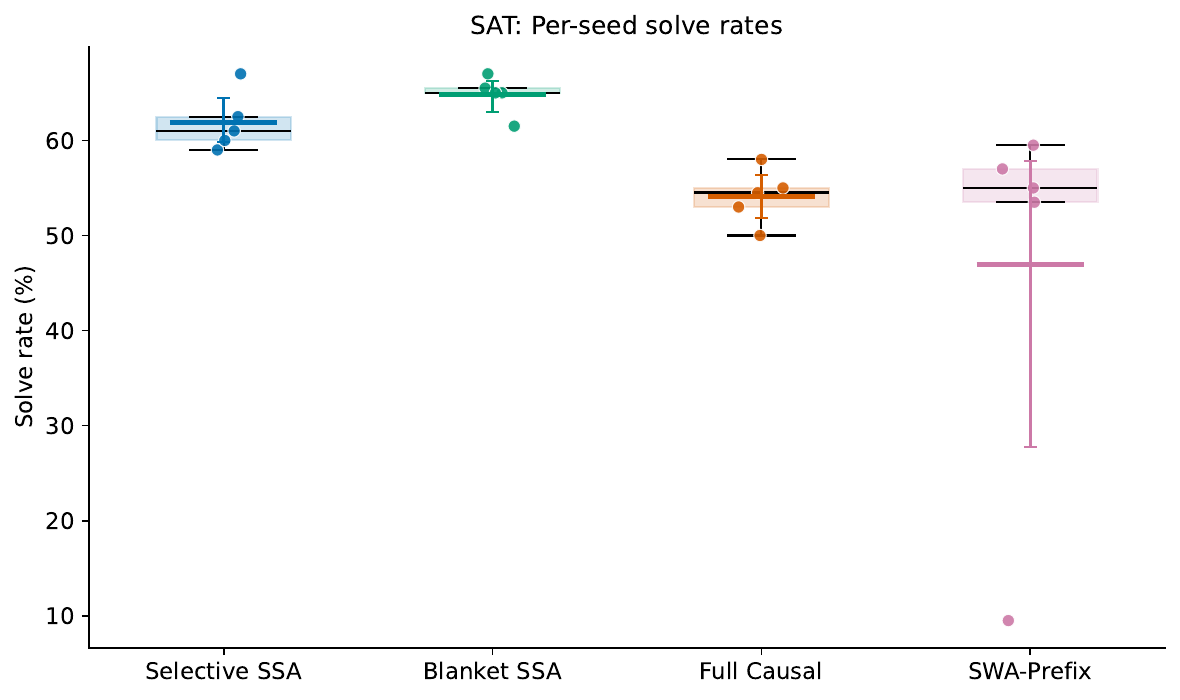}
        \caption{SAT. SWA-Prefix has high variance due to training instability on one of the seeds.}
        \label{fig:perseed-sat}
    \end{subfigure}
    \caption{Per-seed solve rates for the train-from-scratch ablation. Error bars: bootstrap 95\% CIs.}
    \label{fig:perseed}
\end{figure}

\subsection{Multi-Instance-Seed Variance}
\label{sec:appendix-multi-instance-seed}

We re-evaluate the same $5$ SSA and $5$ causal checkpoints on three additional test-instance seeds ($100$, $200$, $300$) under both inference protocols, each with $200$ planted instances ($15$ pooled runs per cell, $3{,}000$ per-instance outcomes). The state-rebuilt SSA advantage is insensitive to the test-instance seed (Table~\ref{tab:multi-instance-seed}), and cumulative-inference solve rates are moderately higher on the new seeds than on seed $42$, indicating that seed $42$ generates slightly harder instances for cumulative inference while the state-rebuilt advantage is regime-independent.

\begin{table}[t]
\centering
\caption{Multi-instance-seed robustness check on SAT. Each cell reports mean $\pm$ std across model seeds for the specified test seed.}
\label{tab:multi-instance-seed}
\footnotesize
\begin{tabular}{l l c c c c}
\toprule
\textbf{Model} & \textbf{Protocol} & \textbf{Seed $100$} & \textbf{Seed $200$} & \textbf{Seed $300$} & \textbf{Pooled ($15$ runs)} \\
\midrule
SSA    & Cumulative    & $35.4 \pm 2.7$ & $34.4 \pm 1.6$ & $37.9 \pm 2.9$ & $\mathbf{35.9 \pm 2.7}$ \\
Causal & Cumulative    & $19.8 \pm 13.7$ & $22.2 \pm 14.5$ & $23.1 \pm 14.8$ & $21.7 \pm 13.3$ \\
\midrule
SSA    & State-rebuilt & $90.9 \pm 2.4$ & $94.1 \pm 2.2$ & $91.6 \pm 2.5$ & $\mathbf{92.2 \pm 2.6}$ \\
Causal & State-rebuilt & $\phantom{0}3.6 \pm 1.1$ & $\phantom{0}4.4 \pm 1.2$ & $\phantom{0}4.8 \pm 1.3$ & $\phantom{0}4.3 \pm 1.2$ \\
\bottomrule
\end{tabular}
\end{table}

\subsection{Compute Accounting}
\label{sec:compute-accounting}

Table~\ref{tab:compute-accounting} compares per-method cost.
State-rebuilt and MLP methods process a fixed-length state representation at each decision ($L_{\mathrm{state}} \approx 1{,}300$ tokens), whereas cumulative methods process the growing trajectory ($L_{\mathrm{cum}}$ up to $8{,}192$).
Typical decision counts are $D \approx 95$ for MLP and ${\sim}30$ for SSA cumulative.
The CDCL sidecar adds two satisfiability probes per decision at ${\sim}0.1$\,ms each on an A6000 GPU, negligible relative to the neural forward-pass cost (${\sim}5$\,ms) at this scale.

\begin{table}[t]
\centering
\caption{Compute accounting per evaluation method. $D$ is the decision count, $L_{\mathrm{cum}}$ the cumulative sequence length (up to $8{,}192$), and $L_{\mathrm{state}} \approx 1{,}300$ the state-only length.}
\label{tab:compute-accounting}
\footnotesize
\input{tab_compute_accounting}
\end{table}

\section{Phase-Transition Random SAT}
\label{sec:appendix-random-sat}

To test whether the SSA advantage holds on harder instances, we train and evaluate on random 3-SAT at the phase transition~\citep{mitchell1992hard,crawford1996experimental} ($\alpha{=}4.26$, primary scale).
At this ratio, approximately half of all instances are unsatisfiable and typical solver runtime peaks.
We use only satisfiable instances for evaluation and verify them with PySAT Glucose4 before collection.

\begin{table}[t]
\centering
\caption{Planted ($\alpha=4.0$) vs.\ random phase-transition SAT ($\alpha=4.26$).}
\label{tab:random-sat}
\footnotesize
\input{tab_random_sat}
\end{table}

SSA maintains the highest solve rate and lowest variance on random SAT (Table~\ref{tab:random-sat}).
The hierarchy SSA $>$ SWA-prefix $>$ Causal is preserved.

\section{Empirical History Irrelevance and Bounded Symbolic Verifier}
\label{sec:appendix-history-irrelevance}
\label{sec:appendix-oracle-verifier}

\paragraph{Empirical history irrelevance.}
The state-equivalence framework assumes that the oracle action is independent of trajectory history given the current state ($Y \perp H \mid T$).
We give a bounded test of this assumption using logistic regression on state features $T$ alone versus $(T, H)$ jointly; a linear probe can only rule out the linearly recoverable component of conditional dependence, not the full conditional mutual information $I(Y; H \mid T)$.
Adding history features yields only a marginal lift under a linear probe (Table~\ref{tab:history-irrelevance}); the stronger behavioral evidence comes from the transplant diagnostic (\S\ref{sec:history-transplant}).

\begin{table}[t]
\centering
\caption{Empirical history-irrelevance test (500 traces each, 80/20 train/test split).}
\label{tab:history-irrelevance}
\footnotesize
\input{tab_history_irrelevance}
\end{table}

\paragraph{Bounded symbolic verifier.}
We pair each SSA checkpoint with a symbolic dead-end detection module that runs unit propagation and a bounded CDCL solver (Table~\ref{tab:bounded-verification}).
A small conflict budget captures the bulk of the improvement, indicating that most detectable dead ends are shallow.

\begin{table}[t]
\centering
\caption{Bounded symbolic verifier on SAT. ``UP'' abbreviates unit propagation.}
\label{tab:bounded-verification}
\footnotesize
\begin{tabular}{l c c c c}
\toprule
\textbf{Verif.\ budget} & \textbf{Solve \%} & \textbf{$\Delta$ vs.\ UP} & \textbf{Pol.\ forces} & \textbf{Timeout \%} \\
\midrule
0 (UP only) & $31.1 \pm 2.4$ & --- & 113 & $53.3 \pm 3.7$ \\
10 conflicts & $45.9 \pm 1.8$ & $+14.8$ pp & 276 & $48.6 \pm 3.3$ \\
50 conflicts & $\mathbf{79.1 \pm 3.0}$ & $+48.0$ pp & 493 & $20.9 \pm 3.0$ \\
100 conflicts & $80.1 \pm 3.0$ & $+49.0$ pp & 504 & $19.9 \pm 3.0$ \\
Perfect oracle & $100.0 \pm 0.0$ & $+68.9$ pp & --- & $0.0$ \\
\bottomrule
\end{tabular}
\end{table}

\section{Verifier-Only Benchmark: Protocol, Calibration, and Per-Condition Results}
\label{sec:appendix-verifier-calibration}
\label{sec:appendix-verifier-extended}

This appendix details the verifier-only state-equivalence benchmark of \S\ref{sec:history-transplant} (probe-bank construction, exact metric definitions, and full per-condition results).

\paragraph{Protocol.}
For each held-out canonical state $S$ from $200$ stochastic SAT rollouts ($300$ for graph coloring), we collect $k\!\geq\!2$ histories $H_1,\dots,H_k$ that all reach $S$.
Each model is fed $(\textsc{prefix}, H_i, S)$ in two protocols, \emph{cumulative} (the training format) and \emph{state-rebuilt} (history removed; \S\ref{sec:state-isolation}).
The binary backtrack mass at the position immediately following the state block is read as
\begin{equation}
\label{eq:p-bt}
p_{\mathrm{bt}}(M, H, S) \,=\, \frac{P_M(\textsc{conflict}\,\mid\,\textsc{prefix},\,H,\,S)}{P_M(\textsc{conflict}\,\mid\,\cdot) + \sum_{a \in \mathcal{A}_{\mathrm{cont}}(S)} P_M(a \mid \cdot)},
\end{equation}
restricted to admissible action tokens $\mathcal{A}_{\mathrm{cont}}(S)$ (selectable variables with non-empty effective domain).
The oracle viability label is $y\!=\!1$ when propagation has exposed a contradiction in $S$ and $y\!=\!0$ otherwise; this is the strict reactive scope.
The benchmark is fully model-independent: the probe bank caches $1{,}314$ canonical states for SAT $n{=}50$ ($1{,}201$ viable, $113$ exposed-conflict, $2{,}747$ transplant pairs) and $682$ for graph coloring ($427$ viable, $255$ exposed-conflict, $2{,}212$ pairs), and the same bank is reused for every checkpoint and protocol.

\paragraph{Metrics.}
Three quality metrics are reported with oracle labels: false-prune rate $\alpha_v = \Pr(p_{\mathrm{bt}}\!>\!0.5 \mid y\!=\!0)$, missed-conflict rate $\beta = \Pr(p_{\mathrm{bt}}\!\leq\!0.5 \mid y\!=\!1)$, and AUROC of $p_{\mathrm{bt}}$ against $y$.
Calibration is reported through expected calibration error (ECE) with $15$ equal-mass bins, Brier score, and area under the precision-recall curve (AUPRC).
The protocol gap $\Delta\mathrm{AUROC}$ defined in \S\ref{sec:history-transplant} is identically zero for any predictor that depends only on the canonical state (Proposition~\ref{prop:ssa-invariance}).

\begin{table}[t]
\centering
\caption{Verifier-only benchmark: full per-condition metrics across all conditions, domains, and protocols.}
\label{tab:verifier-calibration}
\footnotesize
\setlength{\tabcolsep}{3pt}
\resizebox{\textwidth}{!}{\input{tab_verifier_calibration}}
\end{table}

%% file: tab_scaffold_ladder.tex
\begin{tabular}{ll cc cc}
\toprule
& & \multicolumn{2}{c}{\textbf{Cumulative}} & \multicolumn{2}{c}{\textbf{State-rebuilt}} \\
\cmidrule(lr){3-4} \cmidrule(lr){5-6}
\textbf{State representation} & \textbf{Mask} & \textbf{Solve (\%)} & \textbf{Timeout (\%)} & \textbf{Solve (\%)} & \textbf{Timeout (\%)} \\
\midrule
\multirow{2}{*}{Enriched (UP annotations)} & SSA & $28.9 \pm 1.9$ & $70.3$ & $\mathbf{93.4 \pm 2.3}$ & $6.6$ \\
 & Causal & $16.5 \pm 11.2$ & $46.7$ & $4.8 \pm 1.2$ & $0.0$ \\
\midrule
\multirow{2}{*}{Residual-CNF (25 clauses)} & SSA & $4.7 \pm 0.5$ & $0.0$ & $4.7 \pm 0.5$ & $0.0$ \\
 & Causal & $19.8 \pm 4.0$ & $60.5$ & $4.1 \pm 1.4$ & $0.0$ \\
\midrule
\multirow{2}{*}{Stripped (variable IDs only)} & SSA & $4.7 \pm 1.3$ & $0.0$ & --- & --- \\
 & Causal & $18.2 \pm 4.0$ & $25.1$ & --- & --- \\
\bottomrule
\end{tabular}

%% file: tab_heuristic_baselines.tex
\begin{tabular}{l l c c c}
\toprule
\textbf{Method} & \textbf{State} & \multicolumn{3}{c}{\textbf{Solve \%}} \\
\cmidrule(lr){3-5}
 & & \textbf{$n{=}50$ planted} & \textbf{$n{=}50$ random} & \textbf{$n{=}75$ random} \\
 & & $\alpha{=}4.0$ & $\alpha{=}4.26$ & $\alpha{=}4.26$ \\
\midrule
\multicolumn{5}{l}{\textit{Rule-based heuristics (direct environment access, no learning)}} \\
\quad Occurrence + domain & Env & $100.0$ & $100.0$ & $100.0$ \\
\quad VSIDS + domain       & Env & $100.0$ & $100.0$ & $100.0$ \\
\quad Pure random          & Env & $100.0$ & $100.0$ & $\phantom{0}57.5$ \\
\midrule
\multicolumn{5}{l}{\textit{Neural models (enriched state tokens, trace-autonomous)}} \\
\quad MLP (state-only) & Enriched & $99.2 \pm 0.8$ & --- & --- \\
\quad SSA (state-rebuilt) & Enriched & $92.2 \pm 2.6^{\dagger}$ & --- & --- \\
\quad Causal (state-rebuilt) & Enriched & $\phantom{0}4.3 \pm 1.2^{\dagger}$ & --- & --- \\
\quad SSA (cumulative) & Enriched & $35.9 \pm 2.7^{\dagger}$ & $25.6 \pm 1.6$ & $11.4 \pm 1.4$ \\
\quad Causal (cumulative) & Enriched & $21.7 \pm 13.3^{\dagger}$ & $16.0 \pm 7.2$ & $\phantom{0}6.4 \pm 4.7$ \\
\bottomrule
\end{tabular}
{\scriptsize $^{\dagger}$Pooled over $3$ test seeds $\times$ $5$ model seeds (\S\ref{sec:appendix-multi-instance-seed}).}

%% file: tab_verifier_swap.tex
% Generated by scripts/tables/gen_tab_flagship.py
% Bolding policy: none. See the script docstring for the rationale.
\begin{tabular}{l c ccc}
\toprule
\multicolumn{5}{l}{\textbf{A.} \textit{Verification methods} (fixed SSA branching policy)} \\
\midrule
\textbf{Verifier} & \textbf{Supervision} & \multicolumn{2}{c}{\textbf{Solve \%}} & \textbf{Gap closed} \\
\cmidrule(lr){1-5}
Propagation only (no learned verifier) & --- & \multicolumn{2}{c}{$32.2 \pm 2.8$} & $0.0\%$ \\
Bounded CDCL sidecar (50) & symbolic & \multicolumn{2}{c}{$79.1 \pm 3.0$} & $69.2\%$ \\
Bounded CDCL sidecar (100) & symbolic & \multicolumn{2}{c}{$80.1 \pm 3.0$} & $70.6\%$ \\
SSA learned backtrack token & traces & \multicolumn{2}{c}{$93.4 \pm 2.3$} & $90.3\%$ \\
Perfect dead-end oracle & perfect & \multicolumn{2}{c}{$100.0 \pm 0.0$} & $100.0\%$ \\
\midrule
\multicolumn{5}{l}{\textbf{B.} \textit{Oracle decomposition}} \\
\midrule
& & \textbf{No oracle} & \textbf{+ Perfect oracle} & \textbf{Lift (pp)} \\
\cmidrule(lr){3-5}
SSA & & $28.9 \pm 1.9$ & $100.0 \pm 0.0$ & $+71.1$ \\
Causal & & $16.5 \pm 11.2$ & $46.3 \pm 38.0$ & $+29.8$ \\
\bottomrule
\end{tabular}

%% file: tab_tokenization_examples.tex
% Auto-generated tokenization examples.
% Do not edit by hand; rerun the generator to regenerate.

\paragraph{SAT, satisfying trace ($n=3$, $4$ clauses).}
\begin{footnotesize}
\begin{verbatim}
# Problem prefix P
[BOS] [CLAUSES] C0 : +v0 -v1 +v2 SEP C1 : -v0 +v1 +v2 SEP C2 : +v0 +v1 -v2
SEP C3 : -v0 -v1 +v2 SEP [SEARCH]

# Decision block B1
STATE v0 U v1 U v2 U SEP [PROP] C0 : +v0 -v1 +v2 SEP +v0 U -v1 U +v2 U SEP
SAT_OK [/PROP] v0 T OK v0 T

# Decision block B2
STATE v1 U v2 U SEP [PROP] C1 : -v0 +v1 +v2 SEP -v0 F +v1 U +v2 U SEP
SAT_OK [/PROP] v1 T OK v1 T

# Decision block B3
STATE v2 U SEP [PROP] C3 : -v0 -v1 +v2 SEP -v0 F -v1 F +v2 U SEP UNIT
[/PROP] v2 T OK v2 T SOLVED [EOS]
\end{verbatim}
\end{footnotesize}

\paragraph{SAT, trace ending in a conflict ($n=3$, $6$ clauses).}
\begin{footnotesize}
\begin{verbatim}
# Problem prefix P
[BOS] [CLAUSES] C0 : +v0 +v1 +v2 SEP C1 : +v0 +v1 -v2 SEP C2 : +v0 -v1 +v2
SEP C3 : +v0 -v1 -v2 SEP C4 : -v0 +v1 +v2 SEP C5 : -v0 +v1 -v2 SEP [SEARCH]

# Decision block B1
STATE v0 U v1 U v2 U SEP [PROP] C0 : +v0 +v1 +v2 SEP +v0 U +v1 U +v2 U SEP
SAT_OK [/PROP] v2 F OK v2 F

# Decision block B2
STATE v0 U v1 U SEP [PROP] C0 : +v0 +v1 +v2 SEP +v0 U +v1 U +v2 F SEP
SAT_OK [/PROP] v0 F OK v0 F

# Decision block B3
STATE v0 U v1 U SEP [PROP] C0 : +v0 +v1 +v2 SEP +v0 F +v1 U +v2 F SEP UNIT
[/PROP] CONFLICT C5 BJ L0 [EOS]
\end{verbatim}
\end{footnotesize}

\paragraph{Graph coloring, satisfying trace ($n=4$, $4$-cycle).}
\begin{footnotesize}
\begin{verbatim}
# Problem prefix P
[BOS] [GRAPH] N0 : N1 N3 SEP N1 : N0 N2 SEP N2 : N1 N3 SEP N3 : N0 N2 SEP
[SEARCH]

# Decision block B1
STATE DS4 N0 DS4 N1 DS4 N2 DS4 N3 SEP N0 M1111 C1 OK

# Decision block B2
STATE DS3 N1 DS4 N2 DS3 N3 SEP N1 M0111 C2 OK SOLVED [EOS]
\end{verbatim}
\end{footnotesize}

%% file: ext_precision_barrier.tex
\section{The False-Prune Threshold} \label{sec:precision-barrier}

This appendix derives the false-prune threshold that any learned dead-end verifier must clear in backtracking search.
The threshold is the precision target $(1-\alpha_v)^M$ that the operating-point analysis in \S\ref{sec:verification-experiment} compares SSA against, where $\alpha_v$ is the conditional false-prune rate on viable queried states and $M$ is the number of branch points on a solution path.
All experiments use the primary 3-SAT setting (\S\ref{sec:slot-memory-method}).

\subsection{FP/FN Asymmetry}
\label{sec:fp-fn-asymmetry}

Holding the branching policy fixed and replacing reactive-only backtracking with a perfect dead-end oracle closes essentially the entire remaining solve-rate gap, and the gap is attributable to eliminating false-prune decisions.
To separate false-positive (FP) and false-negative (FN) damage, we replace the neural verifier with a PySAT oracle and inject controlled noise: with probability $p_{\text{fp}}$ the oracle flips a viable state to dead, and with probability $p_{\text{fn}}$ flips a dead-end state to viable.

\input{tab_corruption_curve}

\begin{figure}[t]
\centering
\includegraphics[width=0.65\textwidth]{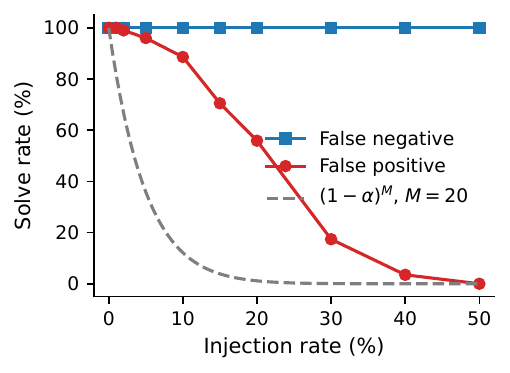}
\caption{Solve rate under synthetic corruption of oracle dead-end decisions. False positives degrade solve rate exponentially while false negatives have no effect, and the dashed curve is the per-path survival prediction $(1-\alpha_v)^M$ with $M{=}20$ branch points.}
\label{fig:fp-fn-asymmetry}
\end{figure}

FP corruption degrades solve rate rapidly while FN corruption has no effect: a false positive permanently removes a viable search branch, while a false negative only delays dead-end detection by additional search steps because the solver falls back to reactive backtracking.
The conditional false-prune rate $\alpha_v$ governs the corruption curve more cleanly than standard precision (whose value the viable-class prevalence $\rho$ and missed-conflict rate $\beta$ confound); plotting against $\alpha_v$ produces a monotonic transition across SAT $n{=}50$, SAT $n{=}100$, and GC $n{=}30$ (Figure~\ref{fig:alpha-v-cliff}).
A multi-path model $1 - (1 - (1-\alpha_v)^{M_\text{eff}})^{R_\text{eff}}$ fits the corruption curve well; the per-path survival model $(1-\alpha_v)^{M}$ from Proposition~\ref{prop:fp-compounding} is a strict lower bound.
The threshold is robust across structured corruption modes (depth-dependent, confidence-dependent, clustered), training-seed variability, and non-planted random SAT at the phase transition.

\begin{figure}[t]
    \centering
    \includegraphics[width=\textwidth]{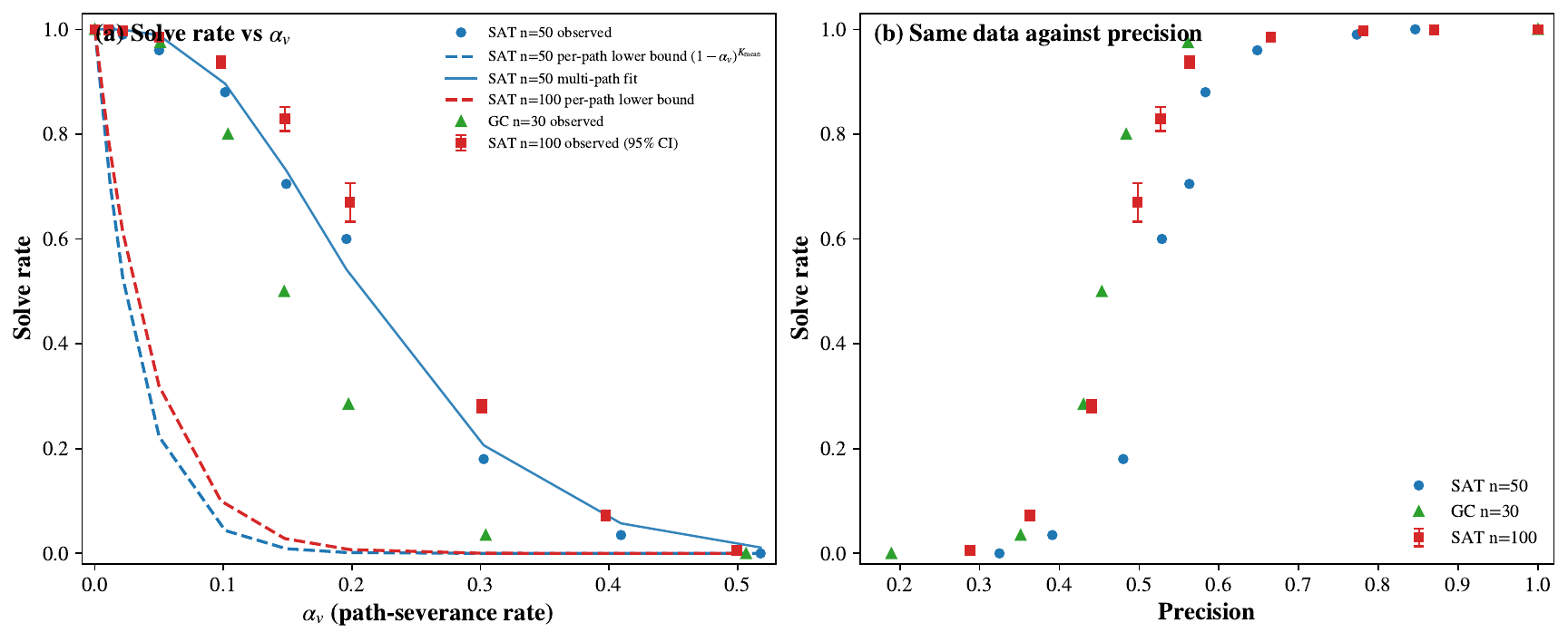}
    \caption{Solve rate versus conditional false-prune rate $\alpha_v$ (left) and standard precision (right), for SAT $n{=}50$, SAT $n{=}100$, and GC $n{=}30$. The solid line is the SAT $n{=}50$ multi-path fit ($R^2 = 0.996$); the dashed curves are the per-path lower bounds $(1-\alpha_v)^{M}$ from Proposition~\ref{prop:fp-compounding}, with $M$ the mean number of branch points per path.}
    \label{fig:alpha-v-cliff}
\end{figure}

\subsection{Formal Analysis}
\label{sec:formal-cliff}

We now provide a formal model that explains the FP/FN asymmetry, predicts the threshold location, and characterizes its dependence on problem size.
Let $\Gamma$ be a finite rooted search tree, $u \in \Gamma$ a partial state, and call $u$ \emph{viable} if it has at least one satisfying descendant and \emph{dead} otherwise.
A traversal policy $\pi$ is \emph{exhaustive} if it eventually explores every child of every visited node.
The solver queries a verifier $V$ at visited nodes; a \emph{false positive} is $\mathrm{back}$ on viable, a \emph{false negative} is $\mathrm{cont}$ on dead.
Within this appendix $\alpha_j$, $\alpha_c$, and $\bar{\alpha}_M$ denote per-query, critical, and mean false-prune rates respectively (instances of the conditional false-prune rate $\alpha_v$ from \S\ref{sec:verification-bottleneck}).

\begin{proposition}[False-Positive Compounding]
\label{prop:fp-compounding}
Consider a run with exhaustive policy $\pi$.
Suppose that, absent any verifier FP, the run reaches a satisfying leaf along a solution path $\mathcal{P} = (u_0, u_1, \ldots, u_\ell)$. The calligraphic $\mathcal{P}$ avoids a clash with the problem prefix $P$ of \S\ref{sec:ssa}.
Let $u_{t_1}, \ldots, u_{t_M}$ be the viable nodes on $\mathcal{P}$ where the solver queries the verifier.
For $j = 1, \ldots, M$, let
\[
\alpha_j := \Pr(\text{FP at } u_{t_j} \mid \text{no FP at } u_{t_1}, \ldots, u_{t_{j-1}}).
\]
Then the probability that the verifier never severs $\mathcal{P}$ is
\begin{equation}
\label{eq:fp-compounding}
\Pr(\mathcal{P} \text{ survives}) = \prod_{j=1}^{M} (1 - \alpha_j).
\end{equation}
Under homogeneous FP rate $\alpha_j \equiv \alpha_v$,
\begin{equation}
\label{eq:fp-homogeneous}
\Pr(\mathcal{P} \text{ survives}) = (1 - \alpha_v)^M = \exp(-\alpha_v M + O(\alpha_v^2 M)).
\end{equation}
Under heterogeneous rates with mean $\bar{\alpha}_M = M^{-1} \sum_{j=1}^{M} \alpha_j$,
\begin{equation}
\label{eq:fp-amgm}
\Pr(\mathcal{P} \text{ survives}) = \prod_{j=1}^{M} (1 - \alpha_j) \le (1 - \bar{\alpha}_M)^M.
\end{equation}
\end{proposition}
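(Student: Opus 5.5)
The plan is to identify the survival event with an intersection of ``no FP'' events, factor its probability by the chain rule, and then derive the homogeneous and heterogeneous corollaries by elementary analysis.

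\emph{Step 1 (reduce to FP events).} Let $E_j$ be the event ``no FP at $u_{t_j}$''. I will argue that $\{\mathcal{P} \text{ survives}\} = \bigcap_{j=1}^M E_j$. The solution path can only be severed by a $\mathrm{back}$ decision at a node of $\mathcal{P}$. By hypothesis every such queried node $u_{t_j}$ is viable, so any $\mathrm{back}$ there is by definition a false positive.

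The converse direction uses the hypothesis of the proposition. Absent any FP, the run does reach the satisfying leaf along $\mathcal{P}$. False negatives cannot sever $\mathcal{P}$, since they only delay detection, and the exhaustive policy $\pi$ guarantees that the subtree containing $\mathcal{P}$ is eventually explored. Hence if no FP occurs on $\mathcal{P}$, the path survives.

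One subtlety is that whether $u_{t_{j}}$ is reached at all depends on earlier decisions. I will handle this by the convention implicit in the definition of $\alpha_j$: on $E_1\cap\dots\cap E_{j-1}$ the node $u_{t_j}$ is reached and queried.

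\emph{Step 2 (chain rule).} Write
\[
\Pr\Bigl(\bigcap_{j} E_j\Bigr)=\prod_{j=1}^M \Pr\bigl(E_j \mid E_1\cap\dots\cap E_{j-1}\bigr)=\prod_{j=1}^M(1-\alpha_j).
\]
This is exactly Eq.~\eqref{eq:fp-compounding}. No independence assumption is needed, because the $\alpha_j$ are defined conditionally.

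\emph{Step 3 (homogeneous case).} With $\alpha_j\equiv\alpha_v$ the product becomes $(1-\alpha_v)^M$. For the asymptotic form, write $(1-\alpha_v)^M=\exp\bigl(M\log(1-\alpha_v)\bigr)$. Expand $\log(1-x)=-x+O(x^2)$ for $x$ bounded away from $1$. This gives $M\log(1-\alpha_v)=-\alpha_v M+O(\alpha_v^2M)$.

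\emph{Step 4 (heterogeneous case).} Apply AM--GM to the nonnegative numbers $1-\alpha_j\in[0,1]$:
\[
\Bigl(\prod_j(1-\alpha_j)\Bigr)^{1/M}\le \frac1M\sum_j(1-\alpha_j)=1-\bar\alpha_M.
\]
Raising both sides to the $M$-th power yields Eq.~\eqref{eq:fp-amgm}. Equivalently, this follows from concavity of $\log(1-x)$ via Jensen's inequality.

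The only genuine obstacle is Step 1: making rigorous that survival coincides with the absence of FPs at the queried viable nodes, given exhaustiveness and the fact that later queries are only reached conditionally. I would state that identification as a direct consequence of the proposition's hypothesis rather than attempt a detailed model of the traversal.
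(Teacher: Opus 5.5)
Your proposal is correct and follows the paper's proof. The paper also uses the chain rule over the conditional no-FP events for Eq.~\eqref{eq:fp-compounding}, the expansion $\log(1-\alpha_v) = -\alpha_v + O(\alpha_v^2)$ for the homogeneous case, and AM--GM for the heterogeneous bound; your Step~1, which identifies survival of $\mathcal{P}$ with the intersection of the no-FP events at the queried viable nodes, spells out what the paper leaves implicit.
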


\begin{proof}
By the chain rule, $\Pr(\mathcal{P} \text{ survives}) = \prod_{j=1}^{M} (1 - \alpha_j)$.
The homogeneous case follows from $\log(1-\alpha_v) = -\alpha_v + O(\alpha_v^2)$.
The heterogeneous bound follows from AM--GM.
\end{proof}

\begin{proposition}[False-Negative Bounded Overhead]
\label{prop:fn-benignity}
Let $V$ be any verifier with no false positives but arbitrary false negatives, and let $\pi$ be exhaustive.
If the budget $B$ is unlimited, $V$ is complete and every viable solution path remains present.
Let $F$ be the set of minimal dead nodes where $V^\star$ would backtrack but $V$ continues, and let $\Gamma_u$ be the dead subtree rooted at $u$.
The additional backtracks relative to an oracle verifier $V^\star$ are bounded by $\sum_{u \in F} (|\Gamma_u| - 1)$.
\end{proposition}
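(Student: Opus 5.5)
The plan is to couple the run under $V$ with the run under the oracle $V^\star$ on the same tree $\Gamma$, with the same exhaustive policy $\pi$ and the same deterministic tie-breaking. Then show that the two runs coincide except for excursions into the dead subtrees $\Gamma_u$, $u \in F$. Completeness comes first and is the easy part. Since $V$ has no false positives, it never returns $\mathrm{back}$ at a viable node, so no viable child is ever pruned. Under an exhaustive $\pi$ with unlimited budget $B$, every child of every visited node is eventually explored. By induction along any solution path $\mathcal{P}=(u_0,\ldots,u_\ell)$, each $u_i$ is visited and not pruned, so $u_\ell$ is reached. This is the degenerate case $\alpha_j\equiv 0$ of Proposition~\ref{prop:fp-compounding}, giving $\Pr(\mathcal{P}\text{ survives})=1$.

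For the overhead bound, I will argue that the two runs agree on every node outside $\bigcup_{u\in F}\Gamma_u\setminus\{u\}$. At any node not in a dead subtree rooted at an element of $F$, the two verifiers issue the same verdict on viable nodes (both continue), and $V^\star$ backtracks exactly at dead nodes. Call a dead node \emph{first-encountered} if it has no dead proper ancestor. At such a node $w$, either $V$ also backtracks, in which case the runs agree, or $V$ continues, in which case $w\in F$ by minimality. In the latter case $V$ enters $\Gamma_w$, a subtree containing no satisfying leaf. By exhaustiveness and finiteness, $V$ eventually exhausts $\Gamma_w$ (or prunes parts of it) and returns to the parent of $w$. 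The run then resumes exactly where $V^\star$ would resume after its single backtrack at $w$. Since $\Gamma_w$ contains no solution, the excursion cannot end the run early and cannot change which nodes are visited afterwards. This gives a synchronization argument: the visit sequence of $V$ is the visit sequence of $V^\star$ with an excursion inserted at each $u\in F$.

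The counting step follows from this. Within the excursion into $\Gamma_u$, each node other than $u$ is visited at most once, since tree search does not revisit nodes. Each such node contributes at most one backtrack, either a verifier backtrack or the infrastructure's return from an exhausted node. The backtrack at $u$ itself is already charged to $V^\star$. So the excess is at most $|\Gamma_u|-1$ per $u\in F$. The subtrees are disjoint by minimality of $F$, so summing gives $\sum_{u\in F}(|\Gamma_u|-1)$.

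The main obstacle is making the synchronization claim rigorous. Specifically, I must show that $\pi$'s choices after returning from an excursion are the same as $V^\star$'s choices, which requires $\pi$ to depend only on the current state and tried-alternative set, not on the excursion. I will state this as an explicit assumption: $\pi$ is a deterministic function of the search state, consistent with Definition~\ref{def:problem-setting}. I will then prove synchronization by induction on the number of completed excursions.
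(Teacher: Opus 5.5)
Your proposal is correct and follows the paper's own route. The paper's proof is a three-line sketch: no false positives preserves every viable path, exhaustiveness gives termination, and the subtrees $\Gamma_u$, $u\in F$, are disjoint by minimality with each contributing at most $|\Gamma_u|-1$ extra backtracks. Your coupling and synchronization argument, together with the explicit assumption that $\pi$ is a deterministic function of the current search state, fill in details the paper leaves implicit; that assumption is needed for the comparison ``relative to $V^\star$'' to go through.
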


\begin{proof}
Since $V$ has no false positives, every viable path remains present.
Exhaustiveness ensures eventual termination.
The subtrees $\Gamma_u$ are pairwise disjoint by minimality, and each contributes at most $|\Gamma_u|-1$ additional backtracks.
\end{proof}

FP damage is multiplicative, since survival factors $(1-\alpha_j) = (1-\alpha_v)$ in the homogeneous case accumulate exponentially.
FN damage is additive, bounded by the total size of dead subtrees.
This contrast is the formal source of the sharp false-prune threshold.

\begin{proposition}[Critical Threshold and Transition Width]
\label{prop:cliff-threshold}
Under the homogeneous model with $m$ queries on viable nodes and FP rate $\alpha_v$, the path-survival probability is $s_m(\alpha_v) = (1-\alpha_v)^m$.
For target level $q$, the critical rate is $\alpha_c(q; m) = 1 - q^{1/m} = \log(1/q)/m + O(m^{-2})$, with transition width $\alpha_c(q_2; m) - \alpha_c(q_1; m) = \log(q_1/q_2)/m + O(m^{-2})$ and local slope $|s_m'(\alpha_c)| = \Theta(m)$.
The tolerable rate shrinks as $1/m$ and the slope grows as $m$, so the threshold sharpens with more queries.
\end{proposition}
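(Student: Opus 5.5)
The statement to prove is Proposition~\ref{prop:cliff-threshold}. The plan is to treat $s_m(\alpha)=(1-\alpha)^m$ as an explicit function and read off all three claims by elementary calculus. The starting point is the homogeneous case of Proposition~\ref{prop:fp-compounding}, which already gives $s_m(\alpha_v)=(1-\alpha_v)^m$ with $m$ the number of viable queries.

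\emph{Critical rate.} I will solve $s_m(\alpha_c)=q$ for fixed $q\in(0,1)$. Since $s_m$ is strictly decreasing on $[0,1]$, the solution is unique and equals
\[
\alpha_c(q;m)=1-q^{1/m}=1-\exp\!\bigl(-\log(1/q)/m\bigr).
\]
Expanding $1-e^{-x}=x-x^2/2+O(x^3)$ with $x=\log(1/q)/m$ then gives $\alpha_c=\log(1/q)/m+O(m^{-2})$. The implied constant depends on $q$, but it is uniform for $q$ in a compact subset of $(0,1)$; I will state that $q$ is held fixed.

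\emph{Transition width.} I will subtract the two expansions. For $q_1>q_2$,
\[
\alpha_c(q_2;m)-\alpha_c(q_1;m)=\frac{\log(1/q_2)-\log(1/q_1)}{m}+O(m^{-2})=\frac{\log(q_1/q_2)}{m}+O(m^{-2}).
\]
The error terms combine into a single $O(m^{-2})$ because both $q_1$ and $q_2$ are fixed.

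\emph{Slope.} Differentiating gives $s_m'(\alpha)=-m(1-\alpha)^{m-1}$. At $\alpha=\alpha_c$ this equals
\[
|s_m'(\alpha_c)|=m\,\frac{q}{1-\alpha_c}=m\,q^{1-1/m}.
\]
Because $q^{1-1/m}\to q$ and lies between $q$ and $1$, this quantity is $\Theta(m)$. The concluding sentence of the proposition then follows: the tolerable rate scales as $1/m$, while the slope scales as $m$.

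The only real subtlety is being honest about uniformity in $q$, since the constants blow up as $q\to 0$ or $q\to 1$. I would therefore state the asymptotics for fixed $q$ (or fixed $q_1,q_2$) and note this explicitly. Everything else is routine Taylor expansion.
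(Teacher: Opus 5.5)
Your proposal is correct and follows the paper's proof. The paper likewise solves $(1-\alpha_v)^m = q$ to get $\alpha_c = 1-q^{1/m}$, then says only that ``Taylor expansion and differentiation give the remaining claims''; your expansion of $1-e^{-x}$, the explicit slope $|s_m'(\alpha_c)| = m\,q^{1-1/m}$, and the note that the constants hold for fixed $q$ fill in the details it leaves implicit.
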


\begin{proof}
Solving $(1-\alpha_v)^m = q$ gives $\alpha_c = 1 - q^{1/m}$, and Taylor expansion and differentiation give the remaining claims.
\end{proof}

As instance size $n$ grows with $m_n \to \infty$, the critical rate shrinks as $\alpha_c \sim \log(1/q)/m_n$, explaining why the threshold sharpens across the graph-coloring and SAT settings of the main body and persists at $n{=}100$ (\S\ref{sec:n100-scale}).

\subsection{Learned Verifier and $n{=}100$ Scaling}
\label{sec:threshold-operating}
\label{sec:n100-scale}

Sweeping the look-ahead threshold $\tau$ on a learned MLP verifier produces a sharp, asymmetric operating curve: conservative thresholds degrade gradually, aggressive thresholds drop solve rate to near zero, and a PySAT oracle achieves perfect solve rate (Table~\ref{tab:oracle_upperbound}).
Off-policy training fails to reach the required precision even after scaling the training data by more than an order of magnitude because the training distribution contains mostly easy shallow dead-ends while deployment encounters hard dead-ends near critical decision points; on-policy hard-negative mining addresses this distribution shift but drifts across rounds and drops to near-zero solve rate within a small number of rounds.
The FP/FN asymmetry and oracle upper bound persist at $n{=}100$, with the threshold tightening as $\alpha_c \propto 1/m$ (Proposition~\ref{prop:cliff-threshold}).

\input{tab_oracle_upperbound}

%% file: tab_corruption_curve.tex
\begin{table}[t]
\centering
\caption{Synthetic corruption of oracle look-ahead decisions; some rows reported single-seed where the multi-seed run is unavailable.}
\label{tab:corruption_curve}
\scriptsize
\setlength{\tabcolsep}{4pt}
\begin{tabular}{ccccccc}
  \toprule
  \textbf{FP Rate} & \textbf{Solve (FP)} & \textbf{$\alpha_v$} & \textbf{Precision} & \textbf{FN Rate} & \textbf{Solve (FN)} & \textbf{Recall} \\
  \midrule
  \textbf{0.0\%} & \textbf{100.0\%} & \textbf{0.000} & \textbf{1.000} & \textbf{0.0\%} & \textbf{100.0\%} & \textbf{1.000} \\
  1.0\% & 100.0\% & 0.012 & 0.846 & 1.0\% & 100.0\% & 0.992 \\
  2.0\% & 99.0\% & 0.022 & 0.773 & 2.0\% & 100.0\% & 0.981 \\
  5.0\% & 96.0\% & 0.050 & 0.648 & 5.0\% & 100.0\% & 0.949 \\
  10.0\% & $88.6 \pm 1.3\%$ & 0.101 & 0.583 & 10.0\% & 100.0\% & 0.910 \\
  15.0\% & 70.5\% & 0.149 & 0.563 & 15.0\% & 100.0\% & 0.868 \\
  20.0\% & $55.9 \pm 2.8\%$ & 0.196 & 0.529 & 20.0\% & 100.0\% & 0.818 \\
  30.0\% & $17.4 \pm 2.6\%$ & 0.303 & 0.480 & 30.0\% & 100.0\% & 0.711 \\
  40.0\% & 3.5\% & 0.409 & 0.391 & 40.0\% & 100.0\% & 0.589 \\
  50.0\% & 0.0\% & 0.518 & 0.325 & 50.0\% & 100.0\% & 0.480 \\
  \bottomrule
\end{tabular}
\end{table}

%% file: tab_oracle_upperbound.tex
\begin{table}[t]
\centering
\caption{Oracle upper bound.}
\label{tab:oracle_upperbound}
\footnotesize
\begin{tabular}{lccc}
  \toprule
  \textbf{Configuration} & \textbf{Solve Rate} & \textbf{BTs/Solve} & \textbf{Proactive BTs} \\
  \midrule
  Reactive only & $50.8 \pm 2.0\%$ & 518.3 & 0 \\
  \textbf{Oracle look-ahead} & \textbf{$100.0 \pm 0.0\%$} & \textbf{1.9} & \textbf{358} \\
  \bottomrule
\end{tabular}
\end{table}

%% file: tab_ssa_bw.tex
\begin{tabular}{ll ccc}
\toprule
\textbf{Mode} & \textbf{Budget} & \textbf{Solve Rate} & \textbf{Repeat Rate} & \textbf{Mean BT} \\
\midrule
SSA & 2048 & 0.38 & 0.04 & 15.1 \\
SSA & 4096 & 0.53 & 0.04 & 29.7 \\
\midrule
Causal & 2048 & 0.12 & 0.33 & 0.0 \\
Causal & 4096 & 0.12 & 0.33 & 0.0 \\
\bottomrule
\end{tabular}

%% file: tab_bw_7blocks.tex
\begin{tabular}{l cc cc}
\toprule
& \multicolumn{2}{c}{\textbf{5 blocks}} & \multicolumn{2}{c}{\textbf{7 blocks}} \\
\cmidrule(lr){2-3} \cmidrule(lr){4-5}
\textbf{Method} & \textbf{Solve} & \textbf{$p$} & \textbf{Solve} & \textbf{$p$} \\
\midrule
Causal & $8.7 \pm 2.4$ & & $5.2 \pm 1.9$ & \\
\textbf{SSA (ours)} & $\mathbf{50.7 \pm 2.9}$ & $2.1{\times}10^{-5}$ & $\mathbf{16.1 \pm 3.4}$ & $1.3{\times}10^{-4}$ \\
\bottomrule
\end{tabular}

%% file: tab_ssa_budget_sweep.tex
\begin{tabular}{ll ccc}
\toprule
\textbf{Mode} & \textbf{Budget} & \textbf{Solve Rate} & \textbf{Repeat Rate} & \textbf{Mean BT} \\
\midrule
SSA & 2048 & 0.695 & 0.00 & 10.8 \\
SSA & 4096 & 0.90 & 0.00 & 16.3 \\
SSA & 8192 & 0.97 & 0.00 & 19.2 \\
\midrule
Causal & 2048 & 0.685 & 0.08 & 10.4 \\
Causal & 4096 & 0.685 & 0.08 & 10.4 \\
Causal & 8192 & 0.685 & 0.08 & 10.4 \\
\bottomrule
\end{tabular}

%% file: tab_gc_scale.tex
\begin{tabular}{l cc cc}
\toprule
& \multicolumn{2}{c}{\textbf{$n{=}30$}} & \multicolumn{2}{c}{\textbf{$n{=}50$} (3 seeds)} \\
\cmidrule(lr){2-3} \cmidrule(lr){4-5}
\textbf{Method} & \textbf{Solve (\%)} & \textbf{Repeat} & \textbf{Solve (\%)} & \textbf{Repeat} \\
\midrule
Causal & $69.9 \pm 1.8$ & 0.08 & $0.2 \pm 0.3$ & 0.00 \\
\textbf{SSA (ours)} & $\mathbf{98.3 \pm 0.6}$ & \textbf{0.00} & $\mathbf{96.2 \pm 1.5}$ & \textbf{0.00} \\
\midrule
$\Delta$ & $+28.4$ & & $+96.0$ & \\
$p$ (paired $t$) & $1.4{\times}10^{-6}$ & & $1.2{\times}10^{-4}$ & \\
\bottomrule
\end{tabular}

%% file: tab_ssa_ablation.tex
\begin{tabular}{lcc ccc}
\toprule
\textbf{Mode} & \textbf{Problem} & \textbf{Traj.} & \textbf{Solve} & \textbf{Repeat} & \textbf{Mean BT} \\
\midrule
Selective SSA & Exact & Blocked & 0.695 & 0.00 & 10.8 \\
Blanket SSA & Blocked & Blocked & 0.45 & 0.07 & 15.2 \\
Full causal & Exact & Exact & 0.16 & 0.02 & 22.3 \\
Reverse selective & Blocked & Exact & 0.09 & 0.07 & 24.1 \\
Random matched & Random & Random & 0.05 & 0.02 & 25.1 \\
\bottomrule
\end{tabular}

%% file: tab_slot_mask_factorial.tex
% Generated by scripts/tables/gen_tab_slot_mask_factorial.py
\begin{tabular}{l c c c c}
\toprule
 & \multicolumn{2}{c}{\textbf{SAT $n{=}50$}} & \multicolumn{2}{c}{\textbf{GC $n{=}30$}} \\
\cmidrule(lr){2-3} \cmidrule(lr){4-5}
\textbf{Mask} & \textbf{$32$ slots} & \textbf{$0$ slots} & \textbf{$32$ slots} & \textbf{$0$ slots} \\
\midrule
SSA & $28.9 \pm 1.9$ & $\mathbf{34.1 \pm 2.0}$ & $51.3 \pm 3.1$ & $\mathbf{57.5 \pm 7.7}$ \\
Causal & $16.5 \pm 11.2$ & $8.2 \pm 10.6$ & $34.3 \pm 4.7$ & $34.2 \pm 4.8$ \\
\midrule
Slot effect (SSA) & \multicolumn{2}{c}{$+5.2$ pp} & \multicolumn{2}{c}{$+6.2$ pp} \\
Slot effect (Causal) & \multicolumn{2}{c}{$-8.3$ pp} & \multicolumn{2}{c}{$-0.1$ pp} \\
Mask effect & $+12.4$ pp & $+25.9$ pp & $+17.0$ pp & $+23.3$ pp \\
\bottomrule
\end{tabular}

%% file: tab_position_ablation.tex
\begin{tabular}{ll cc}
\toprule
\textbf{Mask} & \textbf{Positions} & \textbf{Solve Rate} & \textbf{Repeat Rate} \\
\midrule
Causal & Standard & 0.160 & 0.02 \\
Causal & Block-relative & 0.055 & 0.09 \\
SSA & Standard & 0.725 & 0.00 \\
SSA & Block-relative & 0.695 & 0.00 \\
\bottomrule
\end{tabular}

%% file: tab_crossmask.tex
\begin{tabular}{llccc}
\toprule
\textbf{Domain} & \textbf{Condition} & \textbf{Train Mask} & \textbf{Eval Mask} & \textbf{Solve \%} \\
\midrule
\multirow{4}{*}{SAT} & Matched & Blanket & Blanket & 61.5 \\
& Matched & Selective & Selective & 59.0 \\
& Cross & Blanket & Selective & 27.0 \\
& Cross & Selective & Blanket & 10.5 \\
\midrule
\multirow{4}{*}{GC} & Matched & Blanket & Blanket & 47.0 \\
& Matched & Selective & Selective & 47.5 \\
& Cross & Blanket & Selective & 4.5 \\
& Cross & Selective & Blanket & 4.0 \\
\bottomrule
\end{tabular}

%% file: tab_factorial.tex
\begin{tabular}{llccc}
\toprule
\textbf{Domain} & \textbf{Trace} & \textbf{Mask} & \textbf{Solve \%} & \textbf{$\Delta$ (Blanket$-$Selective)} \\
\midrule
\multirow{4}{*}{GC} & \multirow{2}{*}{Enriched} & Blanket & $49.5 \pm 4.6$ & \multirow{2}{*}{$+0.7$} \\
& & Selective & $48.8 \pm 4.3$ & \\
& \multirow{2}{*}{Stripped} & Blanket & $53.4 \pm 2.4$ & \multirow{2}{*}{$+0.4$} \\
& & Selective & $53.0 \pm 1.2$ & \\
\midrule
\multirow{4}{*}{SAT} & \multirow{2}{*}{Enriched} & Blanket & $64.8 \pm 2.0$ & \multirow{2}{*}{$+2.9$} \\
& & Selective & $61.9 \pm 3.1$ & \\
& \multirow{2}{*}{Stripped} & Blanket & $13.1 \pm 2.5$ & \multirow{2}{*}{$-0.8$} \\
& & Selective & $13.9 \pm 1.5$ & \\
\midrule
\multicolumn{4}{l}{\textit{Interaction ($\Delta_\text{enriched} - \Delta_\text{stripped}$)}} \\
\quad GC & & & $+0.3$ ($p = 0.925$) \\
\quad SAT & & & $+3.7$ ($p = 0.139$) \\
\bottomrule
\end{tabular}

%% file: tab_bootstrap_cis.tex
\begin{tabular}{llcc}
\toprule
\textbf{Domain} & \textbf{Comparison} & \textbf{Mean $\Delta$} & \textbf{95\% Bootstrap CI} \\
\midrule
GC & SSA $-$ Causal & $+17.0$ & $[12.9, 21.6]$ \\
GC & Selective $-$ Blanket & $+3.4$ & $[0.0, 7.5]$ \\
GC & SSA $-$ SWA$-$Prefix & $+10.1$ & $[7.8, 13.2]$ \\
SAT & SSA $-$ Causal & $+10.7$ & $[7.5, 14.1]$ \\
SAT & Selective $-$ Blanket & $-2.9$ & $[-4.4, -1.3]$ \\
SAT & SSA $-$ SWA$-$Prefix & $+17.9$ & $[8.3, 35.0]$ \\
\bottomrule
\end{tabular}

%% file: tab_compute_accounting.tex
\begin{tabular}{lccccc}
\toprule
\textbf{Method} & \textbf{Fwd.} & \textbf{Tok./pass} & \textbf{Total tok.} & \textbf{Rebuilds} & \textbf{Verifier} \\
\midrule
Causal (cumulative) & $D$ & ${\sim}L_{\mathrm{cum}}$ & $D \cdot L_{\mathrm{cum}}$ & $0$ & $0$ \\
SSA (cumulative) & $D$ & ${\sim}L_{\mathrm{cum}}$ & $D \cdot L_{\mathrm{cum}}$ & $0$ & $0$ \\
Context-clearing & $D$ & ${\sim}L_{\mathrm{state}}$ & $D \cdot L_{\mathrm{state}}$ & $D$ & $0$ \\
MLP / Linear & $D$ & $1011$ & $D \cdot 1011$ & $D$ & $0$ \\
SSA + CDCL($c$) & $D$ & ${\sim}L_{\mathrm{cum}}$ & $D \cdot L_{\mathrm{cum}}$ & $0$ & $2D$ \\
\bottomrule
\end{tabular}

%% file: tab_random_sat.tex
\begin{tabular}{lccc}
\toprule
\textbf{Mask} & \textbf{Planted ($\alpha=4.0$)} & \textbf{Random ($\alpha=4.26$)} & \textbf{$\Delta$} \\
\midrule
SSA & $28.9 \pm 1.9$ & $\mathbf{25.6 \pm 1.6}$ & $-3.3$ \\
SWA-prefix & $28.2 \pm 1.8$ & $21.4 \pm 2.8$ & $-6.8$ \\
Causal (full) & $16.5 \pm 11.2$ & $16.0 \pm 7.2$ & $-0.5$ \\
\bottomrule
\end{tabular}

%% file: tab_history_irrelevance.tex
\begin{tabular}{lc}
\toprule
\textbf{Measurement} & \textbf{Mean $\pm$ std} \\
\midrule
\multicolumn{2}{l}{\textit{History-irrelevance gap (logistic regression, enriched features)}} \\
Accuracy on $T$ only & $81.6 \pm 0.1\%$ \\
Accuracy on $(T, H)$ & $83.0 \pm 0.4\%$ \\
$\Delta$ (history lift) & $+1.4 \pm 0.3$ pp \\
\midrule
\multicolumn{2}{l}{\textit{Linear probes on frozen representations ($d{=}256$)}} \\
SSA probe $\to$ oracle label & $89.6 \pm 0.4\%$ \\
Causal probe $\to$ oracle label & $92.4 \pm 0.2\%$ \\
\bottomrule
\end{tabular}

%% file: tab_verifier_calibration.tex
\begin{tabular}{lllcccccc}
\toprule
Architecture & Domain & Protocol & $\alpha_v \pm s$ & $\beta \pm s$ & AUROC $\pm s$ & AUPRC $\pm s$ & ECE & Brier \\
\midrule
SSA (selective) & SAT $n=50$ & cumulative & $0.092 \pm 0.011$ & $0.536 \pm 0.237$ & $0.904 \pm 0.020$ & $0.392 \pm 0.098$ & $0.104 \pm 0.020$ & $0.110 \pm 0.019$ \\
SSA (selective) & SAT $n=50$ & state-rebuilt & $0.092 \pm 0.011$ & $0.536 \pm 0.237$ & $0.904 \pm 0.020$ & $0.392 \pm 0.098$ & $0.104 \pm 0.020$ & $0.110 \pm 0.019$ \\
SSA (selective) & GC $n=30$ & cumulative & $0.000 \pm 0.000$ & $0.965 \pm 0.029$ & $1.000 \pm 0.000$ & $1.000 \pm 0.000$ & $0.337 \pm 0.024$ & $0.312 \pm 0.041$ \\
SSA (selective) & GC $n=30$ & state-rebuilt & $0.000 \pm 0.000$ & $0.968 \pm 0.024$ & $1.000 \pm 0.000$ & $1.000 \pm 0.001$ & $0.347 \pm 0.014$ & $0.329 \pm 0.024$ \\
Causal & SAT $n=50$ & cumulative & $0.029 \pm 0.010$ & $0.510 \pm 0.064$ & $0.944 \pm 0.007$ & $0.539 \pm 0.038$ & $0.041 \pm 0.007$ & $0.057 \pm 0.005$ \\
Causal & SAT $n=50$ & state-rebuilt & $0.027 \pm 0.029$ & $0.986 \pm 0.027$ & $0.772 \pm 0.096$ & $0.188 \pm 0.039$ & $0.091 \pm 0.013$ & $0.094 \pm 0.012$ \\
Causal & GC $n=30$ & cumulative & $0.000 \pm 0.000$ & $1.000 \pm 0.000$ & $0.987 \pm 0.011$ & $0.963 \pm 0.028$ & $0.371 \pm 0.001$ & $0.368 \pm 0.003$ \\
Causal & GC $n=30$ & state-rebuilt & $0.047 \pm 0.028$ & $0.984 \pm 0.016$ & $0.880 \pm 0.008$ & $0.650 \pm 0.026$ & $0.376 \pm 0.026$ & $0.377 \pm 0.016$ \\
Current-block-only & SAT $n=50$ & cumulative & $0.122 \pm 0.008$ & $0.421 \pm 0.120$ & $0.807 \pm 0.025$ & $0.228 \pm 0.020$ & $0.180 \pm 0.041$ & $0.125 \pm 0.020$ \\
Current-block-only & SAT $n=50$ & state-rebuilt & $0.122 \pm 0.008$ & $0.421 \pm 0.120$ & $0.807 \pm 0.025$ & $0.228 \pm 0.020$ & $0.180 \pm 0.041$ & $0.125 \pm 0.020$ \\
Block dropout $p=0.5$ & SAT $n=50$ & cumulative & $0.029 \pm 0.006$ & $0.515 \pm 0.114$ & $0.942 \pm 0.015$ & $0.548 \pm 0.075$ & $0.038 \pm 0.007$ & $0.056 \pm 0.008$ \\
Block dropout $p=0.5$ & SAT $n=50$ & state-rebuilt & $0.015 \pm 0.014$ & $0.993 \pm 0.012$ & $0.735 \pm 0.084$ & $0.171 \pm 0.044$ & $0.090 \pm 0.014$ & $0.090 \pm 0.006$ \\
LSTM & SAT $n=50$ & cumulative & $0.050 \pm 0.025$ & $0.288 \pm 0.068$ & $0.938 \pm 0.020$ & $0.604 \pm 0.135$ & $0.048 \pm 0.020$ & $0.056 \pm 0.018$ \\
LSTM & SAT $n=50$ & state-rebuilt & $0.089 \pm 0.014$ & $0.851 \pm 0.196$ & $0.832 \pm 0.059$ & $0.237 \pm 0.077$ & $0.127 \pm 0.014$ & $0.129 \pm 0.016$ \\
Contrastive (causal) & SAT $n=50$ & cumulative & $0.002 \pm 0.000$ & $1.000 \pm 0.000$ & $0.863 \pm 0.005$ & $0.241 \pm 0.001$ & $0.088 \pm 0.000$ & $0.087 \pm 0.000$ \\
Contrastive (causal) & SAT $n=50$ & state-rebuilt & $0.002 \pm 0.000$ & $1.000 \pm 0.000$ & $0.865 \pm 0.001$ & $0.241 \pm 0.001$ & $0.088 \pm 0.000$ & $0.087 \pm 0.000$ \\
Block Dropout P090 & SAT $n=50$ & cumulative & $0.028 \pm 0.016$ & $0.722 \pm 0.145$ & $0.911 \pm 0.021$ & $0.427 \pm 0.090$ & $0.055 \pm 0.015$ & $0.070 \pm 0.011$ \\
Block Dropout P090 & SAT $n=50$ & state-rebuilt & $0.008 \pm 0.006$ & $0.996 \pm 0.008$ & $0.800 \pm 0.019$ & $0.198 \pm 0.023$ & $0.086 \pm 0.012$ & $0.088 \pm 0.004$ \\
Contrastive (SSA) & SAT $n=50$ & cumulative & $0.001 \pm 0.001$ & $1.000 \pm 0.000$ & $0.866 \pm 0.001$ & $0.242 \pm 0.001$ & $0.087 \pm 0.000$ & $0.087 \pm 0.000$ \\
Contrastive (SSA) & SAT $n=50$ & state-rebuilt & $0.001 \pm 0.001$ & $1.000 \pm 0.000$ & $0.866 \pm 0.001$ & $0.242 \pm 0.001$ & $0.087 \pm 0.000$ & $0.087 \pm 0.000$ \\
Null history & SAT $n=50$ & cumulative & $0.053 \pm 0.027$ & $0.830 \pm 0.234$ & $0.887 \pm 0.029$ & $0.301 \pm 0.071$ & $0.080 \pm 0.026$ & $0.087 \pm 0.015$ \\
Null history & SAT $n=50$ & state-rebuilt & $0.024 \pm 0.019$ & $1.000 \pm 0.000$ & $0.854 \pm 0.015$ & $0.243 \pm 0.011$ & $0.100 \pm 0.010$ & $0.094 \pm 0.007$ \\
Sliding window $k=3$ & SAT $n=50$ & cumulative & $0.089 \pm 0.010$ & $0.798 \pm 0.090$ & $0.863 \pm 0.011$ & $0.253 \pm 0.012$ & $0.107 \pm 0.015$ & $0.094 \pm 0.006$ \\
Sliding window $k=3$ & SAT $n=50$ & state-rebuilt & $0.070 \pm 0.014$ & $1.000 \pm 0.000$ & $0.843 \pm 0.007$ & $0.234 \pm 0.004$ & $0.110 \pm 0.007$ & $0.100 \pm 0.003$ \\
\bottomrule
\end{tabular}